\setlist[itemize]{leftmargin=20pt,topsep=-2pt,itemsep=0.7ex,partopsep=5pt,parsep=1pt}
\setlist[enumerate]{leftmargin=20pt,topsep=-2pt,itemsep=0.7ex,partopsep=5pt,parsep=1pt}
\newcommand{\suppmat}{appendix\@\xspace}
\newcommand{\Suppmatsection}{Appendix}
\definecolor{amaranth}{rgb}{0.9, 0.17, 0.31}
\definecolor{awesome}{rgb}{1.0, 0.13, 0.32}
\icmltitlerunning{
    Understanding Contrastive Representation Learning through Alignment and Uniformity on the Hypersphere
}
\begin{document}

\twocolumn[
\icmltitle{
    Understanding Contrastive Representation Learning through\texorpdfstring{\\}{ }Alignment and Uniformity on the Hypersphere 
}



\icmlsetsymbol{equal}{*}

\begin{icmlauthorlist}
\icmlauthor{Tongzhou Wang}{mit}
\icmlauthor{Phillip Isola}{mit}
\end{icmlauthorlist}

\icmlaffiliation{mit}{MIT Computer Science \& Artificial Intelligence Lab (CSAIL)}

\icmlcorrespondingauthor{Tongzhou Wang}{tongzhou@mit.edu}

\icmlkeywords{Machine Learning, ICML}

\vskip 0.3in
]



\printAffiliationsAndNotice{}  

\begin{abstract}

Contrastive representation learning has been outstandingly successful in practice.
In this work, we identify two key properties related to the contrastive loss: (1) \emph{alignment} (closeness) of features from positive pairs, and (2) \emph{uniformity} of the induced distribution of the (normalized) features on the hypersphere.
We prove that, asymptotically, the contrastive loss optimizes these properties, and analyze their positive effects on downstream tasks.
Empirically, we introduce an optimizable metric to quantify each property.
Extensive experiments on standard vision and language datasets confirm the strong agreement between \emph{both} metrics and downstream task performance.
Directly optimizing for these two metrics leads to representations with comparable or better performance at downstream tasks than contrastive learning.

\vspace{-0.75pt}\noindent\begin{tabular}{@{}lr@{}}
Project Page: & \href{https://ssnl.github.io/hypersphere}{\small\texttt{ssnl.github.io/hypersphere}}.\\
Code: & \href{https://github.com/SsnL/align_uniform}{\small\texttt{github.com/SsnL/align\_uniform}}. \\[-0.25pt]
 & \hspace{-2.05em}\href{https://github.com/SsnL/moco_align_uniform}{\small\texttt{github.com/SsnL/moco\_align\_uniform}}.
\end{tabular}\vspace{-9.75pt}

\end{abstract}

\section{Introduction}

A vast number of recent empirical works learn representations with a unit $\ell_2$ norm constraint, effectively restricting the output space to the unit hypersphere \citep{parkhi2015deep,schroff2015facenet,liu2017sphereface,hasnat2017mises,wang2017normface,bojanowski2017unsupervised,mettes2019hyperspherical,hou2019learning,s-vae18,xu2018spherical}, including many unsupervised contrastive representation learning methods \citep{wu2018unsupervised,bachman2019learning,tian2019contrastive,he2019momentum,chen2020simple}.

Intuitively, having the features live on the unit hypersphere leads to several desirable traits.
Fixed-norm vectors are known to improve training stability in modern machine learning where dot products are ubiquitous \citep{xu2018spherical,wang2017normface}. Moreover, if features of a class are sufficiently well clustered, they are linearly separable with the rest of feature space (see Figure~\ref{fig:hypersphere_linsep}), a common criterion used to evaluate representation quality.

\begin{figure}[t]
    \centering\vspace{-4.5pt}
    \begin{subfigure}[b]{\linewidth}
    \centering
    \captionsetup{justification=centering}
    \includegraphics[width=0.88\linewidth, trim=500 180 500 200, clip]{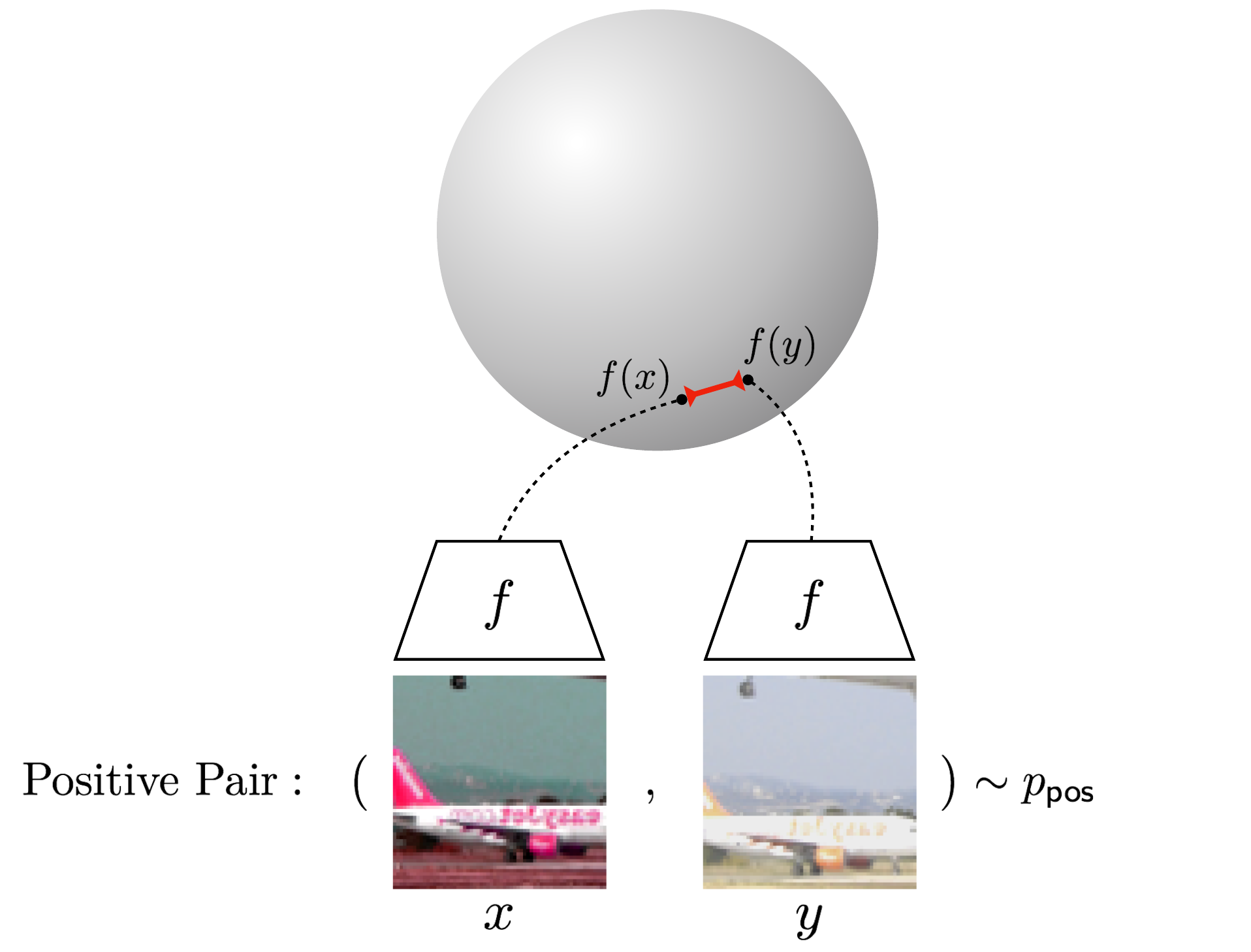}\vspace{-5pt}
    \caption*{\textbf{Alignment: }Similar samples have similar features.\\(Figure inspired by \citet{tian2019contrastive}.)}\vspace{0.5pt}
    \end{subfigure}
    \begin{subfigure}[b]{\linewidth}
    \centering
    \captionsetup{justification=centering}
    \includegraphics[page=3, width=0.88\linewidth, trim=500 225 500 245, clip]{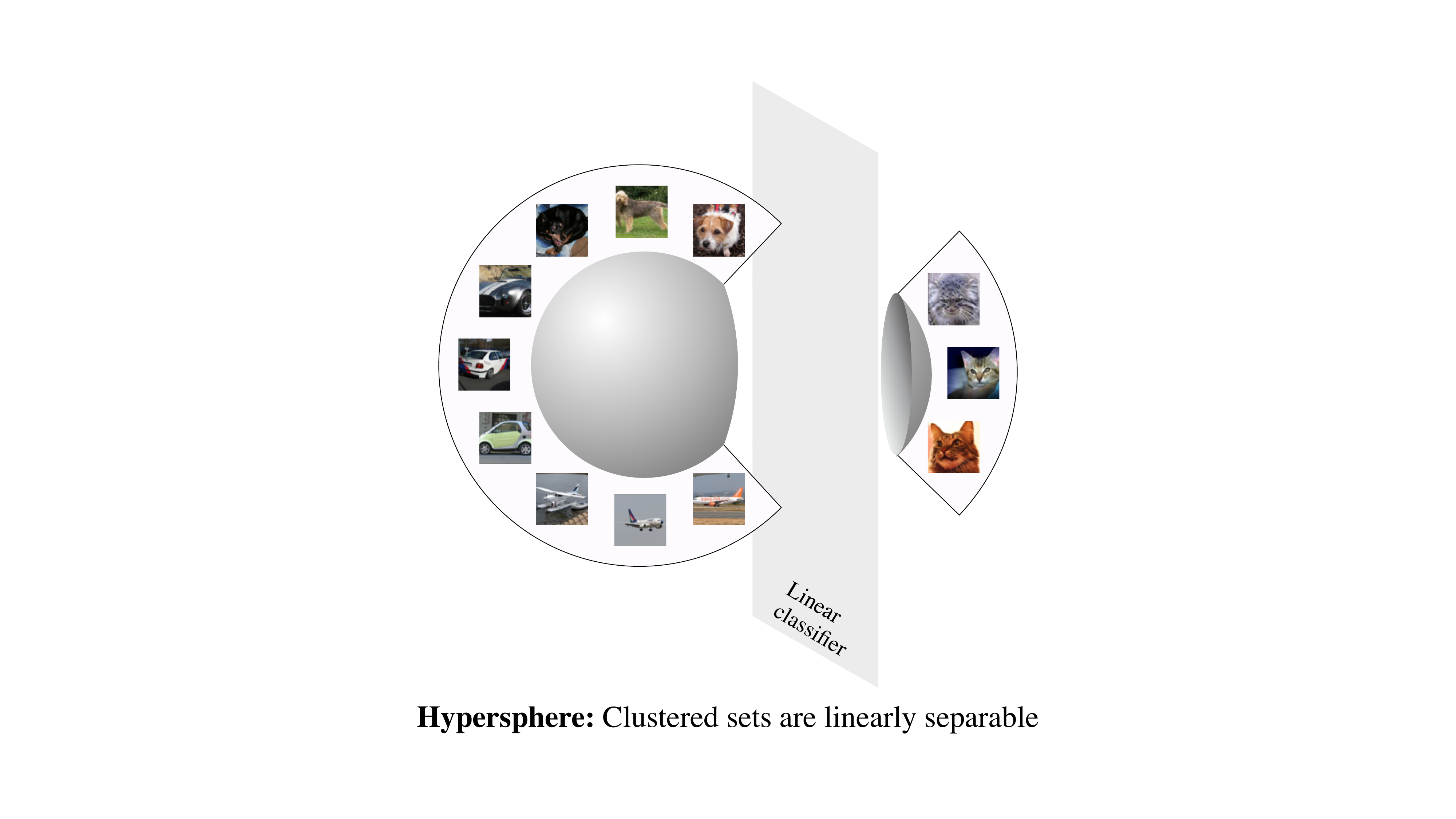}\vspace{-5pt}
    \caption*{\textbf{Uniformity: }Preserve maximal information.}
    \end{subfigure}\vspace{-7pt}
    \caption{Illustration of alignment and uniformity of feature distributions on the output unit hypersphere. \stl \citep{coates2011stl10} images are used for demonstration.} \label{fig:align_unif}\vspace{-6.5pt}
\end{figure}

\begin{figure}
    \centering\vspace{-2pt}
    \includegraphics[page=1, width=0.9\linewidth, trim=450 160 450 85, clip]{figures/intuition.pdf}\vspace{-10pt}
    \caption{\textbf{Hypersphere: }When classes are well-clustered\hide{ on the hypersphere} (forming spherical caps), they are linearly separable. The same does not hold for Euclidean spaces.
    } \label{fig:hypersphere_linsep}\vspace{-5pt}
\end{figure}

While the unit hypersphere is a popular choice of feature space, not all encoders that map onto it are created equal. Recent works argue that representations should additionally be invariant to unnecessary details, and preserve as much information as possible \citep{oord2018representation,tian2019contrastive,hjelm2018learning,bachman2019learning}. Let us call these two properties \emph{alignment} and \emph{uniformity} (see Figure~\ref{fig:align_unif}). \emph{Alignment} favors encoders that assign similar features to similar samples. \emph{Uniformity} prefers a feature distribution that preserves maximal information, \ie, the uniform distribution on the unit hypersphere.

In this work, we analyze the \emph{alignment} and \emph{uniformity} properties. We show that a currently popular form of contrastive representation learning in fact directly optimizes for these two properties in the limit of infinite negative samples. We propose theoretically-motivated metrics for alignment and uniformity, and observe strong agreement between them and downstream task performance. Remarkably, directly optimizing for these two metrics leads to comparable or better performance than contrastive learning.

Our main contributions are:
\begin{itemize}
    \item We propose quantifiable metrics for \emph{alignment} and \emph{uniformity} as two measures of representation quality, with theoretical motivations.
    \item We prove that the contrastive loss optimizes for alignment and uniformity asymptotically.
    \item Empirically, we find strong agreement between \emph{both} metrics and downstream task performance.
    \item Despite being simple in form, our proposed metrics, when directly optimized with no other loss, empirically lead to comparable or better performance at downstream tasks than contrastive learning.
\end{itemize}


\section{Related Work}


\paragraph{Unsupervised Contrastive Representation Learning} has seen remarkable success in learning representations for image and sequential data \citep{logeswaran2018efficient,wu2018unsupervised,oord2018representation,henaff2019data,tian2019contrastive,hjelm2018learning,bachman2019learning,tian2019contrastive,he2019momentum,chen2020simple}. The common motivation behind these work is the InfoMax principle \citep{linsker1988self}, which we here instantiate as maximizing the mutual information (MI) between two views \citep{tian2019contrastive,bachman2019learning,wu2020importance}. However, this interpretation is known to be inconsistent with the actual behavior in practice, \eg, optimizing a tighter bound on MI can lead to worse representations \citep{tschannen2019mutual}. What the contrastive loss exactly does remains largely a mystery. Analysis based on the assumption of latent classes provides nice theoretical insights \citep{saunshi2019theoretical}, but unfortunately has a rather large gap with empirical practices: the result that representation quality suffers with a large number of negatives is inconsistent with empirical observations \citep{wu2018unsupervised,tian2019contrastive,he2019momentum,chen2020simple}. In this paper, we analyze and characterize the behavior of contrastive learning from the perspective of alignment and uniformity properties, and empirically verify our claims with standard representation learning tasks.

\paragraph{Representation learning on the unit hypersphere. } Outside contrastive learning, many other representation learning approaches also normalize their features to be on the unit hypersphere. In variational autoencoders, the hyperspherical latent space has been shown to perform better than the Euclidean space \citep{xu2018spherical,s-vae18}. Directly matching uniformly sampled points on the unit hypersphere is known to provide good representations \citep{bojanowski2017unsupervised}, agreeing with our intuition that uniformity is a desirable property. \citet{mettes2019hyperspherical} optimizes prototype representations on the unit hypersphere for classification. Hyperspherical face embeddings greatly outperform the unnormalized counterparts \citep{parkhi2015deep,liu2017sphereface,wang2017normface,schroff2015facenet}. Its empirical success suggests that the unit hypersphere is indeed a nice feature space. In this work, we formally investigate the interplay between the hypersphere geometry and the popular contrastive representation learning.

\paragraph{Distributing points on the unit hypersphere.}
The problem of uniformly distributing points on the unit hypersphere is a well-studied one. It is often defined as minimizing the total pairwise potential \wrt a certain kernel function \citep{borodachov2019discrete,landkof1972foundations}, \eg, the Thomson problem of finding the minimal electrostatic potential energy configuration of electrons \citep{thomson1904xxiv}, and minimization of the Riesz $s$-potential \citep{gotz2001note,hardin2005minimal,liu2018learning}. The uniformity metric we propose is based on the Gaussian potential, which can be used to represent a very general class of kernels and is closely related to the universally optimal point configurations \citep{borodachov2019discrete,cohn2007universally}. Additionally, the best-packing problem on hyperspheres (often called the Tammes problem) is also well studied \citep{tammes1930origin}.

\section{Preliminaries on Unsupervised Contrastive Representation Learning}

The popular unsupervised contrastive representation learning method (often referred to as \emph{contrastive learning} in this paper) learns representations from unlabeled data. It assumes a way to sample \emph{positive pairs}, representing similar samples that should have similar representations. Empirically, the positive pairs are often obtained by taking two independently randomly augmented versions of the same sample, \eg two crops of the same image \citep{wu2018unsupervised,hjelm2018learning,bachman2019learning,he2019momentum,chen2020simple}.

Let $\distndata(\cdot)$ be the data distribution over $\R^n$ and $\distnpos(\cdot, \cdot)$ the distribution of positive pairs over $\R^n \times \R^n$. Based on empirical practices, we assume the following property.

\begin{assumption}
    Distributions $\distndata$ and $\distnpos$ should satisfy%
    \begin{itemize}
        \item Symmetry: $\forall x, y,~ \distnpos(x, y) = \distnpos(y, x)$.
        \item Matching marginal: $\forall x,~ \int \distnpos(x, y) \diff y = \distndata(x)$.
    \end{itemize}
\end{assumption}

We consider the following specific and widely popular form of contrastive loss for training an encoder $f \colon \R^n \rightarrow \sphere^{m-1}$, mapping data to $\ell_2$ normalized feature vectors of dimension $m$. This loss has been shown effective by many recent representation learning methods \citep{logeswaran2018efficient,wu2018unsupervised,tian2019contrastive,he2019momentum,hjelm2018learning,bachman2019learning,chen2020simple}.
\begin{equation}
\begin{split}
    & \lcontr(f; \tau, M) \trieq \\
    & \quad \expectunder[\substack{
        (x, y) \sim \distnpos \\
        \{x^-_i\}_{i=1}^M \iidsim \distndata
    }]{- \log \frac{e^{f(x)\T f(y) / \tau}}{e^{f(x)\T f(y) / \tau} + \sum_i e^{f(x^-_i)\T f(y) / \tau}}},
\end{split} \label{eq:contrastive_loss}
\end{equation}
where $\tau > 0$ is a scalar temperature hyperparameter, and $M \in \Z_+$ is a fixed number of negative samples.

The term \emph{contrastive loss} has also been generally used to refer to various objectives based on positive and negative samples, \eg, in Siamese networks \citep{chopra2005learning,hadsell2006dimensionality}. In this work, we focus on the specific form in Equation~\eqref{eq:contrastive_loss} that is widely used in modern unsupervised contrastive representation learning literature.

\paragraph{Necessity of normalization.} Without the norm constraint, the $\mathtt{softmax}$ distribution can be made arbitrarily sharp by simply scaling all the features. \citet{wang2017normface} provided an analysis on this effect and argued for the necessity of normalization when using feature vector dot products in a cross entropy loss, as is in Eqn.~\eqref{eq:contrastive_loss}. Experimentally, \citet{chen2020simple} also showed that normalizing outputs leads to superior representations.

\paragraph{The InfoMax principle.} Many empirical works are motivated by the InfoMax principle of maximizing $I(f(x); f(y))$ for $(x, y) \sim \ppos$ \citep{tian2019contrastive,bachman2019learning,wu2020importance}. Usually they interpret $\lcontr$ in Eqn.~\eqref{eq:contrastive_loss} as a lower bound of $I(f(x); f(y))$ \citep{oord2018representation,hjelm2018learning,bachman2019learning,tian2019contrastive}. However, this interpretation is known to have issues in practice, \eg, maximizing a tighter bound often leads to worse downstream task performance \citep{tschannen2019mutual}. Therefore, instead of viewing it as a bound, we investigate the exact behavior of directly optimizing $\lcontr$ in the following sections.

\section{Feature Distribution on the Hypersphere}


\begin{figure*}[t!]
    \centering
    \begin{subfigure}[b]{\linewidth}
        \centering
        \includegraphics[width=\linewidth, trim=5 5 10 0, clip]{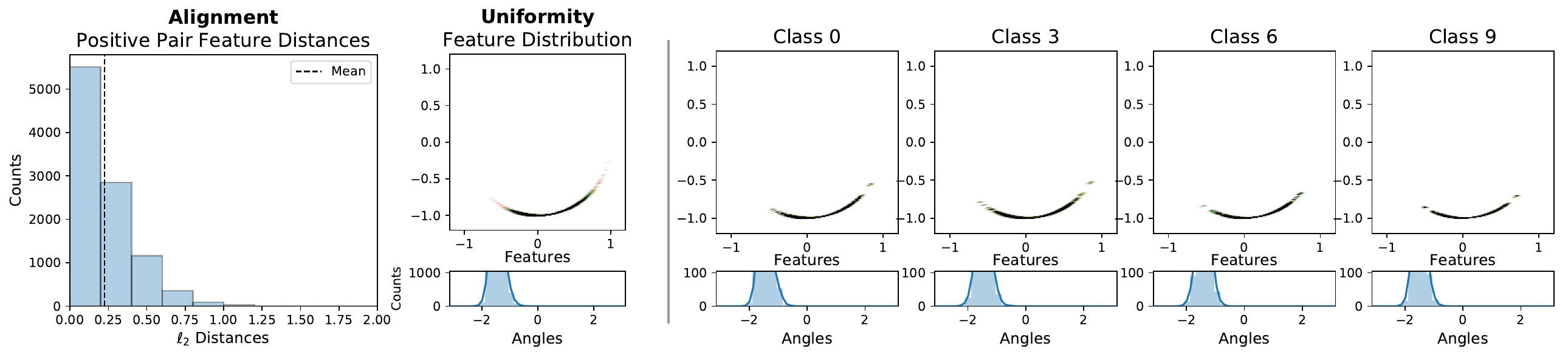}%
        \vspace{-4pt}
        \caption{\textbf{Random Initialization.} Linear classification validation accuracy: $12.71\%$.}
    \end{subfigure}
    \begin{subfigure}[b]{\linewidth}
        \centering
        \vspace{1pt}
        \includegraphics[width=\linewidth, trim=5 5 10 0, clip]{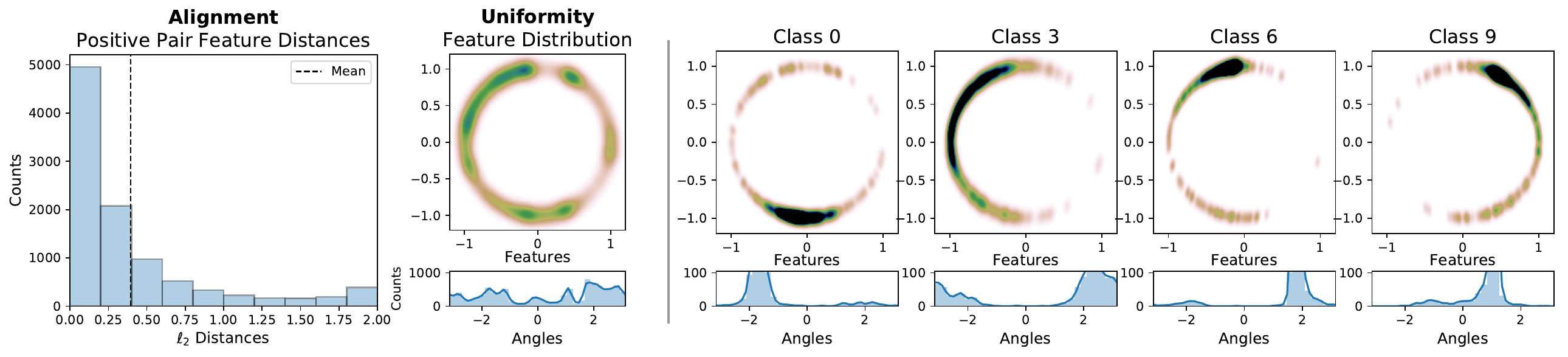}%
        \vspace{-4pt}
        \caption{\textbf{Supervised Predictive Learning.} Linear classification validation accuracy: $57.19\%$. }
    \end{subfigure}
    \begin{subfigure}[b]{\linewidth}
        \centering
        \vspace{1pt}
        \includegraphics[width=\linewidth, trim=5 5 10 0, clip]{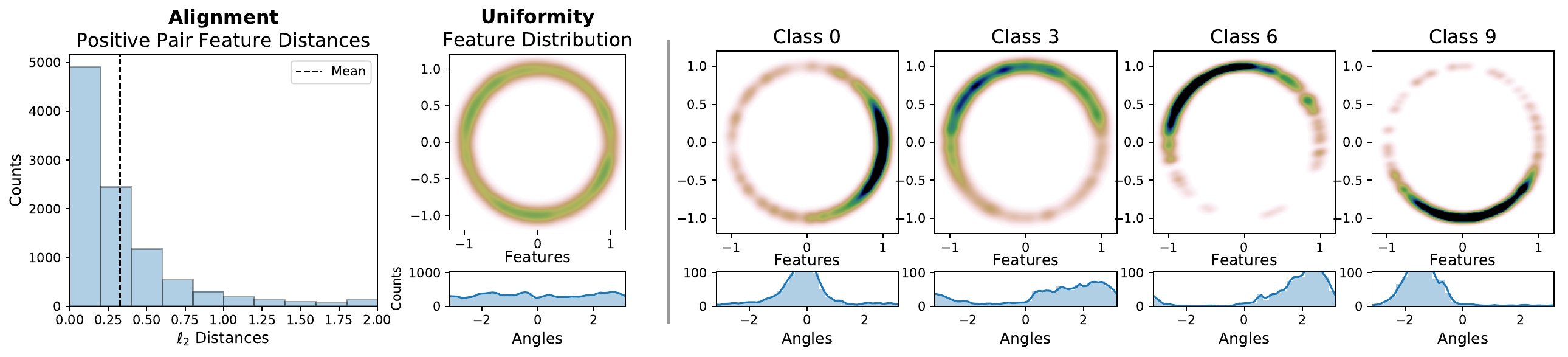}%
        \vspace{-4pt}
        \caption{\textbf{Unsupervised Contrastive Learning.} Linear classification validation accuracy: $28.60\%$.}
    \end{subfigure}\vspace{-4pt}
    \caption{Representations of \cifar validation set on $\sphere^{1}$.
    \textbf{Alignment analysis: }We show distribution of distance between features of positive pairs (two random augmentations).  \textbf{Uniformity analysis: }We plot feature distributions with Gaussian kernel density estimation (KDE) in $\R^2$ and von~Mises-Fisher (vMF) KDE on angles (\ie, $\arctantwo(y, x)$ for each point $(x, y) \in \sphere^1$). \textbf{Four rightmost plots} visualize feature distributions of selected specific classes. Representation from contrastive learning is both \emph{aligned} (having low positive pair feature distances) and \emph{uniform} (evenly distributed on $\sphere^1$).
    } \label{fig:toy2d}\vspace{-2pt}
\end{figure*}

The contrastive loss encourages learned feature representation for positive pairs to be similar, while pushing features from the randomly sampled negative pairs apart. Conventional wisdom says that representations should extract the most shared information between positive pairs and remain invariant to other noise factors \citep{linsker1988self,tian2019contrastive,wu2020importance,bachman2019learning}. Therefore, the loss should prefer two following properties: \begin{itemize}
    \item \emph{Alignment}: two samples forming a positive pair should be mapped to nearby features, and thus be (mostly) invariant to unneeded noise factors.
    \item \emph{Uniformity}: feature vectors should be roughly uniformly distributed on the unit hypersphere $\sphere^{m-1}$, preserving as much information of the data as possible.
\end{itemize}

To empirically verify this, we visualize \cifar \citep{torralba200880,krizhevsky2009learning} representations on $\sphere^1$ ($m = 2$) obtained via three different methods: \begin{itemize}
    \item Random initialization.
    \item Supervised predictive learning: An encoder and a linear classifier are jointly trained from scratch with cross entropy loss on supervised labels.
    \item Unsupervised contrastive learning: An encoder is trained \wrt $\lcontr$ with $\tau = 0.5$ and $M = 256$.
\end{itemize} All three encoders share the same AlexNet based architecture \citep{krizhevsky2012imagenet}, modified to map input images to $2$-dimensional vectors in $\sphere^1$. Both predictive and contrastive learning use standard data augmentations to augment the dataset and sample positive pairs.

Figure~\ref{fig:toy2d} summarizes the resulting distributions of validation set features. Indeed, features from unsupervised contrastive learning (bottom in Figure~\ref{fig:toy2d}) exhibit the most uniform distribution, and are closely clustered for positive pairs.

The form of the contrastive loss in Eqn.~\eqref{eq:contrastive_loss} also suggests this. We present informal arguments below, followed by more formal treatment in Section~\ref{sec:limit}. From the symmetry of $p$, we can derive%
\begin{equation*}
\begin{split}
    & \lcontr(f; \tau, M) = \expectunder[(x, y) \sim \distnpos]{- f(x)\T f(y) / \tau} \\
    & \quad +\hspace{-5pt}\expectundernear[\substack{
    (x, y) \sim \distnpos \\
    \{x^-_i\}_{i=1}^{M} \iidsim \distndata
    }]{\log\hspace{-2pt}\left( e^{f(x)\T f(y) / \tau}+ \sum_i e^{f(x^-_i)\T f(x) / \tau} \right)}.
\end{split}
\end{equation*}
Because the $\sum_i e^{f(x^-_i)\T f(x) / \tau}$ term is always positive and bounded below, the loss favors smaller $\expect{- f(x)\T f(y) / \tau}$, \ie, having more aligned positive pair features. Suppose the encoder is perfectly aligned, \ie, $\prob{f(x) = f(y)} = 1$, then minimizing the loss is equivalent to optimizing \begin{equation*}
    \expectunder[\substack{
    x \sim \distndata \\
    \{x^-_i\}_{i=1}^{M} \iidsim \distndata
    }]{\log \left( e^{1 / \tau}+ \sum_i e^{f(x^-_i)\T f(x) / \tau} \right)},
\end{equation*}
which is akin to maximizing pairwise distances with a $\mathtt{LogSumExp}$ transformation. Intuitively, pushing all features away from each other should indeed cause them to be roughly uniformly distributed.


\subsection{Quantifying Alignment and Uniformity} \label{sec:unif-align}

For further analysis, we need a way to measure alignment and uniformity. We propose the following two metrics (losses).


\begin{figure*}[ht!]\vspace{-1pt}
    \centering
    \hspace{-0.005\linewidth}\includegraphics[width=1.01\linewidth]{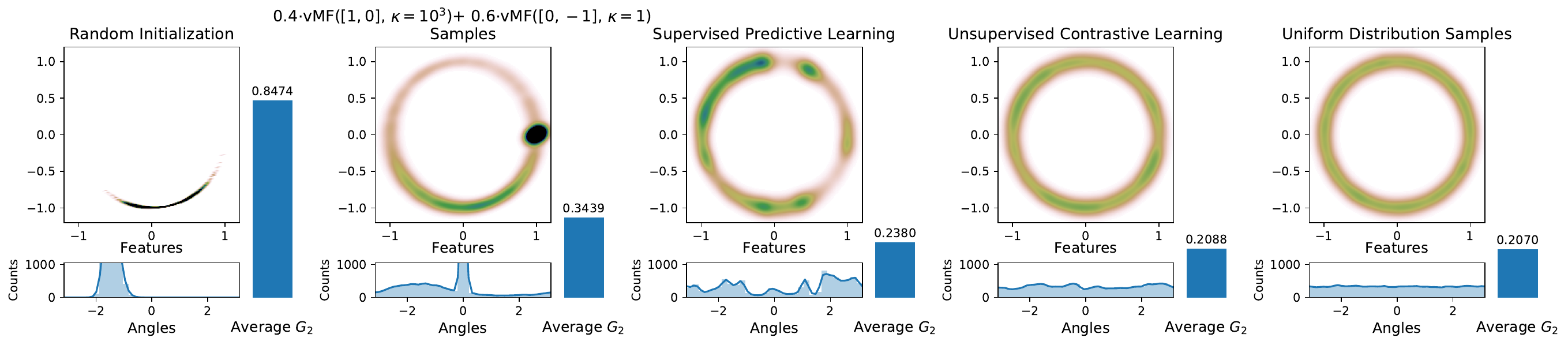}%
    \vspace{-9pt}
    \caption{Average pairwise $G_{2}$ potential as a measure of uniformity. Each plot shows $10000$ points distributed on $\sphere^1$, obtained via either applying an encoder on \cifar validation set (same as those in Figure~\ref{fig:toy2d}) or sampling from a distribution on $\sphere^1$, as described in plot titles. We show the points with Gaussian KDE and the angles with vMF KDE.} \label{fig:gaussian_vis}\vspace{-6pt}
\end{figure*}

\subsubsection{Alignment}
The alignment loss is straightforwardly defined with the expected distance between positive pairs:  %
\begin{equation*}
    \lalign(f; \alpha) \trieq \expectunder[(x, y) \sim \distnpos] {\norm{f(x) - f(y)}_2^\alpha}, \quad \alpha > 0.
\end{equation*}

\subsubsection{Uniformity} \label{sec:unif}
We want the uniformity metric to be both asymptotically correct (\ie, the distribution optimizing this metric should converge to uniform distribution) and empirically reasonable with finite number of points. To this end, we consider the Gaussian potential kernel (also known as the Radial Basis Function (RBF) kernel) $G_t \colon \sphere^{d} \times \sphere^{d} \rightarrow \R_+$ \citep{cohn2007universally,borodachov2019discrete}: \begin{equation*}
    G_t(u, v) \trieq e^{-t \norm{u - v}_2^2} = e^{2t \cdot u\T v - 2t}, \quad t > 0,
\end{equation*}
and define the uniformity loss as the logarithm of the average pairwise Gaussian potential: \begin{align*}
    \lunif(f; t)
    & \trieq \log \expectunder[\hspace{2pt}x, y \iidsim \distndata] {G_t(u, v)} \\
    & = \log \expectunder[\hspace{2pt}x, y \iidsim \distndata] {e^{-t \norm{f(x) - f(y)}_2^2}}
    , \quad t > 0.%
\end{align*}%

The average pairwise Gaussian potential is nicely tied with the uniform distribution on the unit hypersphere.

\begin{definition}[Uniform distribution on $\sphere^{d}$]
    $\sigma_{d}$ denotes the normalized surface area measure on $\sphere^{d}$.
\end{definition}

First, we show that the uniform distribution is the unique distribution that minimize the expected pairwise potential.
\begin{proposition} \label{prop:continuous-problem}
    For $\mathcal{M}(\sphere^{d})$ the set of Borel probability measures on $\sphere^{d}$, $\sigma_{d}$ is the unique solution of \begin{equation*}
        \min_{\mu \in \mathcal{M}(\sphere^{d})} \int_u \int_v G_t(u, v) \diff \mu \diff \mu.
    \end{equation*}
\end{proposition}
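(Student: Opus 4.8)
The plan is to exploit the fact that $G_t$ is a \emph{strictly positive definite} kernel on $\sphere^{d}$, and then reduce the minimization over $\mathcal{M}(\sphere^{d})$ to a statement about signed measures of total mass zero. First I would record the power-series identity
\[
    G_t(u, v) = e^{-2t} \sum_{k=0}^{\infty} \frac{(2t)^k}{k!} (u\T v)^k,
\]
valid for $u, v \in \sphere^{d}$. Since the inner-product kernel $(u, v) \mapsto u\T v$ is positive definite, and Hadamard (Schur) products of positive definite kernels are positive definite, each $(u\T v)^k$ is positive definite; as the coefficients $e^{-2t}(2t)^k/k!$ are all strictly positive, $G_t$ is positive definite. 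In fact it is \emph{strictly} positive definite on $\sphere^{d}$ by the Schoenberg-type characterization of dot-product kernels on spheres: its expansion in Gegenbauer polynomials $C^{(d-1)/2}_n$ has all coefficients strictly positive (the coefficients of $e^{2t\,u\T v}$ are positive modified-Bessel quantities). Concretely, this means that for every finite signed Borel measure $\nu$ on $\sphere^{d}$,
\[
    \int_u \int_v G_t(u, v) \diff \nu \diff \nu \ge 0,
\]
with equality if and only if $\nu = 0$. This strict positive-definiteness is the crux, and rigorously establishing the \emph{strict} part — which is exactly what powers the uniqueness claim — is the step I expect to demand the most care, as it rests on the full spherical-harmonic expansion of $G_t$ rather than merely on nonnegativity of the energy.

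Next I would use the rotational symmetry of $\sigma_{d}$. Because $G_t(u, v)$ depends on $u, v$ only through $u\T v$ and $\sigma_{d}$ is invariant under the orthogonal group, the potential $U(u) \trieq \int_v G_t(u, v) \diff \sigma_{d}(v)$ is independent of $u$; write $c$ for its constant value, so that $\int_u \int_v G_t \diff \sigma_{d} \diff \sigma_{d} = c$. All these integrals are finite and the interchanges are legitimate because $G_t$ is bounded and continuous on the compact product $\sphere^{d} \times \sphere^{d}$ and the measures are finite.

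Finally I would decompose an arbitrary $\mu \in \mathcal{M}(\sphere^{d})$ as $\mu = \sigma_{d} + \nu$, where $\nu \trieq \mu - \sigma_{d}$ is a signed measure with total mass $\nu(\sphere^{d}) = 0$. Expanding the energy bilinearly and using the symmetry $G_t(u,v) = G_t(v,u)$,
\[
    \int_u \int_v G_t \diff \mu \diff \mu = c + 2 \int_u U(u) \diff \nu(u) + \int_u \int_v G_t \diff \nu \diff \nu .
\]
The cross term vanishes because $U \equiv c$ and hence $\int_u U \diff \nu = c\,\nu(\sphere^{d}) = 0$. Thus the energy equals $c + \int_u \int_v G_t \diff \nu \diff \nu \ge c$, with equality if and only if $\nu = 0$, i.e.\ $\mu = \sigma_{d}$, by strict positive definiteness. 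This exhibits $\sigma_{d}$ as the unique minimizer and completes the argument.
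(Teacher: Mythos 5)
Your proposal is correct, and it rests on the same pivotal fact as the paper's proof: strict positive definiteness of $G_t$ on $\sphere^d \times \sphere^d$. The difference is in what is proved versus cited. The paper's proof is essentially a citation: it restates from \citet{borodachov2019discrete} both (i) strict positive definiteness of $G_t$ and (ii) the general theorem that for a strictly positive definite kernel of the form $f(\norm{u-v}_2^2)$ with finite $\sigma_d$-energy, $\sigma_d$ is the unique energy-minimizing measure, and then declares the proposition a direct consequence. You instead inline a self-contained proof of (ii): write $\mu = \sigma_d + \nu$ with $\nu(\sphere^d) = 0$, use rotational invariance to make the potential of $\sigma_d$ constant so that the cross term $2\int U \diff\nu$ vanishes, and conclude $I_{G_t}[\mu] = I_{G_t}[\sigma_d] + I_{G_t}[\nu] \geq I_{G_t}[\sigma_d]$ with equality iff $\nu = 0$. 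This bilinear decomposition is precisely the classical potential-theory argument hiding inside the cited theorem, and spelling it out makes the roles of symmetry and strictness transparent. What remains the hard technical core in your route is (i): your Schur-product series only yields positive semidefiniteness, and---as you rightly flag---the strict statement for signed measures requires the spherical-harmonic (Gegenbauer) expansion of $e^{2t\, u\T v}$ with all coefficients strictly positive, i.e., a modified-Bessel computation. Note that the lemma the paper cites offers a cheaper sufficient condition you could invoke instead: strict complete monotonicity of $-f'(s)$, which for $f(s) = e^{-ts}$ is immediate since $-f'(s) = t e^{-ts}$. Finally, the paper's black-box citation simultaneously delivers Proposition 2 (weak$^*$ convergence of minimizing $N$-point configurations), which your argument does not cover, though it is not needed for this statement.
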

\begin{proof}
    See \suppmat.
\end{proof}

In addition, as number of points goes to infinity, distributions of points minimizing the average pairwise potential converge weak$^*$ to the uniform distribution. Recall the definition of the weak$^*$ convergence of measures.

\begin{definition}[Weak$^*$ convergence of measures]
    A sequence of Borel measures $\{\mu_n\}_{n=1}^\infty$ in $\R^p$ converges weak$^*$ to a Borel measure $\mu$ if for all continuous function $f \colon \R^p \rightarrow \R$, we have \begin{equation*}
        \lim_{n\rightarrow \infty} \int f(x) \diff \mu_n(x) = \int f(x) \diff \mu(x).
    \end{equation*}
\end{definition}

\begin{proposition} \label{prop:discrete-problems}
    For each $N > 0$, the $N$ point minimizer of the average pairwise potential is \begin{equation*}
        \mathbf{u}^*_N = \argmin_{u_1, u_2, \dots, u_N \in \sphere^{d}} \sum_{1 \leq i < j \leq N} G_t(u_i, u_j).
    \end{equation*}
    The normalized counting measures associated with the $\{\mathbf{u}^*_N\}_{N=1}^\infty$ sequence converge weak$^*$ to $\sigma_d$.
\end{proposition}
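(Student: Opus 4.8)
The plan is to use the standard potential-theoretic argument that bridges the discrete minimization with the continuous one of Proposition~\ref{prop:continuous-problem}, via a two-sided bound on the minimal discrete energy together with a compactness-and-uniqueness argument. Throughout, write $I(\mu) \trieq \int_u \int_v G_t(u,v) \diff\mu \diff\mu$ for the continuous energy, let $I^* \trieq I(\sigma_d)$ be its (unique) minimum from Proposition~\ref{prop:continuous-problem}, and let $\mathcal{E}_N \trieq \sum_{i<j} G_t(u_i^*, u_j^*)$ be the minimal discrete energy attained by $\mathbf{u}^*_N$. A minimizer exists for each $N$ because the objective is continuous and $(\sphere^d)^N$ is compact. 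Let $\mu_N$ denote the normalized counting measure of $\mathbf{u}^*_N$.

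First I would establish that the normalized discrete energy converges to $I^*$. Expanding the continuous energy of $\mu_N$ and separating diagonal terms (using $G_t(u,u)=1$) gives the identity $I(\mu_N) = \frac{2\mathcal{E}_N}{N^2} + \frac{1}{N}$. The lower bound $I(\mu_N) \ge I^*$ is immediate from Proposition~\ref{prop:continuous-problem}. For a matching upper bound, I would draw $N$ points i.i.d.\ from $\sigma_d$; since distinct points are independent with law $\sigma_d$, the expected discrete energy is exactly $\binom{N}{2} I^*$, so the minimum satisfies $\mathcal{E}_N \le \binom{N}{2} I^*$. Combining the two bounds yields $I^* - \tfrac1N \le \tfrac{2\mathcal{E}_N}{N^2} \le I^*$, hence $I(\mu_N) \to I^*$.

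Next I would invoke weak$^*$ sequential compactness of $\mathcal{M}(\sphere^d)$ (the sphere is compact, so this follows from Banach--Alaoglu / Prokhorov) to extract from any subsequence of $\{\mu_N\}$ a further subsequence converging weak$^*$ to some $\mu \in \mathcal{M}(\sphere^d)$. The key step is to pass to the limit inside the energy: since $G_t$ is continuous and bounded on the compact product $\sphere^d \times \sphere^d$, weak$^*$ convergence $\mu_{N_k} \to \mu$ implies $\mu_{N_k}\times\mu_{N_k} \to \mu\times\mu$ weak$^*$, and therefore $I(\mu) = \lim_k I(\mu_{N_k}) = I^*$. By the uniqueness in Proposition~\ref{prop:continuous-problem}, $\mu = \sigma_d$. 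Since every subsequence of $\{\mu_N\}$ thus has a further subsequence converging to the same limit $\sigma_d$, the whole sequence converges weak$^*$ to $\sigma_d$.

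I expect the main obstacle to be the limit-passage step: rigorously justifying that weak$^*$ convergence of $\mu_{N_k}$ lifts to the product measures so that $I$ is continuous along the sequence. This rests on the boundedness and continuity of the Gaussian kernel (which also makes the diagonal contribution $\tfrac1N$ harmless, unlike for singular kernels such as the Riesz potentials), and can be made precise by approximating continuous functions on the product by finite sums of products via Stone--Weierstrass. The averaging bound $\mathcal{E}_N \le \binom{N}{2} I^*$ is the other essential ingredient, but it is a short computation once one recognizes that the minimum is dominated by the expectation over i.i.d.\ $\sigma_d$ samples.
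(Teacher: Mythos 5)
Your proposal is correct, but it takes a genuinely different route from the paper. The paper proves Proposition~\ref{prop:discrete-problems} (together with Proposition~\ref{prop:continuous-problem}) in one stroke by citation: it records that $G_t$ is strictly positive definite on $\sphere^d \times \sphere^d$ (Lemma~\ref{supp:lemma-strict-pd-gaussian}) and then invokes the restated Theorem~6.2.1/Corollary~6.2.2 of \citet{borodachov2019discrete} (Lemma~\ref{supp:lemma:strict-pd-uniform}), which directly asserts weak$^*$ convergence of energy-minimizing configurations for any such kernel with finite energy. You instead reprove the discrete-to-continuous transfer from scratch, using only the uniqueness statement of Proposition~\ref{prop:continuous-problem} as a black box: the diagonal identity $I(\mu_N) = 2\mathcal{E}_N/N^2 + 1/N$ (valid since $G_t(u,u)=1$), the i.i.d.-averaging bound $\mathcal{E}_N \le \binom{N}{2} I(\sigma_d)$, weak$^*$ sequential compactness of $\mathcal{M}(\sphere^d)$, continuity of $I$ along weak$^*$-convergent sequences (via convergence of product measures, justified by Stone--Weierstrass), and the subsequence principle. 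All of these steps are sound; note that your sandwich actually gives $2\mathcal{E}_N/N^2 \le (1-1/N)\, I(\sigma_d)$, which still suffices since $G_t > 0$ makes $I(\sigma_d) \ge 0$, and the subsequence principle is legitimate because the weak$^*$ topology on $\mathcal{M}(\sphere^d)$ is metrizable (or by arguing scalar-wise on $\int f \diff\mu_N$ for each continuous $f$). What each approach buys: yours is self-contained and elementary, and it makes transparent exactly where boundedness and continuity of the Gaussian kernel enter (the diagonal contributes only $O(1/N)$, and the energy is genuinely weak$^*$ continuous rather than merely lower semicontinuous); the paper's citation-based route is far shorter and is the one that generalizes, since the quoted theorem also covers singular kernels such as the Riesz $s$-potentials, for which your diagonal-separation and continuity steps fail and a more careful transference argument is required.
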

\begin{proof}
    See \suppmat.
\end{proof}

Designing an objective minimized by the uniform distribution is in fact nontrivial. For instance, average pairwise dot products or Euclidean distances is simply optimized by any distribution that has zero mean. Among kernels that achieve uniformity at optima, the Gaussian kernel is special in that it is closely related to the universally optimal point configurations and can also be used to represent a general class of other kernels, including the Riesz $s$-potentials. We refer readers to \citet{borodachov2019discrete}~and~\citet{cohn2007universally} for in-depth discussions on these topics. Moreover, as we show below, $\lunif$, defined with the Gaussian kernel, has close connections with $\lcontr$.

Empirically, we evaluate the average pairwise potential of various finite point collections on $\sphere^1$ in Figure~\ref{fig:gaussian_vis}. The values nicely align with our intuitive understanding of uniformity.

We further discuss  properties of $\lunif$ and characterize its optimal value and range in the \suppmat.

\subsection{Limiting Behavior of Contrastive Learning} \label{sec:limit}

In this section, we formalize the intuition that contrastive learning optimizes alignment and uniformity, and characterize its asymptotic behavior. We consider optimization problems over all measurable encoder functions from the $\pdata$ measure in $\R^n$ to the Borel space $\sphere^{m-1}$.

We first define the notion of optimal encoders for each of these two metrics.

\begin{definition}[Perfect Alignment]
    We say an encoder $f$ is \emph{perfectly aligned} if $f(x) = f(y)$ \asurely over $(x, y) \sim \distnpos$.
\end{definition}

\begin{definition}[Perfect Uniformity]
    We say an encoder $f$ is \emph{perfectly uniform} if %
    the distribution of $f(x)$ for $x \sim \distndata$ is the uniform distribution $\sigma_{m-1}$ on $\sphere^{m-1}$.
\end{definition}

\paragraph{Realizability of perfect uniformity.} We note that it is not always possible to achieve perfect uniformity, \eg, when the data manifold in $\R^n$ is lower dimensional than the feature space $\sphere^{m-1}$. Moreover, in the case that $\distndata$ and $\distnpos$ are formed from sampling augmented samples from a finite dataset, there cannot be an encoder that is \emph{both} perfectly aligned and perfectly uniform, because perfect alignment implies that all augmentations from a single element have the same feature vector. Nonetheless, perfectly uniform encoder functions do exist under the conditions that $n \geq m-1$ and $\distndata$ has bounded density. 

We analyze the asymptotics with infinite negative samples. Existing empirical work has established that larger number of negative samples consistently leads to better downstream task performances \citep{wu2018unsupervised,tian2019contrastive,he2019momentum,chen2020simple}, and often uses very large values (\eg, $M=65536$ in \citet{he2019momentum}). The following theorem nicely confirms that optimizing \wrt the limiting loss indeed requires both alignment and uniformity.

\begin{theorem}[Asymptotics of $\lcontr$] \label{thm:asym_inf_negatives}
    For fixed $\tau > 0$, as the number of negative samples $M \rightarrow \infty$, the (normalized) contrastive loss converges to \begin{equation}
        \begin{split}
        & \lim_{M \rightarrow \infty} \lcontr(f; \tau, M) - \log M = \\
        & \qquad -\frac{1}{\tau} \expectunder[(x, y) \sim \distnpos]{f(x)\T f(y)} \\
        & \qquad + \expectunder[x \sim \distndata]{\log\expectunder[x^- \sim \distndata]{e^{f(x^-)\T f(x) / \tau}}}.
        \end{split}
        \label{eq:contrastive_loss_limit}
    \end{equation}
    We have the following results: \begin{enumerate}
        \item The first term is minimized iff $f$ is perfectly aligned.
        \item If perfectly uniform encoders exist, they form the exact minimizers of the second term.
        \item For the convergence in Equation~\eqref{eq:contrastive_loss_limit}, the absolute deviation from the limit decays in $\mathcal{O}(M^{-1/2})$.
    \end{enumerate}
\end{theorem}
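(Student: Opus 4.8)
The plan is to establish the limiting identity~\eqref{eq:contrastive_loss_limit} first and then handle the three numbered claims separately. Expanding the log-ratio, $\lcontr(f;\tau,M)$ splits into the $M$-independent term $-\tfrac1\tau\expectunder[(x,y)\sim\distnpos]{f(x)\T f(y)}$ and an expected $\mathtt{LogSumExp}$ over the positive similarity and the $M$ negatives; using the symmetry of $\distnpos$ together with the matching-marginal assumption, the latter can be written with the negatives scored against $f(x)$ and the outer variable distributed as $\distndata$. Subtracting $\log M$ converts the sum into the empirical mean $P_M(x)\trieq\tfrac1M\sum_{i=1}^M e^{f(x^-_i)\T f(x)/\tau}$ plus a term of order $\tfrac1M$. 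Because $f$ lands in $\sphere^{m-1}$, every summand lies in $[e^{-1/\tau},e^{1/\tau}]$, so the strong law of large numbers gives $P_M(x)\to\expectunder[x^-\sim\distndata]{e^{f(x^-)\T f(x)/\tau}}$ almost surely, and the uniform bound lets dominated convergence pass the limit through the outer expectation, producing~\eqref{eq:contrastive_loss_limit}.

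For claim~1, the identity $f(x)\T f(y)=1-\tfrac12\norm{f(x)-f(y)}_2^2\le 1$ shows the first term is at least $-\tfrac1\tau$, with equality exactly when $f(x)=f(y)$ holds $\distnpos$-almost surely, i.e.\ when $f$ is perfectly aligned. For claim~3 I would quantify the convergence: since $\log$ is $e^{1/\tau}$-Lipschitz on $[e^{-1/\tau},\infty)$, the per-sample gap is bounded by $e^{1/\tau}\bigl(\tfrac1M e^{1/\tau}+\bigl|P_M(x)-\expectunder[x^-\sim\distndata]{e^{f(x^-)\T f(x)/\tau}}\bigr|\bigr)$; the first piece is $\mathcal{O}(M^{-1})$, and by Cauchy--Schwarz and the i.i.d.\ variance bound $\mathrm{Var}(P_M(x)\mid x)=\mathcal{O}(1/M)$ the expected second piece is $\mathcal{O}(M^{-1/2})$, giving the claimed rate.

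Claim~2 is the heart of the matter. Writing $\mu=f_*\distndata$ for the feature distribution on $\sphere^{m-1}$ and using $e^{u\T v/\tau}=e^{1/\tau}G_t(u,v)$ with $t=\tfrac1{2\tau}$, the second term equals $\tfrac1\tau+F(\mu)$, where $F(\mu)\trieq\int_u\log\bigl(\int_v G_t(u,v)\diff\mu(v)\bigr)\diff\mu(u)$; it therefore suffices to show that $\sigma_{m-1}$ is the unique minimizer of $F$. I would first check stationarity: rotational invariance makes the potential $g_\sigma(u)\trieq\int G_t(u,v)\diff\sigma_{m-1}(v)$ a constant $C$, and the first variation of $F$ at $\sigma_{m-1}$ toward any competitor $\nu$ reduces to $\tfrac1C\bigl(\int g_\nu\,\diff\sigma_{m-1}-C\bigr)$, which vanishes because $\int g_\nu\,\diff\sigma_{m-1}=\int g_\sigma\,\diff\nu=C$ by Fubini. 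To promote stationarity to global, unique optimality I would argue that $F$ is strictly convex on $\mathcal{M}(\sphere^{m-1})$; then the stationary $\sigma_{m-1}$ is the unique global minimizer, and a Haar-symmetrization argument ($\int_{O(m)}R_*\mu\,\diff R=\sigma_{m-1}$ while $F(R_*\mu)=F(\mu)$) recovers $F(\sigma_{m-1})\le F(\mu)$ with equality only for rotation-invariant $\mu$, i.e.\ $\mu=\sigma_{m-1}$, which under the realizability hypothesis is precisely perfect uniformity.

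The main obstacle is the global convexity of $F$: the functional pairs the concave map $\mu\mapsto\log\int G_t(u,\cdot)\diff\mu$ against $\mu$ itself, so convexity is not automatic. Its second variation at $\mu$ along a mean-zero signed measure $\eta$ is $2\iint\tfrac{G_t(u,v)}{g_\mu(u)}\diff\eta(u)\diff\eta(v)-\int\tfrac1{g_\mu(u)^2}\bigl(\int G_t(u,v)\diff\eta(v)\bigr)^2\diff\mu(u)$, and I expect to establish its nonnegativity from the strict positive-definiteness of the Gaussian kernel, using the Mercer/spherical-harmonic decomposition in which the constant mode carries the top eigenvalue. Should a fully general convexity argument prove delicate, I would instead combine the stationarity computation with this positive-semidefinite second-variation estimate to certify $\sigma_{m-1}$ as a minimizer, and invoke strict positive-definiteness (as in Proposition~\ref{prop:continuous-problem}) to force uniqueness.
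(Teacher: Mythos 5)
Your handling of the limit (SLLN for the empirical mean $P_M(x)$ plus dominated convergence, justified by the bound $e^{u\T v/\tau}\in[e^{-1/\tau},e^{1/\tau}]$), of claim~1 (via $\norm{u-v}_2^2=2-2u\T v$ and existence of perfectly aligned encoders), and of claim~3 is sound. In fact your rate argument is \emph{simpler} than the paper's: you bound the log-gap by Lipschitzness of $\log$ on $[e^{-1/\tau},\infty)$ and then control $\mathbb{E}\bigl|P_M(x)-\mathbb{E}[P_M(x)\mid x]\bigr|$ by Cauchy--Schwarz and the i.i.d.\ variance bound, whereas the paper invokes the Berry--Esseen theorem for the same $\mathcal{O}(M^{-1/2})$ conclusion; both are correct.

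The genuine gap is in claim~2, exactly where you flag it. Your globalization rests on convexity of $F(\mu)\trieq\int\log\bigl(\int G_t(u,v)\diff\mu(v)\bigr)\diff\mu(u)$, which you never establish, and neither fallback closes the hole. First, stationarity of $\sigma_{m-1}$ together with a positive semidefinite second variation \emph{at} $\sigma_{m-1}$ certifies only a local minimum; for a non-convex functional this does not upgrade to global minimality, let alone uniqueness of the global minimizer. Second, the Haar-symmetrization step is circular: the inequality $F\bigl(\int_{O(m)}R_*\mu\,\diff R\bigr)\le\int_{O(m)}F(R_*\mu)\,\diff R=F(\mu)$ is precisely Jensen's inequality for $F$, i.e., it presupposes the convexity you are trying to avoid. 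Third, your plan to verify the second-variation inequality via the spherical-harmonic (Mercer) decomposition only works at $\mu=\sigma_{m-1}$, where $g_\mu$ is constant and the harmonics diagonalize the kernel; at a general $\mu$ the weights $1/g_\mu(u)$ and $1/g_\mu(u)^2$ destroy that structure, so global convexity cannot be read off this way. The paper circumvents all of this with a different scheme: it first proves that a minimizer $\mu^*$ of the relaxed problem $\min_{\mu\in\mathcal{M}(\sphere^{m-1})}\int\log U_\mu\,\diff\mu$ exists (Helly selection / weak$^*$ compactness combined with a continuous-convergence lemma), then shows by a first-variation argument over mixtures $(1-\alpha)\mu^*+\alpha\mu^*_T$ (with $\mu^*_T$ the conditional measures of $\mu^*$ on positive-measure sets $T$) that $U_{\mu^*}$ must be constant $\mu^*$-almost surely for \emph{every} minimizer; on that class of measures the $\log$ can be pulled outside the outer integral, which reduces the problem to minimizing the quadratic energy $\int\int G_{1/(2\tau)}(u,v)\,\diff\mu(v)\diff\mu(u)$, whose unique minimizer is $\sigma_{m-1}$ by strict positive definiteness (Proposition~\ref{prop:continuous-problem}). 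To salvage your route you would have to actually prove (strict) convexity of $F$, or replace it with an argument of this ``every minimizer satisfies the first-order condition'' type; as written, claim~2 is not proved.
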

\begin{proof}
    See \suppmat.
\end{proof}

\paragraph{Relation with $\lunif$. }The proof of Theorem~\ref{thm:asym_inf_negatives} in the \suppmat connects the asymptotic $\lcontr$ form with minimizing average pairwise Gaussian potential, \ie, minimizing $\lunif$. Compared with the second term of Equation~\eqref{eq:contrastive_loss_limit}, $\lunif$ essentially pushes the $\log$ outside the outer expectation, without changing the minimizer (perfectly uniform encoders). However, due to its pairwise nature, $\lunif$ is much simpler in form and avoids the computationally expensive $\mathtt{softmax}$ operation in $\lcontr$ \citep{goodman2001classes,bengio2003quick,gutmann2010noise,grave2017efficient,chen2018learning}.


\paragraph{Relation with feature distribution entropy estimation.} When $\pdata$ is uniform over finite samples $\{x_1, x_2, \dots, x_N\}$ (\eg, a collected dataset), the second term in Equation~\eqref{eq:contrastive_loss_limit} can be alternatively viewed as a resubstitution entropy estimator of $f(x)$ \citep{ahmad1976nonparametric}, where $x$ follows the underlying distribution $p_\mathsf{nature}$ that generates $\{x_i\}_{i=1}^N$, via a von~Mises-Fisher (vMF) kernel density estimation (KDE):%
\begingroup%
\setlength{\belowdisplayskip}{2pt}%
\begin{align*}
    & \expectunder[x \sim \distndata]{\log\expectunder[x^- \sim \pdata]{e^{f(x^-)\T f(x) / \tau}}} \\
    & \qquad = \frac{1}{N}\sum_{i=1}^N \log \left( \frac{1}{N}\sum_{j=1}^N e^{f(x_i)\T f(x_j) / \tau} \right) \\
    & \qquad = \frac{1}{N}\sum_{i=1}^N \log \hat{p}_\mathsf{vMF\text{-}KDE}(f(x_i)) + \log Z_\mathsf{vMF} \\
    & \qquad \trieq -\hat{H}(f(x)) + \log Z_\mathsf{vMF}, \qquad\qquad\quad x \sim p_\mathsf{nature} \\
    & \qquad \trieq -\hat{I}(x; f(x)) + \log Z_\mathsf{vMF}, \qquad\quad\hspace{14pt}x \sim p_\mathsf{nature},
\end{align*}%
\endgroup%
where \begin{itemize}
    \item $\hat{p}_\mathsf{vMF\text{-}KDE}$ is the KDE based on samples $\{f(x_j)\}_{j=1}^N$ using a vMF kernel with $\kappa = \tau^{-1}$,
    \item $Z_\mathsf{vMF}$ is the normalization constant for vMF distribution with $\kappa = \tau^{-1}$,
    \item $\hat{H}$ denotes the resubstitution entropy estimator,
    \item $\hat{I}$ denotes the mutual information estimator based on $\hat{H}$, since $f$ is a deterministic function.
\end{itemize}

\paragraph{Relation with the InfoMax principle.}  Many empirical works are motivated by the InfoMax principle, \ie, maximizing $I(f(x); f(y))$ for $(x, y) \sim \ppos$.  However, the interpretation of $\lcontr$  as a lower bound of $I(f(x); f(y))$ is known to be inconsistent with its actual behavior in practice \citep{tschannen2019mutual}. Our results instead analyze the properties of $\lcontr$ itself. Considering the identity $I(f(x); f(y)) = H(f(x)) - H(f(x) \given f(y))$, we can see that while uniformity indeed favors large $H(f(x))$, alignment is stronger than merely desiring small $H(f(x) \given f(y))$. In particular, both Theorem~\ref{thm:asym_inf_negatives} and the above connection with maximizing an entropy estimator provide alternative interpretations and motivations that $\lcontr$ optimizes for \emph{aligned} and \emph{information-preserving} encoders.

Finally, even for the case where only a single negative sample is used (\ie, $M=1$), we can still prove a weaker result, which we describe in details in the \suppmat.




\begin{figure*}[t]
    \centering
    \begin{subfigure}[t]{0.6505\linewidth}
        \includegraphics[width=0.4935\linewidth, trim=5 5 5 5, clip]{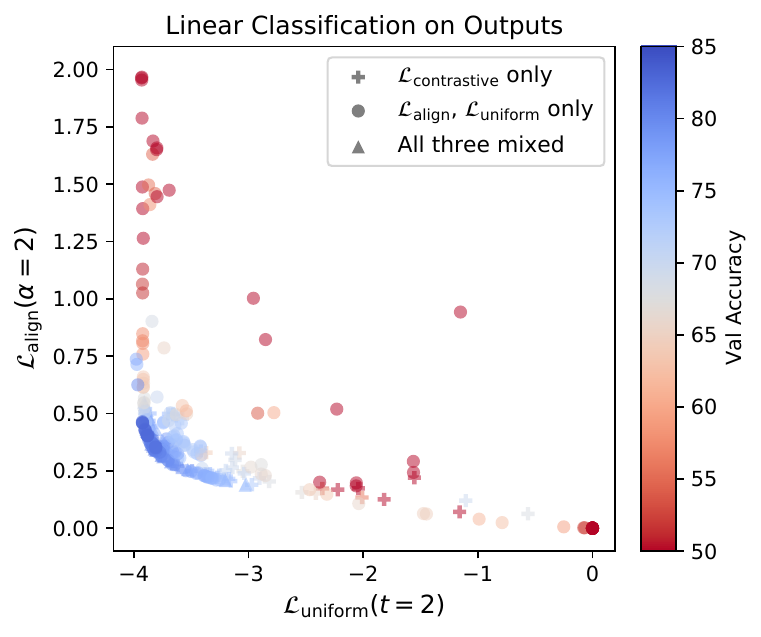}
        \hfill
        \includegraphics[width=0.4935\linewidth, trim=5 5 5 5, clip]{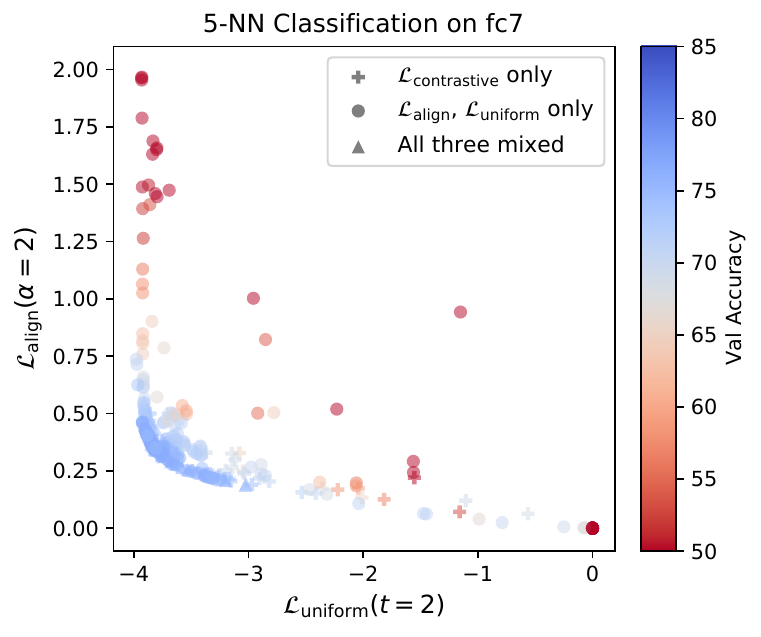}%
        \vspace{-4pt}
        \caption{$304$ \stl encoders are evaluated with linear classification on output features and $5$-nearest neighbor ($5$-NN) on \textrm{fc7} activations. Higher accuracy (blue color) is better.}\label{fig:expr_scatter_stl10}
    \end{subfigure}%
    \hfill%
    \begin{subfigure}[t]{0.33\linewidth}
        \includegraphics[width=0.983\linewidth, trim=5 4.5 5 5, clip]{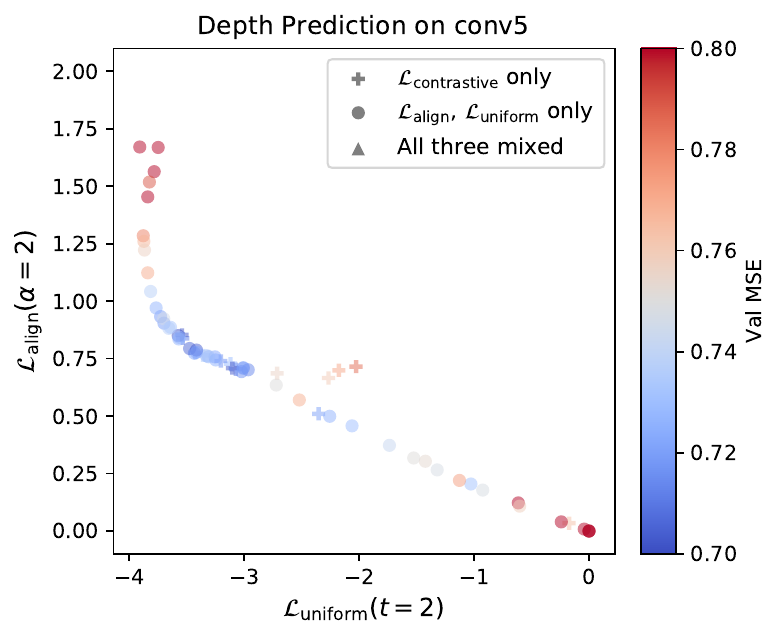}%
        \vspace{-4pt}
        \caption{$64$ \nyudepth encoders are evaluated with CNN depth regressors on \textrm{conv5} activations. Lower MSE (blue color) is better.}
    \end{subfigure}\vspace{-6pt}
    \caption{Metrics and performance of \stl and \nyudepth experiments. Each point represents a trained encoder, with its $x$- and $y$-coordinates showing $\lalign$ and $\lunif$ metrics and color showing the performance on validation set. \textbf{Blue} is better for both tasks. Encoders with low $\lalign$ and $\lunif$ are consistently the better performing ones (lower left corners).}\label{fig:expr_scatter}\vspace{-2.8pt}
\end{figure*}

\begin{figure}[t]
    \input{figuretext/experiments/code_lalign_lunif}%
    \vspace{-15pt}
    \caption{PyTorch implementation of $\lalign$ and $\lunif$.} \label{fig:pytorch-losses-code}%
    \vspace{-1pt}
\end{figure}%

\section{Experiments}
In this section, we empirically verify the hypothesis that alignment and uniformity are desired properties for representations. Recall that our two metrics are \begin{align*}
    \lalign(f; \alpha) & \trieq \expect[(x, y) \sim \distnpos] {\norm{f(x) - f(y)}_2^\alpha} \\
    \lunif(f; t) & \trieq \log~\expect[x, y \iidsim \distndata] {e^{-t \norm{f(x) - f(y)}_2^2}}.
\end{align*}\vspace{-10pt}

We conduct extensive experiments with convolutional neural network (CNN) and recurrent neural network (RNN) based encoders on four popular representation learning benchmarks with distinct types of downstream tasks: \begin{itemize}
    \item \stl \citep{coates2011stl10} classification on AlexNet-based encoder outputs or intermediate activations with a linear or $k$-nearest neighbor ($k$-NN) classifier.
    \item \nyudepth \citep{Silberman:ECCV12} depth prediction on CNN encoder intermediate activations after convolution layers.
    \item \imagenet and \imagenetsubset (random $100$-class subset of \imagenet) classification on CNN encoder penultimate layer activations with a linear classifier.
    \item \bookcorpus \citep{moviebook} RNN sentence encoder outputs used for Moview Review Sentence Polarity (\moviereview) \citep{pang2005seeing} and Customer Product Review Sentiment (\customerreview) \citep{wang2012baselines} binary classification tasks with logisitc classifiers.
\end{itemize}

For image datasets, we follow the standard practice and choose positive pairs as two independent augmentations of the same image. For \bookcorpus, positive pairs are chosen as neighboring sentences, following Quick-Thought Vectors \citep{logeswaran2018efficient}. 

We perform majority of our analysis on \stl and \nyudepth encoders, where we calculate $\lcontr$ with negatives being other samples within the minibatch following the standard practice \citep{hjelm2018learning,bachman2019learning,tian2019contrastive,chen2020simple}, and $\lunif$ as the logarithm of average pairwise feature potentials also within the minibatch. Due to their simple forms, these two losses can be implemented in PyTorch \citep{paszke2019pytorch} with less than $10$ lines of code, as shown in Figure~\ref{fig:pytorch-losses-code}.

To investigate \emph{alignment} and \emph{uniformity} properties on recent contrastive learning methods and larger datasets, we also analyze \imagenet and \imagenetsubset encoders trained with Momentum Contrast (MoCo) \citep{he2019momentum,chen2020improved}, and \bookcorpus encoders trained with Quick-Thought Vectors \citep{logeswaran2018efficient}, with these methods modified to also allow $\lalign$ and $\lunif$.

\begin{table*}[t!]%
    \centering

\resizebox{
  0.9\width
}{!}{%
\small
\centering
\newcommand{\mroundprec}[1]{\round{#1}{2}\%}%
\renewcommand{\arraystretch}{1.4}%
\begin{tabular}{|c||c||c|c|c|c|}
    \hline
    &  \multirow{2}{*}{\vspace{-1pt}Loss Formula}
    &  \multicolumn{4}{c|}{Validation Set Accuracy $\uparrow$} \\
    \cline{3-6}
    &
    &  Output + Linear
    &  Output + $5$-NN
    &  \textrm{fc7} + Linear
    &  \textrm{fc7} + $5$-NN \\
    \hline\hline

    Best $\lcontr$ only
    & $\lcontr(\tau\narroweq0.19)$ 
    & \mroundprec{80.4625} 
    & \mroundprec{78.75} 
    & \mroundprec{83.8875} 
    & \mroundprec{76.325} 
    \\
    \hline

    Best $\lalign$ and $\lunif$ only
    & $0.98 \cdot \lalign(\alpha\narroweq2) + 0.96 \cdot \lunif(t\narroweq2)$ 
    & \textbf{\mroundprec{81.15}} 
    & \mroundprec{78.8875} 
    & \textbf{\mroundprec{84.425}} 
    & \textbf{\mroundprec{76.775}} 
    \\
    \hline

    Best among all encoders
    & $\lcontr(\tau\narroweq0.5) + \lunif(t\narroweq2)$ 
    & \mroundprec{81.0625} 
    & \textbf{\mroundprec{79.05}} 
    & \mroundprec{84.1375} 
    & \mroundprec{76.475} 
    \\



    \hline
\end{tabular}%
}%
\vspace{-5pt}%
\caption{\stl encoder evaluations. Numbers show linear and $5$-nearest neighbor ($5$-NN) classification accuracies on the validation set. The best result is picked by encoder outputs linear classifier accuracy from a $5$-fold training set cross validation, among all $150$ encoders trained from scratch with $128$-dimensional output and $768$ batch size. }
\label{tbl:expr_stl10}

\end{table*}
\begin{table*}[t!]%
    \vspace{-4pt}
    \centering

\vspace{-1pt}%
\resizebox{
  0.9\width
}{!}{%
\small
\centering
\newcommand{\mround}[1]{\round{#1}{4}}%
\renewcommand{\arraystretch}{1.4}%
\begin{tabular}{|c||c||c|c|}
    \hline
    &  \multirow{2}{*}{\vspace{-1pt}Loss Formula}
    &  \multicolumn{2}{c|}{Validation Set MSE $\downarrow$} \\
    \cline{3-4}
    &
    &  \hspace{8pt}\textrm{conv5}\hspace{8pt}
    &  \hspace{8pt}\textrm{conv4}\hspace{8pt} \\
    \hline\hline

    Best $\lcontr$ only
    & $0.5 \cdot \lcontr(\tau\narroweq0.1)$
    & \mround{0.7023918628692627}  
    & \textbf{\mround{0.7574809789657593}} 
    \\
    \hline

    Best $\lalign$ and $\lunif$ only
    & $0.75 \cdot \lalign(\alpha\narroweq2) + 0.5 \cdot \lunif(t\narroweq2)$
    & \textbf{\mround{0.7014151811599731}}
    & \mround{0.7591848969459534}
    \\
    \hline

    Best among all encoders
    & $0.75 \cdot \lalign(\alpha\narroweq2) + 0.5 \cdot \lunif(t\narroweq2)$
    & \textbf{\mround{0.7014151811599731}}
    & \mround{0.7591848969459534}
    \\



    \hline
\end{tabular}%
}%
\vspace{-5pt}%
\caption{\nyudepth encoder evaluations. Numbers show depth prediction mean squared error (MSE) on the validation set. The best result is picked based on \textrm{conv5} layer MSE from a $5$-fold training set cross validation, among all $64$ encoders trained from scratch with $128$-dimensional output and $128$ batch size.}
\label{tbl:expr_nyudepth}
    \vspace{-11.5pt}
\end{table*}


\begin{figure}[t]\vspace{-3pt}
    \centering
    \includegraphics[width=0.98\linewidth]{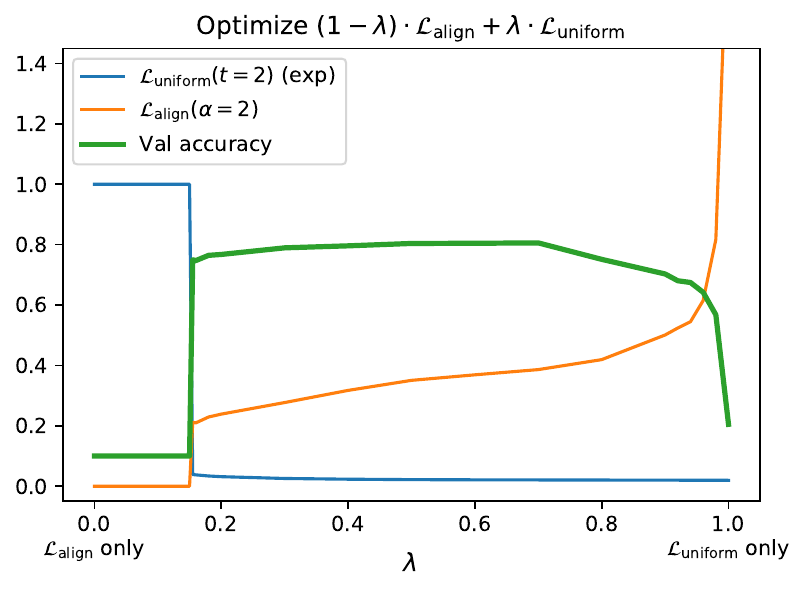}
    \vspace{-13.5pt}
    \caption{Effect of optimizing different weighted combinations of $\lalign(\alpha\narroweq2)$ and $\lunif(t\narroweq2)$ for \stl. For each encoder, we show the $\lalign$ and $\lunif$ metrics, and validation accuracy of a linear classifier trained on encoder outputs. $\lunif$ is exponentiated for plotting purposes. } \label{fig:expr_lgda_trend}\vspace{-8pt}
\end{figure}

We optimize a total of $304$ \stl encoders, $64$ \nyudepth encoders, $45$ \imagenetsubset encoders, and $108$ \bookcorpus encoders without supervision. The encoders are optimized \wrt weighted combinations of $\lcontr$, $\lalign$, and/or $\lunif$, with varying \begin{itemize}
    \item (possibly zero) weights on the three losses,
    \item temperature $\tau$ for $\lcontr$,
    \item $\alpha \in \{1, 2\}$ for $\lalign$,
    \item $t \in \{1, 2,\dots, 8\}$ for $\lunif$,
    \item batch size (affecting the number of (negative) pairs for $\lcontr$ and $\lunif$),
    \item embedding dimension,
    \item number of training epochs and learning rate,
    \item initialization (from scratch vs.~a pretrained encoder).
\end{itemize}
See the \suppmat for more experiment details and the exact configurations used.

\paragraph{$\lalign$ and $\lunif$ strongly agree with downstream task performance.}
For each encoder, we measure the downstream task performance, and the $\lalign$, $\lunif$ metrics on the validation set. Figure~\ref{fig:expr_scatter} visualizes the trends between both metrics and representation quality. We observe that the two metrics strongly agrees the representation quality overall. In particular, the best performing encoders are exactly the ones with low $\lalign$ and $\lunif$, \ie, the lower left corners in Figure~\ref{fig:expr_scatter}.

\paragraph{Directly optimizing only $\lalign$ and $\lunif$ can lead to better representations.}
As shown in Tables~\ref{tbl:expr_stl10}~and~\ref{tbl:expr_nyudepth}, encoders trained with only $\lalign$ and $\lunif$ consistently outperform their $\lcontr$-trained counterparts, for both tasks. Theoretically, Theorem~\ref{thm:asym_inf_negatives} showed that $\lcontr$ optimizes alignment and uniformity asymptotically with infinite negative samples. This empirical performance gap suggests that directly optimizing these properties can be superior in practice, when we can only have finite negatives.


\begin{figure*}[t]\vspace{2pt}
    \centering
    \hspace*{-5pt}\begin{subfigure}[b]{0.325\linewidth}
        \centering
        \includegraphics[height=0.745\linewidth]{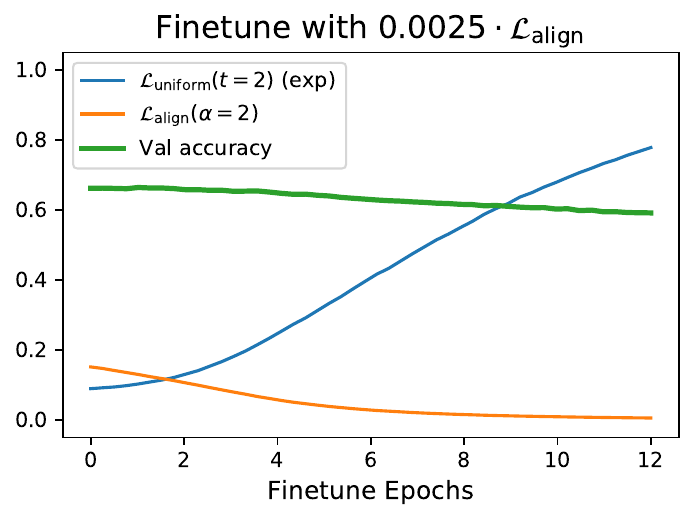}
    \end{subfigure}\hfill%
    \begin{subfigure}[b]{0.325\linewidth}
        \centering
        \includegraphics[height=0.745\linewidth]{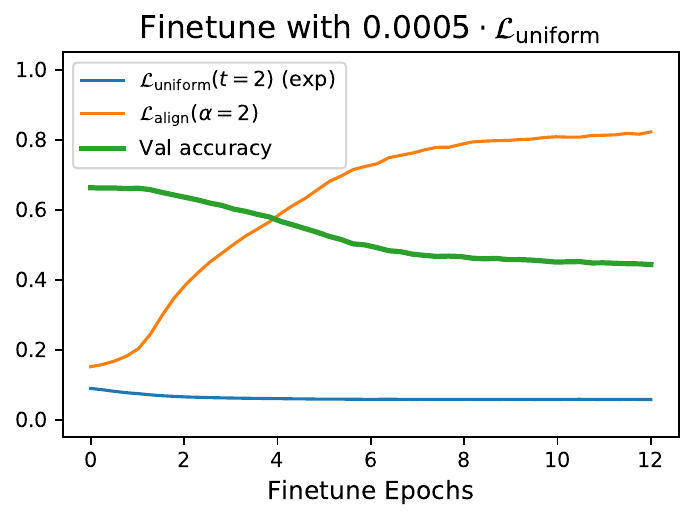}
    \end{subfigure}\hfill%
    \begin{subfigure}[b]{0.325\linewidth}
        \centering
        \includegraphics[height=0.745\linewidth]{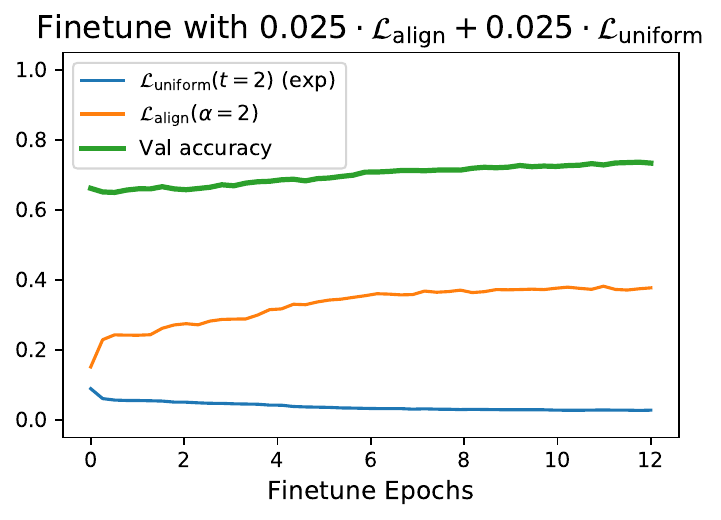}
    \end{subfigure}\vspace{-10pt}
    \caption{Finetuning trajectories from a \stl encoder trained with $\lcontr$ using a suboptimal temperature $\tau = 2.5$. Finetuning objectives are weighted combinations of $\lalign(\alpha\narroweq2)$ and $\lunif(t\narroweq2)$. For each intermediate checkpoint, we measure $\lalign$ and $\lunif$ metrics, as well as validation accuracy of a linear classifier trained from scratch on the encoder outputs. $\lunif$ is exponentiated for plotting purpose. \textbf{Left and middle: }Performance degrades if only one of alignment and uniformity is optimized. \textbf{Right: }Performance improves when both are optimized.} \label{fig:expr_finetune}%
\end{figure*}

\begin{figure*}[t]\vspace{2pt}
    \centering
    \begin{subfigure}[t]{0.32\linewidth}
        \includegraphics[width=1.005\linewidth, trim=5 4.5 5 5, clip]{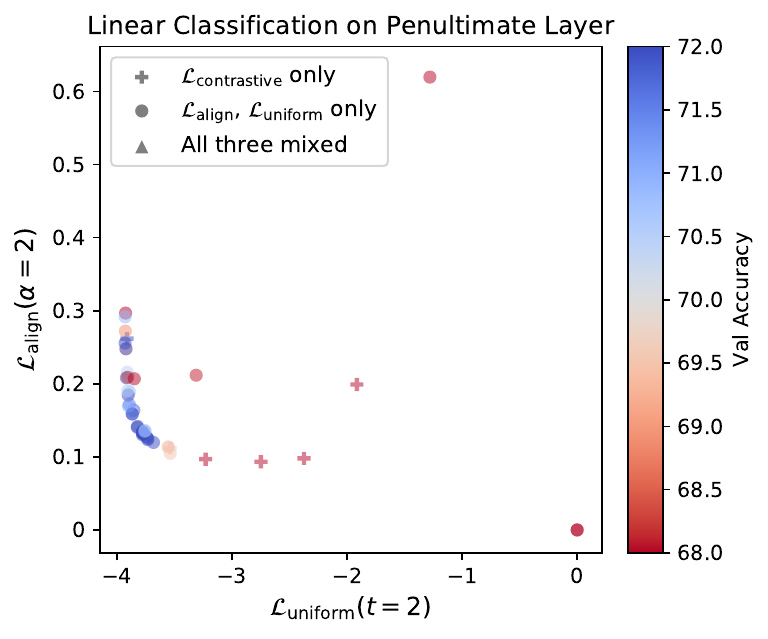}%
        \vspace{-4pt}
        \caption{$45$ \imagenetsubset encoders are trained with MoCo-based methods, and evaluated with linear classification\hide{ on penultimate layer activations}. }\label{fig:expr_scatter_imagenet100}
    \end{subfigure}%
    \hfill%
    \begin{subfigure}[t]{0.6515\linewidth}
        \includegraphics[width=0.49\linewidth, trim=5 5 5 5, clip]{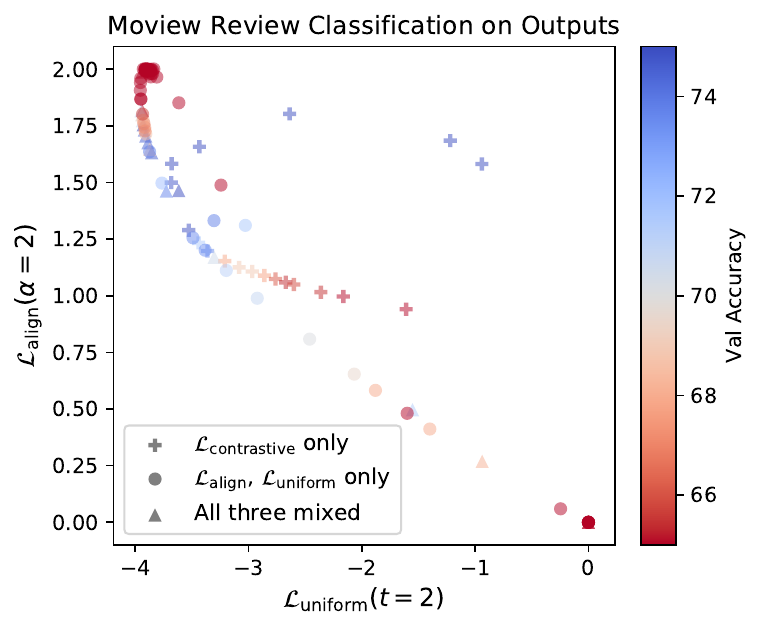}
        \hfill
        \includegraphics[width=0.49\linewidth, trim=5 5 5 5, clip]{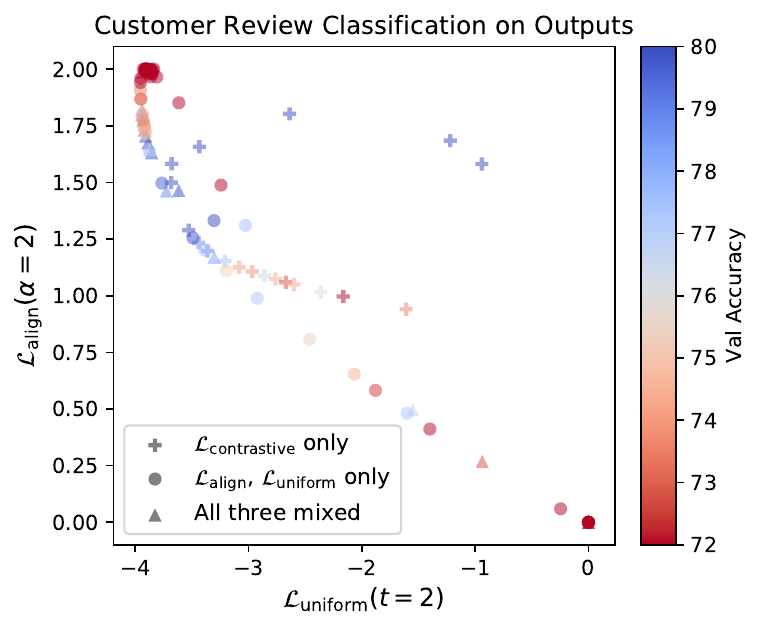}%
        \vspace{-4pt}
        \caption{$108$ \bookcorpus encoders are trained with Quick-Thought-Vectors-based methods, and evaluated with logistic binary classification on Movie Review Sentence Polarity and Customer Product Review Sentiment tasks.}\label{fig:expr_scatter_bookcorpus}
    \end{subfigure}\vspace{-7pt}
    \caption{Metrics and performance of \imagenetsubset and \bookcorpus experiments. Each point represents a trained encoder, with its $x$- and $y$-coordinates showing $\lalign$ and $\lunif$ metrics and color showing the validation accuracy. \textbf{Blue} is better. Encoders with low $\lalign$ and $\lunif$ consistently perform well (lower left corners), even though the training methods (based on MoCo and Quick-Thought Vectors) are different from directly optimizing the contrastive loss in Equation~\eqref{eq:contrastive_loss}.}\label{fig:expr_scatter_extra}%
\end{figure*}

\paragraph{Both alignment and uniformity are necessary for a good representation. }
Figure~\ref{fig:expr_lgda_trend} shows how the final encoder changes in response to optimizing differently weighted combinations of $\lalign$ and $\lunif$ on \stl. The trade-off between the $\lalign$ and $\lunif$ indicates that perfect alignment and perfect uniformity are likely hard to simultaneously achieve in practice. However, the inverted-U-shaped accuracy curve confirms that both properties are indeed necessary for a good encoder.
When $\lalign$ is weighted much higher than $\lunif$, degenerate solution occurs and all inputs are mapped to the same feature vector ($\exp \lunif = 1)$. However, as long as the ratio between two weights is not too large (\eg, $< 4$), we observe that the representation quality remains relatively good and insensitive to the exact weight choices.

\paragraph{$\lalign$ and $\lunif$ causally affect downstream task performance.} We take an encoder trained with $\lcontr$ using a suboptimal temperature $\tau = 2.5$, and finetune it according to $\lalign$ and/or $\lunif$. Figure~\ref{fig:expr_finetune} visualizes the finetuning trajectories. When only one of alignment and uniformity is optimized, the corresponding metric improves, but both the other metric and performance degrade. However, when both properties are optimized, the representation quality steadily increases. These trends confirm the causal effect of alignment and uniformity on the representation quality, and suggest that directly optimizing them can be a reasonable choice.

\paragraph{Alignment and uniformity also matter in other contrastive representation learning variants.}
MoCo \citep{he2019momentum} and Quick-Thought Vectors \citep{logeswaran2018efficient} are contrastive representation learning variants that have nontrivial differences with directly optimizing $\lcontr$ in Equation~\eqref{eq:contrastive_loss}. MoCo introduces a memory queue and a momentum encoder. Quick-Thought Vectors uses two different encoders to encode each sentence in a positive pair, only normalizes encoder outputs during evaluation, and does not use random sampling to obtain minibatches. After modifying them to also allow $\lalign$ and $\lunif$, we train these methods on \imagenetsubset and \bookcorpus, respectively. Figure~\ref{fig:expr_scatter_extra} shows that $\lalign$ and $\lunif$ metrics are still correlated with the downstream task performances. Tables~\ref{tbl:expr_imagenet100}~and~\ref{tbl:expr_bookcorpus} show that directly optimizing them also leads to comparable or better representation quality. Table~\ref{tbl:expr_imagenet} also shows improvements on full \imagenet when we use $\lalign$ and $\lunif$ to train MoCo v2 \citep{chen2020improved} (an improved version of MoCo). These results suggest that alignment and uniformity are indeed desirable properties for representations, for \emph{both} image and text modalities, and are likely connected with general contrastive representation learning methods.

{
\begin{table*}[t!]%
    \centering

\vspace{6pt}%
\newcommand{\factor}{0.95}
\resizebox{
  0.9\width
}{!}{%
\small
\centering
\newcommand{\mround}[1]{\round{#1}{4}}%
\newcommand{\mroundprec}[1]{\round{#1}{2}\%}%
\renewcommand{\arraystretch}{1.4}%
\begin{tabular}{|c||c||c|c|}
    \hline
    &  \multirow{2}{*}{\vspace{-1pt}Loss Formula}
    &  \multicolumn{2}{c|}{Validation Set Accuracy $\uparrow$} \\
    \cline{3-4}
    &
    &  \hspace{8pt}\textrm{top1}\hspace{8pt}
    &  \hspace{8pt}\textrm{top5}\hspace{8pt} \\
    \hline\hline

    Best $\lcontr$ only
    & $\lcontr(\tau\narroweq0.07)$
    & \mroundprec{72.79999542236328}
    & \mroundprec{91.63999938964844}
    \\
    \hline

    Best $\lalign$ and $\lunif$ only
    & $3 \cdot \lalign(\alpha\narroweq2) + \lunif(t\narroweq3)$
    & \textbf{\mroundprec{74.5999984741211}}
    & \textbf{\mroundprec{92.73999786376953}}
    \\
    \hline

    Best among all encoders
    & $3 \cdot \lalign(\alpha\narroweq2) + \lunif(t\narroweq3)$
    & \textbf{\mroundprec{74.5999984741211}}
    & \textbf{\mroundprec{92.73999786376953}}
    \\



    \hline
\end{tabular}%
}%
\vspace{-5pt}%
\caption{\imagenetsubset encoder evaluations. Numbers show validation set accuracies of linear classifiers trained on encoder penultimate layer activations. The encoders are trained using MoCo-based methods. The best result is picked based on \textrm{top1} accuracy from a $3$-fold training set cross validation, among all $45$ encoders trained from scratch with $128$-dimensional output and $128$ batch size. }
\label{tbl:expr_imagenet100}

\end{table*}
\begin{table*}[t!]%
    \vspace{-1pt}
    \centering

\newcommand{\factor}{0.95}
\resizebox{
  0.9\width
}{!}{%
\small%
\centering%
\newcommand{\mround}[1]{\round{#1}{4}}%
\newcommand{\mroundprec}[1]{\round{#1}{2}\%}%
\renewcommand{\arraystretch}{1.4}%
\begin{tabular}{|c||c|c||c|c|}
    \hline
    &  \multicolumn{2}{c||}{\moviereview Classification}
    &  \multicolumn{2}{c|}{\customerreview Classification}
    \\
    \cline{2-5}
    &  \multirow{2}{*}{Loss Formula}
    &  \multirow{2}{*}{\vspace{-2pt}\shortstack{Val.~Set\\Accuracy $\uparrow$}}
    &  \multirow{2}{*}{Loss Formula}
    &  \multirow{2}{*}{\vspace{-2pt}\shortstack{Val.~Set\\Accuracy $\uparrow$}}
    \\
    & & & &
    \\
    \hline\hline

    Best $\lcontr$ only
    & $\lcontr(\tau\narroweq0.075)$
    & \textbf{\mroundprec{77.50702905342081}}
    & $\lcontr(\tau\narroweq0.05)$
    & \textbf{\mroundprec{83.86243386243386}}
    \\
    \hline

    Best $\lalign$ and $\lunif$ only
    & $0.9 \cdot \lalign(\alpha\narroweq2) + 0.1 \cdot \lunif(t\narroweq5)$
    & \mroundprec{73.75820056232428}
    & $0.9 \cdot \lalign(\alpha\narroweq2) + 0.1 \cdot \lunif(t\narroweq5)$
    & \mroundprec{80.95238095238095}
    \\
    \hline

    Best among all encoders
    & $\lcontr(\tau\narroweq0.075)$
    & \textbf{\mroundprec{77.50702905342081}}
    & $\lcontr(\tau\narroweq0.05)$
    & \textbf{\mroundprec{83.86243386243386}}
    \\



    \hline
\end{tabular}%
}%
\vspace{-5pt}%
\caption{%
\bookcorpus encoder evaluations. Numbers show Movie Review Sentence Polarity (\moviereview) and Customer Product Sentiment (\customerreview) validation set classification accuracies of logistic classifiers fit on encoder outputs. The encoders are trained using Quick-Thought-Vectors-based methods. The best result is picked based on accuracy from a $5$-fold training set cross validation, individually for \moviereview and \customerreview, among all $108$ encoders trained from scratch with $1200$-dimensional output and $400$ batch size.}
\label{tbl:expr_bookcorpus}

    \vspace{-7pt}
\end{table*}
}

\begin{table}[h]%
    \centering

\newcommand{\factor}{0.95}
\resizebox{
  0.9\width
}{!}{%
\small
\centering
\newcommand{\mround}[1]{\round{#1}{4}}%
\newcommand{\mroundprec}[1]{\round{#1}{2}\%}%
\renewcommand{\arraystretch}{1.4}%
\begin{tabular}{|c|c|}
    \hline
    Loss Formula
    & Validation Set \textrm{top1} Accuracy $\uparrow$ \\
    \hline\hline

    \multirow{2}{*}{\shortstack{$\lcontr(\tau\narroweq0.2)$\\(MoCo v2 \citet{chen2020improved})}}
    & \multirow{2}{*}{$67.5\% \pm 0.1\%$}
    \\
    & \\
    \hline

    $3 \cdot \lalign(\alpha\narroweq2) + \lunif(t\narroweq3)$
    & \textbf{\mroundprec{67.69400024414062}}
    \\
    \hline
\end{tabular}%
}%
\vspace{-5pt}%
\caption{\imagenet encoder evaluations with MoCo v2, and its variant with $\lalign$ and $\lunif$. MoCo v2 results are from the MoCo v2 official implementation \citep{chen2020mocov2github}, with mean and standard deviation across $5$ runs. Both settings use $200$ epochs of unsupervised training.}
\label{tbl:expr_imagenet}

    \vspace{-5pt}
\end{table}

\section{Discussion}

\emph{Alignment} and \emph{uniformity} are often alluded to as motivations for representation learning methods (see Figure~\ref{fig:align_unif}). However, a thorough understanding of these properties is lacking in the literature.

Are they in fact related to the representation learning methods? Do they actually agree with the representation quality (measured by downstream task performance)?

In this work, we have presented a detailed investigation on the relation between these properties and the popular paradigm of contrastive representation learning. Through theoretical analysis and extensive experiments, we are able to relate the contrastive loss with the alignment and uniformity properties, and confirm their strong connection with downstream task performances. Remarkably, we have revealed that directly optimizing our proposed metrics often leads to representations of better quality.

Below we summarize several suggestions for future work.

\paragraph{Niceness of the unit hypersphere.} Our analysis was based on the empirical observation that representations are often $\ell_2$ normalized. Existing works have motivated this choice from a manifold mapping perspective \citep{liu2017sphereface,s-vae18} and computation stability \citep{xu2018spherical,wang2017normface}. However, to our best knowledge, the question of why the unit hypersphere is a nice feature space is not yet rigorously answered. One possible direction is to formalize the intuition that connected sets with smooth boundaries are nearly linearly separable in the hyperspherical geometry (see Figure~\ref{fig:hypersphere_linsep}), since linear separability is one of the most widely used criteria for representation quality and is related to the notion of disentanglement \citep{higgins2018towards}.

\paragraph{Beyond contrastive learning.} Our analysis focused on the relationship between contrastive learning and the alignment and uniformity properties on the unit hypersphere. However, the ubiquitous presence of $\ell_2$ normalization in the representation learning literature suggests that the connection may be more general. In fact, several existing empirical methods are directly related to uniformity on the hypersphere \citep{bojanowski2017unsupervised,s-vae18,xu2018spherical}. We believe that relating a broader class of representations to uniformity and/or alignment on the hypersphere will provide novel insights and lead to better empirical algorithms.

\clearpage
\newpage
\section*{Acknowledgements}
We thank Philip Bachman, Ching-Yao Chuang, Justin Solomon, Yonglong Tian, and Zhenyang Zhang for many helpful comments and suggestions. Tongzhou Wang was supported by the MIT EECS Merrill Lynch Graduate Fellowship. We thank Yangjun Ruan for helping us realize a minor issue with STL-10 scatter plot (\Cref{fig:expr_scatter}, now fixed).

\section*{Major Changelog}
\paragraph{8/24/2020:}
\begin{itemize}[leftmargin=12pt]
    \item Added results on full ImageNet and MoCo v2.
\end{itemize}
\paragraph{11/6/2020:}
\begin{itemize}[leftmargin=12pt]
    \item Added discussions on the range of $\lunif$.
    \item Corrected Theorem~\ref{thm:asym_inf_negatives}'s convergence rate to $\mathcal{O}(M^{-1/2})$.
\end{itemize}
\paragraph{8/15/2022:}
\begin{itemize}[leftmargin=12pt]
    \item Removed from \Cref{fig:expr_scatter,supp:tbl:stl10-big} two STL-10 encoders that should not be included due to their usage of other regularizers (not shown). This does not affect the observed relation among $\lalign$, $\lunif$, and downstream performance. All other text and discussions stay unchanged.
\end{itemize}
{\small
\bibliography{reference}

\begin{thebibliography}{63}
\providecommand{\natexlab}[1]{#1}
\providecommand{\url}[1]{\texttt{#1}}
\expandafter\ifx\csname urlstyle\endcsname\relax
  \providecommand{\doi}[1]{doi: #1}\else
  \providecommand{\doi}{doi: \begingroup \urlstyle{rm}\Url}\fi

\bibitem[Ahmad \& Lin(1976)Ahmad and Lin]{ahmad1976nonparametric}
Ahmad, I. and Lin, P.-E.
\newblock A nonparametric estimation of the entropy for absolutely continuous
  distributions (corresp.).
\newblock \emph{IEEE Transactions on Information Theory}, 22\penalty0
  (3):\penalty0 372--375, 1976.

\bibitem[Bachman et~al.(2019)Bachman, Hjelm, and
  Buchwalter]{bachman2019learning}
Bachman, P., Hjelm, R.~D., and Buchwalter, W.
\newblock Learning representations by maximizing mutual information across
  views.
\newblock In \emph{Advances in Neural Information Processing Systems}, pp.\
  15509--15519, 2019.

\bibitem[Bengio et~al.()]{bengio2003quick}
Bengio, Y. et~al.
\newblock Quick training of probabilistic neural nets by importance sampling.

\bibitem[Bochner(1992)]{bochner1992monotone}
Bochner, S.
\newblock Monotone funktionen, stieltjessche integrale und harmonische analyse.
\newblock \emph{Collected Papers of Salomon Bochner}, 2:\penalty0 87, 1992.

\bibitem[Bojanowski \& Joulin(2017)Bojanowski and
  Joulin]{bojanowski2017unsupervised}
Bojanowski, P. and Joulin, A.
\newblock Unsupervised learning by predicting noise.
\newblock In \emph{Proceedings of the 34th International Conference on Machine
  Learning-Volume 70}, pp.\  517--526. JMLR. org, 2017.

\bibitem[Borodachov et~al.(2019)Borodachov, Hardin, and
  Saff]{borodachov2019discrete}
Borodachov, S.~V., Hardin, D.~P., and Saff, E.~B.
\newblock \emph{Discrete energy on rectifiable sets}.
\newblock Springer, 2019.

\bibitem[Chen et~al.(2018)Chen, Si, Kumar, Li, and Hsieh]{chen2018learning}
Chen, P.~H., Si, S., Kumar, S., Li, Y., and Hsieh, C.-J.
\newblock Learning to screen for fast softmax inference on large vocabulary
  neural networks.
\newblock 2018.

\bibitem[Chen et~al.(2020{\natexlab{a}})Chen, Kornblith, Norouzi, and
  Hinton]{chen2020simple}
Chen, T., Kornblith, S., Norouzi, M., and Hinton, G.
\newblock A simple framework for contrastive learning of visual
  representations.
\newblock \emph{arXiv preprint arXiv:2002.05709}, 2020{\natexlab{a}}.

\bibitem[Chen et~al.(2020{\natexlab{b}})Chen, Fan, Girshick, and
  He]{chen2020improved}
Chen, X., Fan, H., Girshick, R., and He, K.
\newblock Improved baselines with momentum contrastive learning.
\newblock \emph{arXiv preprint arXiv:2003.04297}, 2020{\natexlab{b}}.

\bibitem[Chen et~al.(2020{\natexlab{c}})Chen, Fan, Girshick, and
  He]{chen2020mocov2github}
Chen, X., Fan, H., Girshick, R., and He, K.
\newblock Improved baselines with momentum contrastive learning.
\newblock GitHub repository
  \url{https://github.com/facebookresearch/moco/tree/78b69cafae80bc74cd1a89ac3fb365dc20d157d3},
  2020{\natexlab{c}}.

\bibitem[Cho et~al.(2014)Cho, van Merri{\"e}nboer, Gulcehre, Bahdanau,
  Bougares, Schwenk, and Bengio]{cho-etal-2014-learning}
Cho, K., van Merri{\"e}nboer, B., Gulcehre, C., Bahdanau, D., Bougares, F.,
  Schwenk, H., and Bengio, Y.
\newblock Learning phrase representations using {RNN} encoder{--}decoder for
  statistical machine translation.
\newblock In \emph{Proceedings of the 2014 Conference on Empirical Methods in
  Natural Language Processing ({EMNLP})}, pp.\  1724--1734, Doha, Qatar,
  October 2014. Association for Computational Linguistics.
\newblock \doi{10.3115/v1/D14-1179}.

\bibitem[Chopra et~al.(2005)Chopra, Hadsell, and LeCun]{chopra2005learning}
Chopra, S., Hadsell, R., and LeCun, Y.
\newblock Learning a similarity metric discriminatively, with application to
  face verification.
\newblock In \emph{2005 IEEE Computer Society Conference on Computer Vision and
  Pattern Recognition (CVPR'05)}, volume~1, pp.\  539--546. IEEE, 2005.

\bibitem[Coates et~al.(2011)Coates, Ng, and Lee]{coates2011stl10}
Coates, A., Ng, A., and Lee, H.
\newblock An analysis of single-layer networks in unsupervised feature
  learning.
\newblock In \emph{Proceedings of the fourteenth international conference on
  artificial intelligence and statistics}, pp.\  215--223, 2011.

\bibitem[Cohn \& Kumar(2007)Cohn and Kumar]{cohn2007universally}
Cohn, H. and Kumar, A.
\newblock Universally optimal distribution of points on spheres.
\newblock \emph{Journal of the American Mathematical Society}, 20\penalty0
  (1):\penalty0 99--148, 2007.

\bibitem[Davidson et~al.(2018)Davidson, Falorsi, De~Cao, Kipf, and
  Tomczak]{s-vae18}
Davidson, T.~R., Falorsi, L., De~Cao, N., Kipf, T., and Tomczak, J.~M.
\newblock Hyperspherical variational auto-encoders.
\newblock \emph{34th Conference on Uncertainty in Artificial Intelligence
  (UAI-18)}, 2018.

\bibitem[Goodman(2001)]{goodman2001classes}
Goodman, J.
\newblock Classes for fast maximum entropy training.
\newblock In \emph{2001 IEEE International Conference on Acoustics, Speech, and
  Signal Processing. Proceedings (Cat. No. 01CH37221)}, volume~1, pp.\
  561--564. IEEE, 2001.

\bibitem[G{\"o}tz \& Saff(2001)G{\"o}tz and Saff]{gotz2001note}
G{\"o}tz, M. and Saff, E.~B.
\newblock Note on d—extremal configurations for the sphere in r d+1.
\newblock In \emph{Recent Progress in Multivariate Approximation}, pp.\
  159--162. Springer, 2001.

\bibitem[Goyal et~al.(2017)Goyal, Doll{\'a}r, Girshick, Noordhuis, Wesolowski,
  Kyrola, Tulloch, Jia, and He]{goyal2017accurate}
Goyal, P., Doll{\'a}r, P., Girshick, R., Noordhuis, P., Wesolowski, L., Kyrola,
  A., Tulloch, A., Jia, Y., and He, K.
\newblock Accurate, large minibatch sgd: Training imagenet in 1 hour.
\newblock \emph{arXiv preprint arXiv:1706.02677}, 2017.

\bibitem[Grave et~al.(2017)Grave, Joulin, Ciss{\'e}, J{\'e}gou,
  et~al.]{grave2017efficient}
Grave, E., Joulin, A., Ciss{\'e}, M., J{\'e}gou, H., et~al.
\newblock Efficient softmax approximation for gpus.
\newblock In \emph{Proceedings of the 34th International Conference on Machine
  Learning-Volume 70}, pp.\  1302--1310. JMLR. org, 2017.

\bibitem[Gutmann \& Hyv{\"a}rinen(2010)Gutmann and
  Hyv{\"a}rinen]{gutmann2010noise}
Gutmann, M. and Hyv{\"a}rinen, A.
\newblock Noise-contrastive estimation: A new estimation principle for
  unnormalized statistical models.
\newblock In \emph{Proceedings of the Thirteenth International Conference on
  Artificial Intelligence and Statistics}, pp.\  297--304, 2010.

\bibitem[Hadsell et~al.(2006)Hadsell, Chopra, and
  LeCun]{hadsell2006dimensionality}
Hadsell, R., Chopra, S., and LeCun, Y.
\newblock Dimensionality reduction by learning an invariant mapping.
\newblock In \emph{2006 IEEE Computer Society Conference on Computer Vision and
  Pattern Recognition (CVPR'06)}, volume~2, pp.\  1735--1742. IEEE, 2006.

\bibitem[Hardin \& Saff(2005)Hardin and Saff]{hardin2005minimal}
Hardin, D. and Saff, E.
\newblock Minimal riesz energy point configurations for rectifiable
  d-dimensional manifolds.
\newblock \emph{Advances in Mathematics}, 193\penalty0 (1):\penalty0 174--204,
  2005.

\bibitem[Hasnat et~al.(2017)Hasnat, Bohn{\'e}, Milgram, Gentric, Chen,
  et~al.]{hasnat2017mises}
Hasnat, M., Bohn{\'e}, J., Milgram, J., Gentric, S., Chen, L., et~al.
\newblock von mises-fisher mixture model-based deep learning: Application to
  face verification.
\newblock \emph{arXiv preprint arXiv:1706.04264}, 2017.

\bibitem[He et~al.(2016)He, Zhang, Ren, and Sun]{he2016deep}
He, K., Zhang, X., Ren, S., and Sun, J.
\newblock Deep residual learning for image recognition.
\newblock In \emph{Proceedings of the IEEE conference on computer vision and
  pattern recognition}, pp.\  770--778, 2016.

\bibitem[He et~al.(2019)He, Fan, Wu, Xie, and Girshick]{he2019momentum}
He, K., Fan, H., Wu, Y., Xie, S., and Girshick, R.
\newblock Momentum contrast for unsupervised visual representation learning.
\newblock \emph{arXiv preprint arXiv:1911.05722}, 2019.

\bibitem[H{\'e}naff et~al.(2019)H{\'e}naff, Razavi, Doersch, Eslami, and
  Oord]{henaff2019data}
H{\'e}naff, O.~J., Razavi, A., Doersch, C., Eslami, S., and Oord, A. v.~d.
\newblock Data-efficient image recognition with contrastive predictive coding.
\newblock \emph{arXiv preprint arXiv:1905.09272}, 2019.

\bibitem[Higgins et~al.(2018)Higgins, Amos, Pfau, Racaniere, Matthey, Rezende,
  and Lerchner]{higgins2018towards}
Higgins, I., Amos, D., Pfau, D., Racaniere, S., Matthey, L., Rezende, D., and
  Lerchner, A.
\newblock Towards a definition of disentangled representations.
\newblock \emph{arXiv preprint arXiv:1812.02230}, 2018.

\bibitem[Hjelm et~al.(2018)Hjelm, Fedorov, Lavoie-Marchildon, Grewal, Bachman,
  Trischler, and Bengio]{hjelm2018learning}
Hjelm, R.~D., Fedorov, A., Lavoie-Marchildon, S., Grewal, K., Bachman, P.,
  Trischler, A., and Bengio, Y.
\newblock Learning deep representations by mutual information estimation and
  maximization.
\newblock \emph{arXiv preprint arXiv:1808.06670}, 2018.

\bibitem[Hou et~al.(2019)Hou, Pan, Loy, Wang, and Lin]{hou2019learning}
Hou, S., Pan, X., Loy, C.~C., Wang, Z., and Lin, D.
\newblock Learning a unified classifier incrementally via rebalancing.
\newblock In \emph{Proceedings of the IEEE Conference on Computer Vision and
  Pattern Recognition}, pp.\  831--839, 2019.

\bibitem[Ioffe \& Szegedy(2015)Ioffe and Szegedy]{ioffe2015batch}
Ioffe, S. and Szegedy, C.
\newblock Batch normalization: Accelerating deep network training by reducing
  internal covariate shift.
\newblock In \emph{International Conference on Machine Learning}, pp.\
  448--456, 2015.

\bibitem[Kingma \& Ba(2014)Kingma and Ba]{kingma2014adam}
Kingma, D.~P. and Ba, J.
\newblock Adam: A method for stochastic optimization.
\newblock \emph{arXiv preprint arXiv:1412.6980}, 2014.

\bibitem[Kiros et~al.(2015)Kiros, Zhu, Salakhutdinov, Zemel, Torralba, Urtasun,
  and Fidler]{kiros2015skip}
Kiros, R., Zhu, Y., Salakhutdinov, R., Zemel, R.~S., Torralba, A., Urtasun, R.,
  and Fidler, S.
\newblock Skip-thought vectors.
\newblock \emph{arXiv preprint arXiv:1506.06726}, 2015.

\bibitem[Kobayashi(2019)]{sosuke2019bookcorpusgithub}
Kobayashi, S.
\newblock Homemade bookcorpus.
\newblock GitHub repository
  \url{https://github.com/soskek/bookcorpus/tree/5fe0cec8d7fd83940e48c799739496dc68ab2798},
  2019.

\bibitem[Krizhevsky et~al.(2009)Krizhevsky, Hinton,
  et~al.]{krizhevsky2009learning}
Krizhevsky, A., Hinton, G., et~al.
\newblock Learning multiple layers of features from tiny images.
\newblock 2009.

\bibitem[Krizhevsky et~al.(2012)Krizhevsky, Sutskever, and
  Hinton]{krizhevsky2012imagenet}
Krizhevsky, A., Sutskever, I., and Hinton, G.~E.
\newblock Imagenet classification with deep convolutional neural networks.
\newblock In \emph{Advances in neural information processing systems}, pp.\
  1097--1105, 2012.

\bibitem[Landkof(1972)]{landkof1972foundations}
Landkof, N.~S.
\newblock \emph{Foundations of modern potential theory}, volume 180.
\newblock Springer, 1972.

\bibitem[Linsker(1988)]{linsker1988self}
Linsker, R.
\newblock Self-organization in a perceptual network.
\newblock \emph{Computer}, 21\penalty0 (3):\penalty0 105--117, 1988.

\bibitem[Liu et~al.(2017)Liu, Wen, Yu, Li, Raj, and Song]{liu2017sphereface}
Liu, W., Wen, Y., Yu, Z., Li, M., Raj, B., and Song, L.
\newblock Sphereface: Deep hypersphere embedding for face recognition.
\newblock In \emph{Proceedings of the IEEE conference on computer vision and
  pattern recognition}, pp.\  212--220, 2017.

\bibitem[Liu et~al.(2018)Liu, Lin, Liu, Liu, Yu, Dai, and
  Song]{liu2018learning}
Liu, W., Lin, R., Liu, Z., Liu, L., Yu, Z., Dai, B., and Song, L.
\newblock Learning towards minimum hyperspherical energy.
\newblock In \emph{Advances in Neural Information Processing Systems}, pp.\
  6222--6233. 2018.

\bibitem[Logeswaran \& Lee(2018)Logeswaran and Lee]{logeswaran2018efficient}
Logeswaran, L. and Lee, H.
\newblock An efficient framework for learning sentence representations.
\newblock In \emph{International Conference on Learning Representations}, 2018.

\bibitem[Mettes et~al.(2019)Mettes, van~der Pol, and
  Snoek]{mettes2019hyperspherical}
Mettes, P., van~der Pol, E., and Snoek, C.
\newblock Hyperspherical prototype networks.
\newblock In \emph{Advances in Neural Information Processing Systems}, pp.\
  1485--1495, 2019.

\bibitem[Nathan~Silberman \& Fergus(2012)Nathan~Silberman and
  Fergus]{Silberman:ECCV12}
Nathan~Silberman, Derek~Hoiem, P.~K. and Fergus, R.
\newblock Indoor segmentation and support inference from rgbd images.
\newblock In \emph{ECCV}, 2012.

\bibitem[Oord et~al.(2018)Oord, Li, and Vinyals]{oord2018representation}
Oord, A. v.~d., Li, Y., and Vinyals, O.
\newblock Representation learning with contrastive predictive coding.
\newblock \emph{arXiv preprint arXiv:1807.03748}, 2018.

\bibitem[Pang \& Lee(2005)Pang and Lee]{pang2005seeing}
Pang, B. and Lee, L.
\newblock Seeing stars: Exploiting class relationships for sentiment
  categorization with respect to rating scales.
\newblock In \emph{Proceedings of the 43rd annual meeting on association for
  computational linguistics}, pp.\  115--124. Association for Computational
  Linguistics, 2005.

\bibitem[Parkhi et~al.(2015)Parkhi, Vedaldi, and Zisserman]{parkhi2015deep}
Parkhi, O.~M., Vedaldi, A., and Zisserman, A.
\newblock Deep face recognition.
\newblock 2015.

\bibitem[Paszke et~al.(2019)Paszke, Gross, Massa, Lerer, Bradbury, Chanan,
  Killeen, Lin, Gimelshein, Antiga, Desmaison, Kopf, Yang, DeVito, Raison,
  Tejani, Chilamkurthy, Steiner, Fang, Bai, and Chintala]{paszke2019pytorch}
Paszke, A., Gross, S., Massa, F., Lerer, A., Bradbury, J., Chanan, G., Killeen,
  T., Lin, Z., Gimelshein, N., Antiga, L., Desmaison, A., Kopf, A., Yang, E.,
  DeVito, Z., Raison, M., Tejani, A., Chilamkurthy, S., Steiner, B., Fang, L.,
  Bai, J., and Chintala, S.
\newblock Pytorch: An imperative style, high-performance deep learning library.
\newblock In Wallach, H., Larochelle, H., Beygelzimer, A., d\textquotesingle
  Alch\'{e}-Buc, F., Fox, E., and Garnett, R. (eds.), \emph{Advances in Neural
  Information Processing Systems 32}, pp.\  8026--8037. 2019.

\bibitem[Saunshi et~al.(2019)Saunshi, Plevrakis, Arora, Khodak, and
  Khandeparkar]{saunshi2019theoretical}
Saunshi, N., Plevrakis, O., Arora, S., Khodak, M., and Khandeparkar, H.
\newblock A theoretical analysis of contrastive unsupervised representation
  learning.
\newblock In \emph{International Conference on Machine Learning}, pp.\
  5628--5637, 2019.

\bibitem[Schroff et~al.(2015)Schroff, Kalenichenko, and
  Philbin]{schroff2015facenet}
Schroff, F., Kalenichenko, D., and Philbin, J.
\newblock Facenet: A unified embedding for face recognition and clustering.
\newblock In \emph{Proceedings of the IEEE conference on computer vision and
  pattern recognition}, pp.\  815--823, 2015.

\bibitem[Serfozo(1982)]{serfozo1982convergence}
Serfozo, R.
\newblock Convergence of lebesgue integrals with varying measures.
\newblock \emph{Sankhy{\=a}: The Indian Journal of Statistics, Series A}, pp.\
  380--402, 1982.

\bibitem[Stewart(1976)]{stewart1976positive}
Stewart, J.
\newblock Positive definite functions and generalizations, an historical
  survey.
\newblock \emph{The Rocky Mountain Journal of Mathematics}, 6\penalty0
  (3):\penalty0 409--434, 1976.

\bibitem[Tammes(1930)]{tammes1930origin}
Tammes, P. M.~L.
\newblock On the origin of number and arrangement of the places of exit on the
  surface of pollen-grains.
\newblock \emph{Recueil des travaux botaniques n{\'e}erlandais}, 27\penalty0
  (1):\penalty0 1--84, 1930.

\bibitem[Thomson(1904)]{thomson1904xxiv}
Thomson, J.~J.
\newblock Xxiv. on the structure of the atom: an investigation of the stability
  and periods of oscillation of a number of corpuscles arranged at equal
  intervals around the circumference of a circle; with application of the
  results to the theory of atomic structure.
\newblock \emph{The London, Edinburgh, and Dublin Philosophical Magazine and
  Journal of Science}, 7\penalty0 (39):\penalty0 237--265, 1904.

\bibitem[Tian(2019)]{tian2019cmcgithub}
Tian, Y.
\newblock Contrastive multiview coding.
\newblock GitHub repository
  \url{https://github.com/HobbitLong/CMC/tree/58d06e9a82f7fea2e4af0a251726e9c6bf67c7c9},
  2019.

\bibitem[Tian et~al.(2019)Tian, Krishnan, and Isola]{tian2019contrastive}
Tian, Y., Krishnan, D., and Isola, P.
\newblock Contrastive multiview coding.
\newblock \emph{arXiv preprint arXiv:1906.05849}, 2019.

\bibitem[Torralba et~al.(2008)Torralba, Fergus, and Freeman]{torralba200880}
Torralba, A., Fergus, R., and Freeman, W.~T.
\newblock 80 million tiny images: A large data set for nonparametric object and
  scene recognition.
\newblock \emph{IEEE transactions on pattern analysis and machine
  intelligence}, 30\penalty0 (11):\penalty0 1958--1970, 2008.

\bibitem[Tschannen et~al.(2019)Tschannen, Djolonga, Rubenstein, Gelly, and
  Lucic]{tschannen2019mutual}
Tschannen, M., Djolonga, J., Rubenstein, P.~K., Gelly, S., and Lucic, M.
\newblock On mutual information maximization for representation learning.
\newblock \emph{arXiv preprint arXiv:1907.13625}, 2019.

\bibitem[Virtanen et~al.(2020)Virtanen, Gommers, Oliphant, Haberland, Reddy,
  Cournapeau, Burovski, Peterson, Weckesser, Bright, et~al.]{virtanen2020scipy}
Virtanen, P., Gommers, R., Oliphant, T.~E., Haberland, M., Reddy, T.,
  Cournapeau, D., Burovski, E., Peterson, P., Weckesser, W., Bright, J., et~al.
\newblock Scipy 1.0: fundamental algorithms for scientific computing in python.
\newblock \emph{Nature methods}, 17\penalty0 (3):\penalty0 261--272, 2020.

\bibitem[Wang et~al.(2017)Wang, Xiang, Cheng, and Yuille]{wang2017normface}
Wang, F., Xiang, X., Cheng, J., and Yuille, A.~L.
\newblock Normface: L2 hypersphere embedding for face verification.
\newblock In \emph{Proceedings of the 25th ACM international conference on
  Multimedia}, pp.\  1041--1049, 2017.

\bibitem[Wang \& Manning(2012)Wang and Manning]{wang2012baselines}
Wang, S. and Manning, C.~D.
\newblock Baselines and bigrams: Simple, good sentiment and topic
  classification.
\newblock In \emph{Proceedings of the 50th annual meeting of the association
  for computational linguistics: Short papers-volume 2}, pp.\  90--94.
  Association for Computational Linguistics, 2012.

\bibitem[Wu et~al.(2020)Wu, Zhuang, Yamins, and Goodman]{wu2020importance}
Wu, M., Zhuang, C., Yamins, D., and Goodman, N.
\newblock On the importance of views in unsupervised representation learning.
\newblock 2020.

\bibitem[Wu et~al.(2018)Wu, Xiong, Yu, and Lin]{wu2018unsupervised}
Wu, Z., Xiong, Y., Yu, S.~X., and Lin, D.
\newblock Unsupervised feature learning via non-parametric instance
  discrimination.
\newblock In \emph{Proceedings of the IEEE Conference on Computer Vision and
  Pattern Recognition}, pp.\  3733--3742, 2018.

\bibitem[Xu \& Durrett(2018)Xu and Durrett]{xu2018spherical}
Xu, J. and Durrett, G.
\newblock Spherical latent spaces for stable variational autoencoders.
\newblock In \emph{Proceedings of the 2018 Conference on Empirical Methods in
  Natural Language Processing}, pp.\  4503--4513, 2018.

\bibitem[Zhu et~al.(2015)Zhu, Kiros, Zemel, Salakhutdinov, Urtasun, Torralba,
  and Fidler]{moviebook}
Zhu, Y., Kiros, R., Zemel, R., Salakhutdinov, R., Urtasun, R., Torralba, A.,
  and Fidler, S.
\newblock Aligning books and movies: Towards story-like visual explanations by
  watching movies and reading books.
\newblock In \emph{arXiv preprint arXiv:1506.06724}, 2015.

\end{thebibliography}


\begin{thebibliography}{15}
\providecommand{\natexlab}[1]{#1}
\providecommand{\url}[1]{\texttt{#1}}
\expandafter\ifx\csname urlstyle\endcsname\relax
  \providecommand{\doi}[1]{doi: #1}\else
  \providecommand{\doi}{doi: \begingroup \urlstyle{rm}\Url}\fi

\bibitem[Bachman et~al.(2019)Bachman, Hjelm, and
  Buchwalter]{bachman2019learning}
Bachman, P., Hjelm, R.~D., and Buchwalter, W.
\newblock Learning representations by maximizing mutual information across
  views.
\newblock In \emph{Advances in Neural Information Processing Systems}, pp.\
  15509--15519, 2019.

\bibitem[Bochner(1992)]{bochner1992monotone}
Bochner, S.
\newblock Monotone funktionen, stieltjessche integrale und harmonische analyse.
\newblock \emph{Collected Papers of Salomon Bochner}, 2:\penalty0 87, 1992.

\bibitem[Borodachov et~al.(2019)Borodachov, Hardin, and
  Saff]{borodachov2019discrete}
Borodachov, S.~V., Hardin, D.~P., and Saff, E.~B.
\newblock \emph{Discrete energy on rectifiable sets}.
\newblock Springer, 2019.

\bibitem[Goyal et~al.(2017)Goyal, Doll{\'a}r, Girshick, Noordhuis, Wesolowski,
  Kyrola, Tulloch, Jia, and He]{goyal2017accurate}
Goyal, P., Doll{\'a}r, P., Girshick, R., Noordhuis, P., Wesolowski, L., Kyrola,
  A., Tulloch, A., Jia, Y., and He, K.
\newblock Accurate, large minibatch sgd: Training imagenet in 1 hour.
\newblock \emph{arXiv preprint arXiv:1706.02677}, 2017.

\bibitem[He et~al.(2019)He, Fan, Wu, Xie, and Girshick]{he2019momentum}
He, K., Fan, H., Wu, Y., Xie, S., and Girshick, R.
\newblock Momentum contrast for unsupervised visual representation learning.
\newblock \emph{arXiv preprint arXiv:1911.05722}, 2019.

\bibitem[H{\'e}naff et~al.(2019)H{\'e}naff, Razavi, Doersch, Eslami, and
  Oord]{henaff2019data}
H{\'e}naff, O.~J., Razavi, A., Doersch, C., Eslami, S., and Oord, A. v.~d.
\newblock Data-efficient image recognition with contrastive predictive coding.
\newblock \emph{arXiv preprint arXiv:1905.09272}, 2019.

\bibitem[Hjelm et~al.(2018)Hjelm, Fedorov, Lavoie-Marchildon, Grewal, Bachman,
  Trischler, and Bengio]{hjelm2018learning}
Hjelm, R.~D., Fedorov, A., Lavoie-Marchildon, S., Grewal, K., Bachman, P.,
  Trischler, A., and Bengio, Y.
\newblock Learning deep representations by mutual information estimation and
  maximization.
\newblock \emph{arXiv preprint arXiv:1808.06670}, 2018.

\bibitem[Ioffe \& Szegedy(2015)Ioffe and Szegedy]{ioffe2015batch}
Ioffe, S. and Szegedy, C.
\newblock Batch normalization: Accelerating deep network training by reducing
  internal covariate shift.
\newblock In \emph{International Conference on Machine Learning}, pp.\
  448--456, 2015.

\bibitem[Kingma \& Ba(2014)Kingma and Ba]{kingma2014adam}
Kingma, D.~P. and Ba, J.
\newblock Adam: A method for stochastic optimization.
\newblock \emph{arXiv preprint arXiv:1412.6980}, 2014.

\bibitem[Nathan~Silberman \& Fergus(2012)Nathan~Silberman and
  Fergus]{Silberman:ECCV12}
Nathan~Silberman, Derek~Hoiem, P.~K. and Fergus, R.
\newblock Indoor segmentation and support inference from rgbd images.
\newblock In \emph{ECCV}, 2012.

\bibitem[Oord et~al.(2018)Oord, Li, and Vinyals]{oord2018representation}
Oord, A. v.~d., Li, Y., and Vinyals, O.
\newblock Representation learning with contrastive predictive coding.
\newblock \emph{arXiv preprint arXiv:1807.03748}, 2018.

\bibitem[Serfozo(1982)]{serfozo1982convergence}
Serfozo, R.
\newblock Convergence of lebesgue integrals with varying measures.
\newblock \emph{Sankhy{\=a}: The Indian Journal of Statistics, Series A}, pp.\
  380--402, 1982.

\bibitem[Stewart(1976)]{stewart1976positive}
Stewart, J.
\newblock Positive definite functions and generalizations, an historical
  survey.
\newblock \emph{The Rocky Mountain Journal of Mathematics}, 6\penalty0
  (3):\penalty0 409--434, 1976.

\bibitem[Tian et~al.(2019)Tian, Krishnan, and Isola]{tian2019contrastive}
Tian, Y., Krishnan, D., and Isola, P.
\newblock Contrastive multiview coding.
\newblock \emph{arXiv preprint arXiv:1906.05849}, 2019.

\bibitem[Wu et~al.(2018)Wu, Xiong, Yu, and Lin]{wu2018unsupervised}
Wu, Z., Xiong, Y., Yu, S.~X., and Lin, D.
\newblock Unsupervised feature learning via non-parametric instance
  discrimination.
\newblock In \emph{Proceedings of the IEEE Conference on Computer Vision and
  Pattern Recognition}, pp.\  3733--3742, 2018.

\end{thebibliography}
\bibliographystyle{icml2020}
}

\onecolumn
\appendix

\section{Proofs and Additional Theoretical Analysis}

In this section, we present proofs for propositions and theorems in main paper \Cref{sec:unif,sec:limit}.

The propositions in \Cref{sec:unif} illustrate the deep relations between the Gaussian kernel $G_t \colon \sphere^d \times \sphere^d \rightarrow \R$ and the uniform distribution on the unit hypersphere $\sphere^d$. As we will show below in \Cref{supp:sec:proof-prop-gauss-unif}, these properties directly follow well-known results on strictly positive definite kernels.

In \Cref{supp:sec:proofs-asymptotics}, we present a proof for \Cref{thm:asym_inf_negatives}. \Cref{thm:asym_inf_negatives} describes the asymptotic behavior of $\lcontr$ as the number of negative samples $M$ approaches infinity. The theorem is strongly related to empirical contrastive learning, given an error term (deviation from the limit) decaying in $\mathcal{O}(M^{-1/2})$ and that empirical practices often use a large number of negatives (\eg, $M=65536$ in \citet{he2019momentum}) based on the observation that using more negatives consistently leads to better representation quality \citep{wu2018unsupervised,tian2019contrastive,he2019momentum}. Our proof further reveals connections between $\lcontr$ and $\lunif$ which is defined via the Gaussian kernel.

Finally, also in \Cref{supp:sec:proofs-asymptotics}, we present a weaker result on the setting where only a single negative is used in $\lcontr$ (\ie, $M=1$).

\subsection{Proofs for \Cref{sec:unif} and Properties of $\lunif$} \label{supp:sec:proof-prop-gauss-unif}

To prove \Cref{prop:continuous-problem}~and~\ref{prop:discrete-problems}, we utilize the \emph{strict positive definiteness} \citep{bochner1992monotone,stewart1976positive} of the Gaussian kernel $G_t$: \begin{equation*}
    G_t(u, v) \trieq e^{-t \norm{u - v}_2^2} = e^{2t \cdot u\T v - 2t}, \quad t > 0.
\end{equation*} From there, we apply a known result about such kernels, from which the two propositions directly follow.

\begin{definition}[Strict positive definiteness \citep{bochner1992monotone,stewart1976positive}]
    A symmetric and lower semi-continuous kernel $K$ on $A \times A$ (where $A$ is infinite and compact) is called strictly positive definite if for every finite signed Borel measure $\mu$ supported on $A$ whose energy \begin{equation*}
        I_K[\mu] \trieq \int_{\sphere^d} \int_{\sphere^d} K(u, v) \diff \mu(v) \diff \mu(u)
    \end{equation*}
    is well defined, we have $I_K[\mu] \geq 0$, where equality holds only if $\mu \equiv 0$ on the $\sigma$-algebra of Borel subsets of $A$.
\end{definition}

\begin{definition}
    Let $\mathcal{M}(\sphere^{d})$ be the set of Borel probability measures on $\sphere^{d}$.
\end{definition}

We are now in the place to apply the following two well-known results, which we present by restating Proposition~4.4.1, Theorem~6.2.1 and Corollary~6.2.2 of \citet{borodachov2019discrete} in weaker forms. We refer readers to \citet{borodachov2019discrete} for their proofs.
\begin{lemma}[Strict positive definiteness of $G_t$] \label{supp:lemma-strict-pd-gaussian}
    For $t > 0$, the Gaussian kernel $G_t(u, v) \trieq e^{-t \norm{u - v}_2^2} = e^{2t \cdot u\T v - 2t}$ is strictly positive definite  on $\sphere^d \times \sphere^d$.
\end{lemma}

\begin{lemma}[Strictly positive definite kernels on $\sphere^d$] \label{supp:lemma:strict-pd-uniform}
    Consider kernel $K_f \colon \sphere^d \times \sphere^d \rightarrow (-\infty, +\infty]$ of the form, \begin{equation}
        K_f(u, v) \trieq f(\norm{u - v}_2^2).
    \end{equation}
    If $K_f$ is strictly positive definite on $\sphere^d \times \sphere^d $ and $I_{K_f}[\sigma_d]$ is finite, then $\sigma_d$ is the unique measure (on Borel subsets of $\sphere^d$) in the solution of $\min_{\mu \in \mathcal{M}(\sphere^d)} I_{K_f}[\mu]$, and the normalized counting measures associated with any $K_f$-energy minimizing sequence of $N$-point configurations on $\sphere^d$ converges weak$^*$ to $\sigma_d$.

    In particular, this conclusion holds whenever $f$ has the property that $-f'(t)$ is strictly completely monotone on $(0, 4]$ and $I_{K_f}[\sigma_d]$ is finite.
\end{lemma}

We now recall Propositions~\ref{prop:continuous-problem}~and~\ref{prop:discrete-problems}.
\begingroup
\def\theproposition{\ref{prop:continuous-problem}}
\begin{proposition}
    $\sigma_{d}$ is the unique solution (on Borel subsets of $\sphere^d$) of \begin{equation}
        \min_{\mu \in \mathcal{M}(\sphere^{d})} I_{G_t}[\mu] = \min_{\mu \in \mathcal{M}(\sphere^{d})} \int_{\sphere^d} \int_{\sphere^d} G_t(u, v) \diff \mu(v) \diff \mu(u). \label{supp:prop:continuous-problem-solution}
    \end{equation}
\end{proposition}
\addtocounter{proposition}{-1}
\endgroup
\begin{proof}[Proof of \Cref{prop:continuous-problem}]
    This is a direct consequence of \Cref{supp:lemma-strict-pd-gaussian,supp:lemma:strict-pd-uniform}.
\end{proof}

\begingroup
\def\theproposition{\ref{prop:discrete-problems}}
\begin{proposition}
    For each $N > 0$, the $N$ point minimizer of the average pairwise potential is \begin{equation*}
        \mathbf{u}^*_N = \argmin_{u_1, u_2, \dots, u_N \in \sphere^{d}} \sum_{1 \leq i < j \leq N} G_t(u_i, u_j).
    \end{equation*}
    The normalized counting measures associated with the $\{\mathbf{u}^*_N\}_{N=1}^\infty$ sequence converge weak$^*$ to $\sigma_d$.
\end{proposition}
\addtocounter{proposition}{-1}
\endgroup
\begin{proof}[Proof of \Cref{prop:discrete-problems}]
    This is a direct consequence of \Cref{supp:lemma-strict-pd-gaussian,supp:lemma:strict-pd-uniform}.
\end{proof}

\subsubsection{More Properties of $\lunif$}

\paragraph{Range of $\lunif$.} It's not obvious what the optimal value of $\lunif$ is. In the following proposition, we characterize the exact range of the expected Gaussian potential and how it evolves as dimensionality increases. The situation for $\lunif$ directly follows as a corollary.

\begin{proposition}[Range of the expected pairwise Gaussian potential $G_t$]\label{supp:prop:gaussian-range}
    For $t > 0$, the expected pairwise Gaussian potential \wrt Borel probability measure $\mu \in \mathcal{M}(\sphere^d)$
    \begin{equation*}
        I_{G_t}[\mu] = \int_{\sphere^d} \int_{\sphere^d} G_t(u, v) \diff \mu(v) \diff \mu(u)
    \end{equation*}has range $[e^{-2 t} \prescript{}{0}{F}_1(;\frac{d+1}{2}; t^2), 1]$, where $\prescript{}{0}{F}_1$ is the confluent hypergeometric limit function defined as \begin{equation}
        \prescript{}{0}{F}_1(;\alpha;z) \trieq \sum_{n=0}^\infty \frac{z^n}{(\alpha)_n n!}, \label{supp:eq:0f1-formula}
    \end{equation}
    where we have used the Pochhammer symbol $(a)_n = \begin{cases}
        1 & \mbox{if } n = 0 \\
        a(a+1)(n+2) \dots (a+n-1) & \mbox{if } n \geq 1.
    \end{cases}$

    We have \begin{itemize}
        \item The minimum $e^{-2 t} \prescript{}{0}{F}_1(;\frac{d+1}{2}; t^2)$ is achieved iff $\mu = \sigma_d$ (on Borel subsets of $\sphere^d$). Furthermore, this value strictly decreases as $d$ increases, converging to $e^{-2t}$ in the limit of $d \rightarrow \infty$.
        \item The maximum is achieved iff $\mu$ is a Dirac delta distribution, \ie, $\mu = \delta_u$  (on Borel subsets of $\sphere^d$), for some $u \in \sphere^d$.
    \end{itemize}
\end{proposition}
\begin{proof}[Proof of \Cref{supp:prop:gaussian-range}]~
    \begin{itemize}
        \item \itempara{Minimum. }

        We know from \Cref{prop:continuous-problem} that $\sigma_d$ {\em uniquely} achieves the minimum, given by the following integral ratio \begin{align*}
            I_{G_t}[\sigma_d]
            & = \frac{\int_0^\pi e^{-t (2 \sin \frac{\theta}{2})^2} \sin^{d-1} \theta \diff \theta}{\int_0^\pi \sin^{d-1} \theta \diff \theta} \\
            & = \frac{\int_0^\pi e^{-2t (1 - \cos \theta)} \sin^{d-1} \theta \diff \theta}{\int_0^\pi \sin^{d-1} \theta \diff \theta} \\
            & = e^{-2t} \frac{\int_0^\pi e^{2t\cos \theta} \sin^{d-1} \theta \diff \theta}{\int_0^\pi \sin^{d-1} \theta \diff \theta}.
        \end{align*}

        The denominator, with some trigonometric identities, can be more straightforwardly evaluated as \begin{equation*}
            \int_0^\pi \sin^{d-1} \theta \diff \theta = \sqrt{\pi} \frac{\Gamma(\frac{d}{2})}{\Gamma(\frac{d+1}{2})}.
        \end{equation*}

        The numerator is \begin{align*}
            \int_0^\pi e^{2t\cos \theta} \sin^{d-1} \theta \diff \theta
            & =
            - \int_0^\pi e^{2t\cos \theta} \sin^{d-2} \theta \cos' \theta \diff \theta \\
            & =
            \int_{-1}^1 e^{2t s} (1-s^2)^{d/2-1} \diff s \\
            & = \frac{\Gamma(\frac{d-1}{2} + \frac{1}{2}) \sqrt{\pi}}{\Gamma(\frac{d-1}{2}+1)} \prescript{}{0}{F}_1(;\frac{d-1}{2}+1;-\frac{1}{4}(-2it)^2) \\
            & = \frac{\Gamma(\frac{d}{2}) \sqrt{\pi}}{\Gamma(\frac{d+1}{2})} \prescript{}{0}{F}_1(;\frac{d+1}{2};t^2),
        \end{align*}
        where we have used the following identity based on the Poisson formula for Bessel functions and the relationship between $\prescript{}{0}{F}_1$ and Bessel functions: \begin{equation*}
            \int_{-1}^1 e^{iz s} (1-s^2)^{\nu - \frac{1}{2}} \diff s =
            \frac{\Gamma(\nu + \frac{1}{2}) \sqrt{\pi}}{(\frac{z}{2})^\nu}  J_\nu(z) =
            \frac{\Gamma(\nu + \frac{1}{2}) \sqrt{\pi}}{\Gamma(\nu+1)} \prescript{}{0}{F}_1(;\nu+1;-\frac{1}{4}z^2).
        \end{equation*}

        Putting both together, we have \begin{align*}
            I_{G_t}[\sigma_d]
            & = e^{-2t} \frac{\int_0^\pi e^{2t\cos \theta} \sin^{d-1} \theta \diff \theta}{\int_0^\pi \sin^{d-1} \theta \diff \theta} \\
            & = e^{-2t} \dfrac{\frac{\Gamma(\frac{d}{2}) \sqrt{\pi}}{\Gamma(\frac{d+1}{2})} \prescript{}{0}{F}_1(;\frac{d+1}{2};t^2)}{\sqrt{\pi} \frac{\Gamma(\frac{d}{2})}{\Gamma(\frac{d+1}{2})}} \\
            & = e^{-2t} \prescript{}{0}{F}_1(;\frac{d+1}{2};t^2) \\
            & = e^{-2t} \sum_{n=0}^\infty \frac{t^{2n}}{(\frac{d+1}{2})_n n!},
        \end{align*}
        where we have used the definition of $\prescript{}{0}{F}_1$ in \Cref{supp:eq:0f1-formula} to expand the formula.

        Notice that each summand strictly decreases as $d \rightarrow \infty$. So must the total sum.

        For the asymptotic behavior at $d \rightarrow \infty$, it only remains to show that  \begin{equation}
            \lim_{d \rightarrow \infty} \sum_{n=0}^\infty \frac{t^{2n}}{(\frac{d+1}{2})_n n!} = 1. \label{supp:eq:0f1-convergence-series-sum}
        \end{equation}

        For the purpose of applying the Dominated Convergence Theorem (DCT) (on the counting measure). We consider the following summable series \begin{equation*}
            \sum_{n=0}^\infty \frac{t^{2n}}{n!} = e^{t^2},
        \end{equation*}
        with each term bounding the corresponding one in \Cref{supp:eq:0f1-convergence-series-sum}: \begin{equation*}
            \frac{t^{2n}}{n!} \geq \frac{t^{2n}}{(\frac{d+1}{2})_n n!},\qquad\qquad \forall n \geq 0, d > 0.
        \end{equation*}

        Thus, \begin{equation*}
            \lim_{d \rightarrow \infty} \sum_{n=0}^\infty \frac{t^{2n}}{(\frac{d+1}{2})_n n!} = \sum_{n=0}^\infty \lim_{d \rightarrow \infty}  \frac{t^{2n}}{(\frac{d+1}{2})_n n!} = 1 + 0 + 0 + \dots  = 1.
        \end{equation*}

        Hence, the asymptotic lower range is $e^{-2t}$.

        \item \itempara{Maximum.}

        Obviously, Dirac delta distributions $\delta_u$, $u \in \sphere^d$ would achieve a maximum of $1$. We will now show that all Borel probability measures $\mu$ \st $I_{G_t}[\mu] = 1$ are delta distributions.

        Suppose that such a $\mu$ is not a Dirac delta distribution. Then, we can take distinct $x, y \in \supp(\mu) \subseteq \sphere^d$, and open neighborhoods around $x$ and $v$, $N_x, N_y \in \sphere^d$ such that they are small enough and disjoint: \begin{align*}
            N_x & \trieq \{u \in \sphere^d \colon \norm{u - x}_2 < \frac{1}{3} \norm{x - y}_2 \} \\
            N_y & \trieq \{u \in \sphere^d \colon \norm{u - y}_2 < \frac{1}{3} \norm{x - y}_2 \}.
        \end{align*}
        Then, \begin{align*}
            I_{G_t}[\mu]
            & = \int_{\sphere^d} \int_{\sphere^d} G_t(u, v) \diff \mu(v) \diff \mu(u) \\
            & = \int_{\sphere^d} \int_{\sphere^d} e^{-t \norm{u-v}_2^2} \diff \mu(v) \diff \mu(u) \\
            & \leq (1 - 2 \mu({N_x}) \mu({N_y})) e^{-t \cdot 0} + 2 \int_{N_x} \int_{N_y} e^{-t \norm{u-v}_2^2} \diff \mu(v) \diff \mu(u) \\
            & < 1 - 2 \mu({N_x}) \mu({N_y}) + 2 \mu({N_x}) \mu({N_y}) e^{-t(\norm{x - y}_2/3)^2} \\
            & = 1 - 2 \mu({N_x}) \mu({N_y}) (1 - e^{-\frac{t}{9}\norm{x - y}_2^2}) \\
            & < 1.
        \end{align*}

        Hence, only Dirac delta distributions attain the maximum.
    \end{itemize}

\end{proof}


\begin{corollary}[Range of $\lunif$]\label{supp:coro:range-lunif}
    For encoder $f \colon \R^n \rightarrow \sphere^{m-1}$, $\lunif(f; t) \in [-2t + \log \prescript{}{0}{F}_1(;\frac{m}{2};t^2), 0]$, where the lower bound $-2t + \log \prescript{}{0}{F}_1(;\frac{m}{2};t^2)$ is achieved only by perfectly uniform encoders $f$, and the upper bound $0$ is achieved only by degenerate encoders that output a fixed feature vector almost surely.

    Furthermore, the lower bound strictly decreases as the output dimension $m$ increases, attaining the following asymptotic value \begin{equation}
        \lim_{m\rightarrow\infty} -2t+\log \prescript{}{0}{F}_1(;\frac{m}{2};t^2) = -2t.\label{supp:eq:range-lunif-lower-bound-asymptotics}
    \end{equation}
\end{corollary}


\begin{figure}[t]\vspace{-3pt}
    \centering
    \begin{minipage}[t]{0.484\linewidth}
        \includegraphics[width=\linewidth]{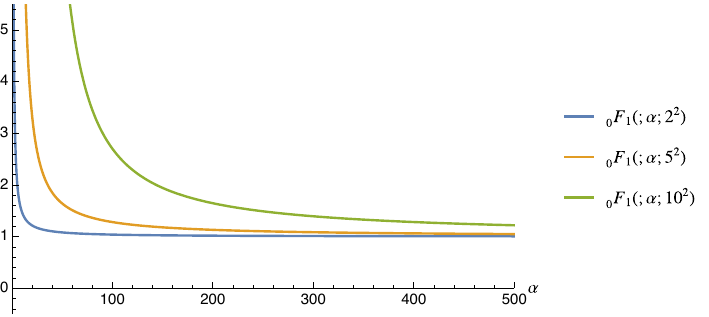}
        \caption{Asymptotic behavior of $\prescript{}{0}{F}_1(;\alpha;z)$. For $z>0$, as $\alpha$ grows larger, the function converges to $1$.}
        \label{supp:fig:0f1-asymptotics}
    \end{minipage}\hfill
    \begin{minipage}[t]{0.484\linewidth}
        \includegraphics[width=\linewidth]{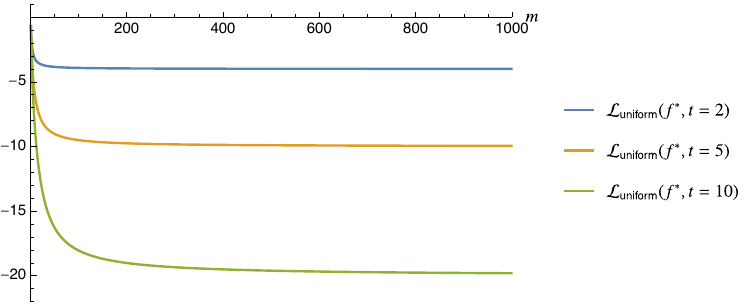}
        \caption{Asymptotic behavior of optimal $\lunif(f, t)$, attained by a perfectly uniform encoder $f^*$. As the feature dimension $m$ grows larger, the value converges to $-2t$.}
        \label{supp:fig:lunif-optima-asymptotics}
    \end{minipage}
\end{figure}

\paragraph{Intuition for the optimal $\lunif$ value in high dimensions.} If we ignore the $\log \prescript{}{0}{F}_1(;\frac{m}{2};t^2)$ term, informally, the optimal value of $-2t$ roughly says that any pair of feature vectors on $\sphere^{d}$ has distance about $\sqrt{2}$, \ie, are nearly orthogonal to each other. Indeed, vectors of high dimensions are usually nearly orthogonal, which is also consistent with the asymptotic result in \Cref{supp:eq:range-lunif-lower-bound-asymptotics}.

\Cref{supp:fig:0f1-asymptotics,supp:fig:lunif-optima-asymptotics} visualize how $\prescript{}{0}{F}_1$ and the optimal $\lunif$ (given by perfectly uniform encoders) evolve.

\paragraph{Lower bound of $\lunif$ estimates.} In practice, when $\lunif$ calculated using expectation over (a batch of) empirical samples $\{x_i\}_{i=1}^B$, $B>1$, the range in Corollary~\ref{supp:coro:range-lunif} is indeed valid, since it bounds over all distributions: \begin{equation}
    \hat{\mathcal{L}}^{(1)}_\mathsf{uniform} \trieq \log \frac{1}{B^2} \sum_{i=1}^B\sum_{j=1}^B e^{-t\norm{f(x_i) - f(x_j)}^2} > -2t + \log \prescript{}{0}{F}_1(;\frac{m}{2};t^2).\label{supp:eq:empirical-cdist-lunif-follows-range}
\end{equation}However, often $\lunif$ is empirically estimated without considering distances between a vector and itself (\eg, in \Cref{fig:pytorch-losses-code} and in our experiment settings as described in \Suppmatsection~\ref{supp:sec:expr-details}):\begin{equation}
    \hat{\mathcal{L}}^{(2)}_\mathsf{uniform} \trieq \log \frac{1}{B(B-1)} \sum_{i=1}^B\sum_{j\in\{1, \dots, B\} \setminus \{i\}} e^{-t\norm{f(x_i) - f(x_j)}^2}.\label{supp:eq:empirical-pdist-lunif}
\end{equation}While both quantities converge to the correct value in the limit, the lower bound is not always true for this one, because it is {\em not} the expected pairwise Gaussian kernel based on some distribution. Note the following relation:
\begin{equation*}
    \hat{\mathcal{L}}^{(2)}_\mathsf{uniform} = \log \left(\frac{B\cdot\exp( \hat{\mathcal{L}}^{(1)}_\mathsf{uniform} ) - 1}{B-1} \right).
\end{equation*}
We can derive a valid lower bound using \Cref{supp:eq:empirical-cdist-lunif-follows-range}: for $\prescript{}{0}{F}_1(;\frac{m}{2};t^2) > \frac{e^{2t}}{B}$, \begin{equation*}
    \hat{\mathcal{L}}^{(2)}_\mathsf{uniform} > \log \left(\frac{B\cdot\exp( -2t + \log \prescript{}{0}{F}_1(;\frac{m}{2};t^2) ) - 1}{B-1} \right) = \log \left(\frac{B e^{-2t} \prescript{}{0}{F}_1(;\frac{m}{2};t^2) - 1}{B-1} \right).
\end{equation*}
Since this approaches fails for cases that $\prescript{}{0}{F}_1(;\frac{m}{2};t^2) \leq \frac{e^{2t}}{B}$, we can combine it with the naive lower bound $-4t$, and have \begin{equation*}
    \hat{\mathcal{L}}^{(2)}_\mathsf{uniform} > \begin{cases}
        \max(-4t, \log \left(\frac{B e^{-2t} \prescript{}{0}{F}_1(;\frac{m}{2};t^2) - 1}{B-1} \right)) & \mbox{if } \prescript{}{0}{F}_1(;\frac{m}{2};t^2) > \frac{e^{2t}}{B} \\
        -4t & \mbox{otherwise.}
    \end{cases}
\end{equation*}

\paragraph{Non-negative versions of $\lunif$ for practical uses.} By definition, $\lunif$ always non-positive. As shown above, different $\lunif$ empirical estimates may admit different lower bounds. However, in our experience, for reasonably large batch sizes, adding an offset of $2t$ often ensures a non-negative loss that is near zero at optimum. When output dimensionality $m$ is low, it might be useful to add an additional offset of $-\log \prescript{}{0}{F}_1(;\frac{m}{2};t^2)$, which can be computed with the help of the SciPy package function \texttt{scipy.special.hyp0f1(m/2, t**2)} \citep{virtanen2020scipy}.

\subsection{Proofs and Additional Results for Section~4.2} \label{supp:sec:proofs-asymptotics}

The following lemma directly follows Theorem~3.3 and Remarks~3.4~(b)(i) of \citet{serfozo1982convergence}. We refer readers to \citet{serfozo1982convergence} for its proof.

\begin{lemma} \label{supp:lemma:int-conv-fn-conv-measure}
    Let $A$ be a compact second countable Hausdorff space. Suppose \begin{enumerate}
        \item $\{\mu_n\}_{n=1}^\infty$ is a sequence of finite and positive Borel measures supported on $A$ that converges weak$^*$ to some finite and positive Borel measure $\mu$ (which is same as vague convergence since $A$ is compact);
        \item $\{f_n\}_{n=1}^\infty$ is a sequence of Borel measurable functions that converges continuously to a Borel measurable $f$;
        \item $\{f_n\}_n$ are uniformly bounded over $A$.
    \end{enumerate}
    Then, we have the following convergence: \begin{equation*}
        \lim_{n \rightarrow \infty} \int_{x \in A} f_n(x) \diff \mu_n(x) = \int_{x \in A} f(x) \diff \mu(x).
    \end{equation*}
\end{lemma}

We now recall \Cref{thm:asym_inf_negatives}.
\begingroup
\def\thetheorem{\ref{thm:asym_inf_negatives}}
\begin{theorem}[Asymptotics of $\lcontr$]
    For fixed $\tau > 0$, as the number of negative samples $M \rightarrow \infty$, the (normalized) contrastive loss converges to \begin{align}
        & \lim_{M \rightarrow \infty} \lcontr(f; \tau, M) - \log M \notag \\
        & \qquad\qquad =
        \lim_{M \rightarrow \infty} \expectunder[\substack{
            (x, y) \sim \distnpos \\
            \{x^-_i\}_{i=1}^M \iidsim \distndata
        }]{- \log \frac{e^{f(x)\T f(y) / \tau}}{e^{f(x)\T f(y) / \tau} + \sum_i e^{f(x^-_i)\T f(y) / \tau}}} - \log M \notag \\
        & \qquad\qquad =
        -\frac{1}{\tau} \expectunder[(x, y) \sim \distnpos]{f(x)\T f(y)} + \expectunder[x \sim \distndata]{\log\expectunder[x^- \sim \distndata]{e^{f(x^-)\T f(x) / \tau}}}.
        \tag{\ref{eq:contrastive_loss_limit}}
    \end{align}

    We have the following results: \begin{enumerate}
        \item \label{thm:asym_inf_negatives:itm:align} The first term is minimized iff $f$ is perfectly aligned.
        \item \label{thm:asym_inf_negatives:itm:uniform} If perfectly uniform encoders exist, they form the exact minimizers of the second term.
        \item \label{thm:asym_inf_negatives:itm:convergence-rate} For the convergence in \Cref{eq:contrastive_loss_limit}, the absolute deviation from the limit (\ie, the error term) decays in $\mathcal{O}(M^{-1/2})$.
    \end{enumerate}
\end{theorem}
\addtocounter{theorem}{-1}
\endgroup

\begin{proof}[Proof of \Cref{thm:asym_inf_negatives}]
We first show the convergence stated in \Cref{eq:contrastive_loss_limit} along with its speed (result~\ref{thm:asym_inf_negatives:itm:convergence-rate}), and then the relations between the two limiting terms and the alignment and uniformity properties (results~\ref{thm:asym_inf_negatives:itm:align}~and~\ref{thm:asym_inf_negatives:itm:uniform}).

\begin{itemize}
    \item \itempara{Proof of the convergence in \Cref{eq:contrastive_loss_limit} and the $\mathcal{O}(M^{-1/2})$ decay rate of its error term (result~\ref{thm:asym_inf_negatives:itm:convergence-rate}).}

    Note that for any $x, y \in \R^n$ and $\{x^-_i\}_{i=1}^{M} \iidsim \pdata$, we have \begin{equation}
        \lim_{M \rightarrow \infty} \log\left(\frac{1}{M} e^{f(x)\T f(y) / \tau} + \frac{1}{M} \sum_{i=1}^M e^{f(x^-_i)\T f(x) / \tau} \right) = \log\expectunder[x^- \sim \pdata]{e^{f(x^-)\T f(x) / \tau}} \qquad\text{almost surely}, \label{supp:eq:asymp-inner-convergence-slln}
    \end{equation}
    by the strong law of large numbers (SLLN) and the Continuous Mapping Theorem.

    Then, we can derive
    \begin{equation*}
    \begin{split}
        & \lim_{M \rightarrow \infty} \lcontr(f; \tau, M) - \log M \\
        & \qquad\qquad = \expectunder[(x, y) \sim \ppos]{- f(x)\T f(y) / \tau} + \lim_{M \rightarrow \infty} \expectunder[\substack{
            (x, y) \sim \ppos \\
            \{x^-_i\}_{i=1}^{M} \iidsim \pdata
            }]{\log\left(\frac{1}{M} e^{f(x)\T f(y) / \tau} + \frac{1}{M} \sum_{i=1}^M e^{f(x^-_i)\T f(x) / \tau} \right)} \\
        & \qquad\qquad = \expectunder[(x, y) \sim \ppos]{- f(x)\T f(y) / \tau} + \expect{\lim_{M \rightarrow \infty}  \log\left(\frac{1}{M} e^{f(x)\T f(y) / \tau} + \frac{1}{M} \sum_{i=1}^M e^{f(x^-_i)\T f(x) / \tau} \right)}\\
        & \qquad\qquad = -\frac{1}{\tau} \expectunder[(x, y) \sim \ppos]{f(x)\T f(y)} + \expectunder[x \sim \pdata]{\log\expectunder[x^- \sim \pdata]{e^{f(x^-)\T f(x) / \tau}}},
    \end{split}
    \end{equation*}
    where we justify the switching of expectation and limit by the convergence stated in \Cref{supp:eq:asymp-inner-convergence-slln}, the boundedness of $e^{u\T v/\tau}$ (where $u, v \in \sphere^d, \tau > 0$), and the Dominated Convergence Theorem (DCT).\par

    For convergence speed, we have \begin{align}
        & \abs{\left( \lim_{M \rightarrow \infty} \lcontr(f; \tau, M) - \log M \right) - \left(\lcontr(f; \tau, M) - \log M\right)} \notag\\
        & \qquad\qquad = \abs{\expectunder[\substack{
            (x, y) \sim \ppos \\
            \{x^-_i\}_{i=1}^{M} \iidsim \pdata
            }]{\log\expectunder[x^- \sim \pdata]{e^{f(x^-)\T f(x) / \tau}} - \log\left(\frac{1}{M} e^{f(x)\T f(y) / \tau} + \frac{1}{M} \sum_{i=1}^M e^{f(x^-_i)\T f(x) / \tau}\right)}} \notag\\
        & \qquad\qquad \leq \expectunder[\substack{
            (x, y) \sim \ppos \\
            \{x^-_i\}_{i=1}^{M} \iidsim \pdata
            }]{\abs{\log\expectunder[x^- \sim \pdata]{e^{f(x^-)\T f(x) / \tau}} - \log\left(\frac{1}{M} e^{f(x)\T f(y) / \tau} + \frac{1}{M} \sum_{i=1}^M e^{f(x^-_i)\T f(x) / \tau}\right)}} \notag\\
        & \qquad\qquad \leq e^{1/\tau} \expectunder[\substack{
            (x, y) \sim \ppos \\
            \{x^-_i\}_{i=1}^{M} \iidsim \pdata
            }]{\abs{\expectunder[x^- \sim \pdata]{e^{f(x^-)\T f(x) / \tau}} - \left(\frac{1}{M} e^{f(x)\T f(y) / \tau} + \frac{1}{M} \sum_{i=1}^M e^{f(x^-_i)\T f(x) / \tau}\right)}} \notag\\
        & \qquad\qquad \leq \frac{1}{M} e^{2/\tau} + e^{1/\tau} \expectunder[
            x, \{x^-_i\}_{i=1}^{M} \iidsim \pdata]{\abs{\expectunder[x^- \sim \pdata]{e^{f(x^-)\T f(x) / \tau}} - \frac{1}{M} \sum_{i=1}^M e^{f(x^-_i)\T f(x) / \tau}}} \notag\\
        & \qquad\qquad = \frac{1}{M} e^{2/\tau} + \mathcal{O}(M^{-1/2}),
    \end{align}
    where the first inequality follows the Intermediate Value Theorem and the $e^{1/\tau}$ upper bound on the absolute derivative of $\log$ between the two points, and the last equality follows the Berry-Esseen Theorem given the bounded support of $e^{f(x^-_i)\T f(x) / \tau}$ as following: for \iid random variables $Y_i$ with bounded support $\subset [-a, a]$, zero mean and $\sigma^2_Y \leq a^2$ variance, we have \begin{align*}
        \expect{\abs{\frac{1}{M} \sum_{i=1}^M Y_i}}
        & = \frac{\sigma_Y}{\sqrt{M}} \expect{\abs{\frac{1}{\sqrt{M}\sigma_Y} \sum_{i=1}^M Y_i}} \\
        & = \frac{\sigma_Y}{\sqrt{M}} \int_0^{\frac{a\sqrt{M}}{\sigma_Y}} \prob{\abs{\frac{1}{\sqrt{M}\sigma_Y} \sum_{i=1}^M Y_i} > x} \diff x \\
        & \leq \frac{\sigma_Y}{\sqrt{M}} \int_0^{\frac{a\sqrt{M}}{\sigma_Y}} \prob{\abs{\mathcal{N}(0, 1)} > x} + \frac{C_{a}}{\sqrt{M}} \diff x \tag{Berry-Esseen} \\
        & \leq \frac{\sigma_Y}{\sqrt{M}} \left(\frac{a C_{a}}{\sigma_Y} + \int_0^{\infty} \prob{\abs{\mathcal{N}(0, 1)} > x} \diff x \right) \\
        & = \frac{\sigma_Y}{\sqrt{M}} \left(\frac{a C_{a}}{\sigma_Y} + \expect{\abs{\mathcal{N}(0, 1)}} \right) \\
        & \leq \frac{C_a}{\sqrt{M}} + \frac{a}{\sqrt{M}} \expect{\abs{\mathcal{N}(0, 1)}} \\
        & = \mathcal{O}(M^{-1/2}),
    \end{align*}
    where the constant $C_{a}$ only depends on $a$ (which controls both the second and the third moment).

    \item \itempara{Proof of result~\ref{thm:asym_inf_negatives:itm:align}: The first term is minimized iff $f$ is perfectly aligned.}

    Note that for $u, v \in \sphere^d$, \begin{equation*}
        \norm{u - v}_2^2 = 2 - 2 \cdot u^T v.
    \end{equation*}

    Then the result follows directly the definition of perfect alignment, and the existence of perfectly aligned encoders (\eg, an encoder that maps every input to the same output vector).

    \item \itempara{Proof of result~\ref{thm:asym_inf_negatives:itm:uniform}: If perfectly uniform encoders exist, they form the exact minimizers of the second term.}

    For simplicity, we define the following notation: \begin{definition}
        $\forall \mu \in \mathcal{M}(\sphere^{d})$, $u \in \sphere^d$, we define the continuous and Borel measurable function  \begin{equation}
            U_\mu(u) \trieq \int_{\sphere^d} e^{u\T v / \tau} \diff \mu(v).
        \end{equation} with its range bounded in $[e^{-1/\tau}, e^{1/\tau}]$.
    \end{definition}

    Then the second term can be equivalently written as \begin{equation*}
        \expectunder[x \sim \pdata]{\log\expectunder[x^- \sim \pdata]{e^{f(x^-)\T f(x) / \tau}}} = \expectunder[x \sim \pdata]{\log U_{\pdata \circ f^{-1}} (f(x))},
    \end{equation*} where $\pdata \circ f^{-1} \in \mathcal{M}(\sphere^d)$ is the probability measure of features, \ie, the pushforward measure of $\pdata$ via $f$.

    We now consider the following relaxed problem, where the minimization is taken over $\mathcal{M}(\sphere^{d})$, all possible Borel probability measures on the hypersphere $\sphere^d$: %
    \begin{equation}
        \min_{\mu \in \mathcal{M}(\sphere^{d})} \int_{\sphere^d} \log U_\mu(u) \diff \mu(u).
        \label{supp:eq:thm:inf-neg-unif-relaxed-measure}
    \end{equation}

    Our strategy is to show that the unique minimizer of \Cref{supp:eq:thm:inf-neg-unif-relaxed-measure} is $\sigma_d$, from which the result~\ref{thm:asym_inf_negatives:itm:uniform} directly follows. The rest of the proof is structured in three parts.

    \begin{enumerate}
        \item
        \itempara{We show that minimizers of \Cref{supp:eq:thm:inf-neg-unif-relaxed-measure} exist, \ie, the above infimum is attained for some $\mu \in \mathcal{M}(\sphere^d)$.}\par\par

        Let $\{\mu_m\}_{m=1}^\infty$ be a sequence in $\mathcal{M}(\sphere^d)$ such that the infimum of \Cref{supp:eq:thm:inf-neg-unif-relaxed-measure} is reached in the limit: \begin{equation*}
            \lim_{m\rightarrow \infty} \int_{\sphere^d} \log U_{\mu_m}(u) \diff {\mu_m}(u) = \inf_{\mu \in \mathcal{M}(\sphere^{d})} \int_{\sphere^d}  \log U_\mu(u) \diff \mu(u).
        \end{equation*}
        From the Helly's Selection Theorem, let $\mu^*$ denote some weak$^*$ cluster point of this sequence. Then $\mu_m$ converges weak$^*$ to $\mu^*$ along a subsequence $m \in \mathcal{N} \in \N$. For simplicity and with a slight abuse of notation, we denote this convergent (sub)sequence of measures by $\{\mu_n\}_{n=1}^{\infty}$.

        We want to show that $\mu^*$ attains the limit (and thus the infimum), \ie,  \begin{equation}
            \int_{\sphere^d} \log U_{\mu^*}(u) \diff \mu^*(u) = \lim_{n\rightarrow \infty} \int_{\sphere^d} \log U_{\mu_n}(u) \diff {\mu_n}(u). \label{supp:eq:thm:converge-new-obj}
        \end{equation}

        In view of \Cref{supp:lemma:int-conv-fn-conv-measure}, since $\sphere^d$ is a compact second countable Hausdorff space and $\{\log U_{\mu_n}\}_n$ is uniformly bounded over $\sphere^d$, it remains to prove that $\{\log U_{\mu_n}\}_n$ is continuously convergent to $\log U_{\mu^*}$.

        Consider any convergent sequence of points $\{x_n\}_{n=1}^\infty \in \R^{d+1}$ \st $x_n \rightarrow x$ where $x \in \sphere^d$.

        Let $\delta_n = x_n - x$. By simply expanding $U_{\mu_n}$ and $\mu_{\mu^*}$, we have
        \begin{equation*}
            e^{-\norm{\delta_n} / \tau} U_{\mu_n}(x) \leq U_{\mu_n}(x_n) \leq e^{\norm{\delta_n} / \tau} U_{\mu_n}(x).
        \end{equation*}
        Since both the upper and the lower bound converge to $U_{\mu^*}(x)$  (by the weak $^*$ convergence of $\{\mu_n\}_n$ to $\mu^*$), $U_{\mu_n}(x_n)$ must as well. We have proved the continuous convergence of $\{\log U_{\mu_n}\}_n$ to $\log U_{\mu^*}$.

        Therefore, the limit in \Cref{supp:eq:thm:converge-new-obj} holds. The infimum is thus attained at $\mu^*$: \begin{equation*}
            \lim_{n \rightarrow \infty} \int_u \log U_{\mu_n}(u) \diff {\mu_n} = \int_u \log U_{\mu^*}(u) \diff \mu^*.
        \end{equation*}

        \item
        \itempara{We show that $U_{\mu^*}$ is constant $\mu^*$-almost surely for any minimizer $\mu^*$ of \Cref{supp:eq:thm:inf-neg-unif-relaxed-measure}.}

        Let $\mu^*$ be any solution of \Cref{supp:eq:thm:inf-neg-unif-relaxed-measure}: \begin{equation*}
            \mu^* \in \argmin_{\mu \in \mathcal{M}(\sphere^{d})} \int_u  \log U_\mu(u) \diff \mu.
        \end{equation*}

        Consider the Borel sets where $\mu^*$ has positive measure: $\mathcal{T} \trieq \{T \in \mathcal{B}(\sphere^d) \colon \mu^*(T) > 0\}$. For any $T \in \mathcal{T}$, let $\mu^*_T$ denote the conditional distribution of $\mu^*$ on $T$, \ie, $\forall A \in \mathcal{B}(\sphere^d)$, \begin{equation*}
            \mu^*_T(A) = \frac{\mu^*(A \cap T)}{\mu^*(T)}.
        \end{equation*}

        Note that for any such $T \in \mathcal{T}$, the mixture $(1 - \alpha)\mu^* + \alpha \mu^*_T$ is a valid probability distribution (\ie, in $\mathcal{M}(\sphere^d)$) for $\alpha \in (-\mu^*(T), 1)$, an open interval containing $0$.

        By the first variation, we must have%
         \begin{align}
            0
            & = \frac{\partial}{\partial \alpha}
            \int_{\sphere^d} \log U_{(1-\alpha) \mu^* + \alpha \mu^*_T}(u) \diff ((1-\alpha) \mu^* + \alpha \mu^*_T)(u)
            \evalat{\alpha=0} \notag\\
            & =
            \frac{\partial}{\partial \alpha} (1-\alpha)
            \int_{\sphere^d} \log U_{(1-\alpha) \mu^* + \alpha \mu^*_T}(u) \diff \mu^*(u)
            \evalat{\alpha=0}
            + \frac{\partial}{\partial \alpha} \alpha
            \int_{\sphere^d} \log U_{(1-\alpha) \mu^* + \alpha \mu^*_T}(u) \diff \mu^*_T(u)
            \evalat{\alpha=0} \notag\\
            & =
            - \int_{\sphere^d} \log U_{(1-\alpha) \mu^* + \alpha \mu^*_T}(u) \diff \mu^*(u)
            \evalat{\alpha=0}
            +
            \frac{\partial}{\partial \alpha}
            \int_{\sphere^d} \log U_{(1-\alpha) \mu^* + \alpha \mu^*_T}(u) \diff \mu^*(u)
            \evalat{\alpha=0} \notag\\
            & \qquad\qquad +
            \int_{\sphere^d} \log U_{(1-\alpha) \mu^* + \alpha \mu^*_T}(u) \diff \mu^*_T(u)
            \evalat{\alpha=0}
            +
            0 \cdot \frac{\partial}{\partial \alpha}
            \int_{\sphere^d} \log U_{(1-\alpha) \mu^* + \alpha \mu^*_T}(u) \diff \mu^*_T(u)
            \evalat{\alpha=0} \notag\\
            & =
            - \int_{\sphere^d} \log U_{\mu^*}(u) \diff \mu^*(u)
            +
            \int_{\sphere^d} \frac{U_{\mu^*_T}(u) - U_{\mu^*}(u)}{U_{\mu^* }(u)} \diff \mu^*(u) \notag\\
            & \qquad\qquad +
            \int_{\sphere^d} \log U_{\mu^*}(u) \diff \mu^*_T(u)
            +
            0 \cdot
            \int_{\sphere^d} \frac{U_{\mu^*_T}(u) - U_{\mu^*}(u)}{U_{\mu^* }(u)} \diff \mu^*_T(u) \notag\\
            & =
            \int_{\sphere^d} \frac{U_{\mu^*_T}(u)}{U_{\mu^* }(u)} \diff \mu^*(u)
            +
            \int_{\sphere^d} \log U_{\mu^*}(u) \diff (\mu^*_T - \mu^*)(u) - 1 \label{supp:thm:asym-inf-neg:eq:alpha-first-variation},
        \end{align}where the Leibniz rule along with the boundedness of $U_{\mu^*}$ and $U_{\mu^*_{T_n}}$ together justify the exchanges of integration and differentiation.

        Let $\{T_n\}_{n=1}^\infty$ be a sequence of sets in $\mathcal{T}$ such that \begin{equation*}
            \lim_{n \rightarrow \infty} \int_{\sphere^d} U_{\mu^*}(u) \diff \mu_{T_n}^*(u) = \sup_{T \in \mathcal{T}} \int_{\sphere^d} U_{\mu^*}(u) \diff \mu_{T}^*(u) \trieq U^*,
        \end{equation*} where the supremum must exist since $U_{\mu^*}$ is bounded above.

        Because $U_{\mu^*}$ is a continuous and Borel measurable function, we have $\{u \colon U_{\mu^*}(u) > U^*\} \in \mathcal{B}(\sphere^d)$ and thus \begin{align*}
            \mu^*(\{u \colon U_{\mu^*}(u) > U^*\}) & = 0, \\
            \mu^*_{T_n}(\{u \colon U_{\mu^*}(u) > U^*\}) & = 0, & \forall n = 1, 2, \dots,
        \end{align*}
        otherwise $\{u \colon U_{\mu^*}(u) > U^*\} \in \mathcal{T}$, contradicting the definition of $U^*$ as the supremum.

        Asymptotically, $U_{\mu^*}$ is constant $\mu^*_{T_n}$-almost surely: \begin{align*}
            & \int_{\sphere^d} \abs{ U_{\mu^*}(u) - \int_{\sphere^d} U_{\mu^*}(u') \diff \mu_{T_n}^*(u')} \diff \mu_{T_n}^*(u) \\
            & \qquad\qquad = 2 \int_{\sphere^d} \max \left(0,~U_{\mu^*}(u) - \int_{\sphere^d} U_{\mu^*}(u') \diff \mu_{T_n}^*(u') \right) \diff \mu_{T_n}^*(u) \\
            & \qquad\qquad \leq 2 (U^* - \int_{\sphere^d} U_{\mu^*}(u) \diff \mu_{T_n}^*(u)) \\
            & \qquad\qquad \rightarrow 0, & \text{as $n \rightarrow \infty$,}
        \end{align*}
        where the inequality follows the boundedness of $U_{\mu^*}$ and that $\mu^*_{T_n}(\{u \colon U_{\mu^*}(u) > U^*\}) = 0$.

        Therefore, given the continuity of $\log$ and the boundedness of $U_{\mu^*}$, we have \begin{equation*}
            \lim_{n\rightarrow \infty} \int_{\sphere^d} \log U_{\mu^*}(u) \diff \mu_{T_n}^*  = \log U^*.
        \end{equation*}


        \Cref{supp:thm:asym-inf-neg:eq:alpha-first-variation} gives that $\forall n = 1, 2, \dots$, \begin{align*}
            1
            & = \int_{\sphere^d} \frac{U_{\mu_{T_n}^*}(u)}{U_{\mu^* }(u)} \diff \mu^*
            +
            \int_{\sphere^d} \log U_{\mu^*}(u) \diff (\mu_{T_n}^* - \mu^*) \notag\\
            & \geq \frac{1}{U^*} \int_{\sphere^d} U_{\mu_{T_n}^*}(u) \diff \mu^*(u)
            +
            \int_{\sphere^d} \log U_{\mu^*}(u) \diff \mu_{T_n}^*
            -
            \int_{\sphere^d} \log U_{\mu^*}(u) \diff \mu^* \\
            & = \frac{1}{U^*} \int_{\sphere^d} U_{\mu^*}(u) \diff \mu_{T_n}^*(u)
            +
            \int_{\sphere^d} \log U_{\mu^*}(u) \diff \mu_{T_n}^*
            -
            \int_{\sphere^d} \log U_{\mu^*}(u) \diff \mu^*,
        \end{align*}
        where the inequality follows the boundedness of $\frac{U_{\mu_{T_n}^*}}{U_{\mu^*}}$ and that $\mu^*(\{u \colon U_{\mu^*}(u) > U^*\}) = 0$.

        Taking the limit of $n \rightarrow \infty$ on both sides, we have \begin{align*}
            1 = \lim_{n \rightarrow \infty} 1
            & \geq
            \frac{1} {U^*} \lim_{n \rightarrow \infty} \int_{\sphere^d} U_{\mu^*}(u) \diff \mu_{T_n}^*(u)
            +
            \lim_{n\rightarrow \infty} \int_{\sphere^d} \log U_{\mu^*}(u) \diff \mu_{T_n}^*(u)
            -
            \int_{\sphere^d} \log U_{\mu^*}(u) \diff \mu^*(u) \\
            & = 1 + \log U^* - \int_{\sphere^d} \log U_{\mu^*}(u) \diff \mu^*(u) \\
            & \geq 1 + \log U^* - \log \int_{\sphere^d} U_{\mu^*}(u) \diff \mu^*(u) \\
            & \geq 1,
        \end{align*}where the last inequality holds because the supremum taken over $\mathcal{T} \supset \{\sphere^d\}$.

        Since $1=1$, all inequalities must be equalities. In particular, \begin{equation*}
            \int_{\sphere^d} \log U_{\mu^*}(u) \diff \mu^*(u) =  \log \int_{\sphere^d} U_{\mu^*}(u) \diff \mu^*(u).
        \end{equation*}
        That is, for any solution $\mu^*$ of \Cref{supp:eq:thm:inf-neg-unif-relaxed-measure}, $U_{\mu^*}$ must be constant $\mu^*$-almost surely.

        \item \itempara{We show that $\sigma_d$ is the unique minimizer of the relaxed problem in \Cref{supp:eq:thm:inf-neg-unif-relaxed-measure}.}

        Let $S \subset \mathcal{M}(\sphere^d)$ be the set of measures where the above property holds: \begin{equation*}
            S \trieq \left\{\mu \in \mathcal{M}(\sphere^{d}) \colon U_{\mu} \text{ is constant $\mu$-almost surely} \right\}.
        \end{equation*}

        The problem in \Cref{supp:eq:thm:inf-neg-unif-relaxed-measure} is thus equivalent to minimizing over $S$: \begin{align*}
            \argmin_{\mu \in \mathcal{M}(\sphere^{d})} \int_{\sphere^d}  \log U_\mu(u) \diff \mu(u)
            & = \argmin_{\mu \in S} \int_{\sphere^d}  \log U_\mu(u) \diff \mu(u) \\
            & = \argmin_{\mu \in S} \log \int_{\sphere^d} U_\mu(u) \diff \mu(u) \\
            & = \argmin_{\mu \in S} \log \int_{\sphere^d} \int_{\sphere^d} e^{u\T v / \tau} \diff \mu(v) \diff \mu(u) \\
            & = \argmin_{\mu \in S} \left( \frac{1}{\tau} + \log  \int_{\sphere^d} \int_{\sphere^d} e^{-\frac{1}{2 \tau} \norm{u - v}^2} \diff \mu(v) \diff \mu(u) \right) \\
            & = \argmin_{\mu \in S} \int_{\sphere^d} \int_{\sphere^d} G_{\frac{1}{2\tau}}(u, v) \diff \mu(v) \diff \mu(u).
        \end{align*}

        By \Cref{prop:continuous-problem} and $\tau > 0$, we know that the uniform distribution $\sigma_d$ is the unique solution to \begin{equation}
            \argmin_{\mu \in \mathcal{M}(\sphere^d)} \int_{\sphere^d} \int_{\sphere^d} G_{\frac{1}{2\tau}}(u, v) \diff \mu(v) \diff \mu(u). \label{supp:thm:asymp-inf-neg:eq:unif-equiv-gaussian-prob}
        \end{equation}

        Since $\sigma_d \in S$, it must also be the unique solution to \Cref{supp:eq:thm:inf-neg-unif-relaxed-measure}.
    \end{enumerate}

    Finally, if perfectly uniform encoders exist, $\sigma_d$ is realizable, and they are the exact encoders that realize it. Hence, in such cases, they are the exact minimizers of \begin{equation*}
        \min_f \expectunder[x \sim \pdata]{\log\expectunder[x^- \sim \pdata]{e^{f(x^-)\T f(x) / \tau}}}.
    \end{equation*}
\end{itemize}
\end{proof}

\paragraph{Relation between \Cref{thm:asym_inf_negatives}, $\lalign$ and $\lunif$.} The first term of \Cref{eq:contrastive_loss_limit} is equivalent with $\lalign$ when $\alpha = 2$, up to a constant and a scaling. In the above proof, we showed that the second term favors uniformity, via the feature distribution that minimizes the pairwise Gaussian kernel (see \Cref{supp:thm:asymp-inf-neg:eq:unif-equiv-gaussian-prob}): \begin{equation}
    \argmin_{\mu \in \mathcal{M}(\sphere^d)} \int_{\sphere^d} \int_{\sphere^d} G_{\frac{1}{2\tau}}(u, v) \diff \mu(v) \diff \mu(u), \label{supp:eq:asym-inf-neg-unif-term-measure-relax}
\end{equation} which can be alternatively viewed as the relaxed problem of optimizing for the uniformity loss $\lunif$: \begin{equation}
    \argmin_f \lunif(f; \frac{1}{2\tau}) = \argmin_f \expect[x, y \iidsim \distndata] {G_{\frac{1}{2\tau}}(f(x), f(y))}. \label{supp:eq:asym-inf-neg-unif-term-realizable-nonrelax}
\end{equation} The relaxation comes from the observation that \Cref{supp:eq:asym-inf-neg-unif-term-measure-relax} minimizes over all feature distributions on $\sphere^d$, while \Cref{supp:eq:asym-inf-neg-unif-term-realizable-nonrelax} only considers the realizable ones.

\paragraph{Relation between \Cref{supp:eq:thm:inf-neg-unif-relaxed-measure} and minimizing average pairwise Gaussian potential (\ie, minimizing $\lunif$).} In view of the \Cref{prop:continuous-problem} and the proof of \Cref{thm:asym_inf_negatives}, we know that the uniform distribution $\sigma_d$ is the unique minimizer of both of the following problems: \begin{align*}
    \{\sigma_d\} & = \min_{\mu \in \mathcal{M}(\sphere^d)}  \log \int_{\sphere^d} \int_{\sphere^d} e^{u\T v /\tau} \diff \mu(v) \diff \mu(u), \\
    \{\sigma_d\} & = \min_{\mu \in \mathcal{M}(\sphere^d)} \int_{\sphere^d} \log  \int_{\sphere^d} e^{u\T v /\tau} \diff \mu(v) \diff \mu(u).
\end{align*}
So pushing the $\log$ inside the outer integral doesn't change the solution. However, if we push the $\log$ all the way inside the inner integral, the problem becomes equivalent with minimizing the norm of the mean, \ie, \begin{equation*}
    \min_{\mu \in \mathcal{M}(\sphere^d)} \expect[U \sim \mu]{U}\T\expect[U \sim \mu]{U},
\end{equation*}
which is minimized for any distribution with mean being the all-zeros vector $0$, \eg, $\frac{1}{2} \delta_{u} + \frac{1}{2} \delta_{-u}$ for any $u \in \sphere^d$ (where $\delta_u$ is the Dirac delta distribution at $u$ \st $\delta_u(S) = \mathbbm{1}_{S}(u)$, $\forall S \in \mathcal{B}(\sphere^d)$). Therefore, the location of the $\log$ is important.

\begin{theorem}[Single negative sample] \label{thm:asym_single_negative}
    If perfectly aligned and uniform encoders exist, they form the exact minimizers of the contrastive loss $\lcontr(f; \tau, M)$ for fixed $\tau > 0$ and $M = 1$.
\end{theorem}
\begin{proof}[Proof of \Cref{thm:asym_single_negative}]
    Since $M=1$, we have \begin{align}
        \lcontr(f; \tau, 1)
        & = \expectunder[\substack{
            (x, y) \sim \ppos \\
            x^- \sim \pdata
            }]{-\frac{1}{\tau} f(x)\T f(y) + \log \left( e^{f(x)\T f(y) / \tau}+  e^{f(x^-)\T f(x) / \tau} \right)} \notag \\
        & \geq \expectunder[\substack{
            x \sim \pdata \\
            x^- \sim \pdata
            }]{-\frac{1}{\tau} + \log \left( e^{1 / \tau}+  e^{f(x^-)\T f(x) / \tau} \right)}\label{supp:thm:single-neg:eq:align-relax} \\
        & \geq -\frac{1}{\tau} + \min_{\mu \in \mathcal{M}(\sphere^d)} \int_{\sphere^d} \int_{\sphere^d} \log \left( e^{1 / \tau}+  e^{u\T v / \tau} \right) \diff \mu(u) \diff \mu(v) \label{supp:thm:single-neg:eq:unif-relax} \\
        & = -\frac{1}{\tau} + \min_{\mu \in \mathcal{M}(\sphere^d)} \int_{\sphere^d} \int_{\sphere^d} \log \left( e^{1 / \tau}+  e^{(2 - \norm{u - v}_2^2) / (2 \tau)} \right) \diff \mu(u) \diff \mu(v). \notag
    \end{align}
    By the definition of perfect alignment, the equality in \Cref{supp:thm:single-neg:eq:align-relax} is satisfied iff $f$ is perfectly aligned.

    Consider the function $f \colon (0, 4] \rightarrow \R_+$ defined as \begin{equation*}
        f(t) = \log(e^{\frac{1}{\tau}} + e^{\frac{2 - t}{2\tau}}).
    \end{equation*} It has the following properties: \begin{itemize}
        \item $-f'(t) = \frac{1}{2\tau} \frac{ e^{-\frac{t}{2\tau}} }{1 + e^{-\frac{t}{2\tau}}} = \frac{1}{2\tau} (1 - (1 + e^{-\frac{t}{2\tau}})^{-1})$ is strictly completely monotone on $(0, +\infty)$:

        $\forall t \in (0,  +\infty)$, \begin{align*}
            \frac{1}{2\tau} (1 - (1 + e^{-\frac{t}{2\tau}})^{-1}) & > 0 \\
            (-1)^n \frac{\diff^n }{\diff t^n} \frac{1}{2\tau} (1 - (1 + e^{-\frac{t}{2\tau}})^{-1}) & = \frac{n!}{(2\tau)^{n+1}} (1 + e^{-\frac{t}{2\tau}})^{-(n+1)} > 0, \qquad\qquad n = 1, 2, \dots. \\
        \end{align*}
        \item $f$ is bounded on $(0, 4]$.
    \end{itemize}
    In view of \Cref{supp:lemma:strict-pd-uniform}, we have that the equality in \Cref{supp:thm:single-neg:eq:unif-relax} is satisfied iff the feature distribution induced by $f$ (\ie, the pushforward measure $\pdata \circ f^{-1}$) is $\sigma_d$, that is, in other words, $f$ is perfectly uniform.

    Therefore, \begin{equation*}
        \lcontr(f; \tau, 1) \geq -\frac{1}{\tau} + \int_{\sphere^d} \int_{\sphere^d} \log \left( e^{1 / \tau}+  e^{u\T v / \tau} \right) \diff \sigma_d(u) \diff \sigma_d(v) = \text{constant independent of $f$},
    \end{equation*}where equality is satisfied iff $f$ is perfectly aligned and uniform. This concludes the proof.
\end{proof}

\paragraph{Difference between conditions of Theorems~\ref{thm:asym_inf_negatives}~and~\ref{thm:asym_single_negative}.}We remark that the statement in \Cref{thm:asym_single_negative} is weaker than the previous \Cref{thm:asym_inf_negatives}. \Cref{thm:asym_single_negative} is conditioned on the existence perfectly aligned and uniform encoders. It only shows that $\lcontr(f; \tau, M=1)$  favors alignment under the condition that perfect uniformity is realizable, and vice versa. In \Cref{thm:asym_inf_negatives}, $\lcontr$ decomposes into two terms, each favoring alignment and uniformity. Therefore, the decomposition in \Cref{thm:asym_inf_negatives} is exempof t from this constraint.

\section{Experiment Details}\label{supp:sec:expr-details}
All experiments are performed on 1-4 NVIDIA Titan Xp, Titan X PASCAL, Titan RTX, or 2080 Ti GPUs.

\subsection{\cifar, \stl and \nyudepth Experiments}
For \cifar, \stl and \nyudepth experiments, we use the following settings, unless otherwise stated in Tables~\ref{supp:tbl:stl10-big}~and~\ref{supp:tbl:nyudepth-big} below: \begin{itemize}
    \item Standard data augmentation procedures are used for generating positive pairs, including resizing, cropping, horizontal flipping, color jittering, and random grayscale conversion. This follows prior empirical work in contrastive representation learning \citep{wu2018unsupervised,tian2019contrastive,hjelm2018learning,bachman2019learning}.
    \item Neural network architectures follow the corresponding experiments on these datasets in \citet{tian2019contrastive}. For \nyudepth evaluation, the architecture of the depth prediction CNN is described in \Cref{supp:tbl:nyudepth-cnn-depth-predictor-arch}.
    \item We use minibatch stochastic gradient descent (SGD) with $0.9$ momentum and $0.0001$ weight decay.
    \item We use linearly scaled learning rate ($0.12$ per $256$ batch size) \citep{goyal2017accurate}. \begin{itemize}
        \item \cifar and \stl: Optimization is done over $200$ epochs, with learning rate decayed by a factor of $0.1$ at epochs $155$, $170$, and $185$.
        \item \nyudepth: Optimization is done over $400$ epochs, with learning rate decayed by a factor of $0.1$ at epochs $310$, $340$, and $370$.
    \end{itemize}
    \item Encoders are optimized over the training split. For evaluation, we freeze the encoder, and train classifiers / depth predictors on the training set samples, and test on the validation split. \begin{itemize}
        \item \cifar and \stl: We use standard train-val split. Linear classifiers are trained with Adam \citep{kingma2014adam} over $100$ epochs, with $\beta_1=0.5, \beta_2 = 0.999, \epsilon=10^{-8}$, $128$ batch size, and an initial learning rate of $0.001$, decayed by a factor of $0.2$ at epochs $60$ and $80$.
        \item \nyudepth: We use the train-val split on the $1449$ labeled images from \citet{Silberman:ECCV12}. Depth predictors are trained with Adam \citep{kingma2014adam} over $120$ epochs, with $\beta_1=0.5, \beta_2 = 0.999, \epsilon=10^{-8}$, $128$ batch size, and an initial learning rate of $0.003$, decayed by a factor of $0.2$ at epochs $70$, $90$, $100$, and $110$.
    \end{itemize}
\end{itemize}

\begin{table*}[t!]%

\newcommand{\mround}[1]{\round{#1}{4}}
\renewcommand{\arraystretch}{1.2}
\newcommand{\NA}{---}
\centering
\small
\begin{tabular}{|c|c|c|c|c|c|c|c|}
    \hline
    \multirow{2}{*}{Operator} &
    \multirow{2}{*}{\shortstack{Input\\Spatial Shape}} &
    \multirow{2}{*}{\shortstack{Input\\\#Channel}} &
    \multirow{2}{*}{\shortstack{Kernel\\Size}} &
    \multirow{2}{*}{Stride} &
    \multirow{2}{*}{Padding} &
    \multirow{2}{*}{\shortstack{Output\\Spatial Shape}} &
    \multirow{2}{*}{\shortstack{Output\\\#Channel}} \\

    & & & & & & & \\
    \hline\hline

    Input &
    $[h_\mathsf{in}, w_\mathsf{in}]$ &
    $c_\mathsf{in}$ &
    \NA &
    \NA &
    \NA &
    $[h_\mathsf{in}, w_\mathsf{in}]$ &
    $c_\mathsf{in}$ \\
    \hline

    Conv.~Transpose + BN + ReLU &
    $[h_\mathsf{in}, w_\mathsf{in}]$ &
    $c_\mathsf{in}$ &
    3 &
    2 &
    1 &
    $[2h_\mathsf{in}, 2w_\mathsf{in}]$ &
    $\floor{c_\mathsf{in} / 2}$ \\
    \hline

    Conv.~Transpose + BN + ReLU &
    $[2h_\mathsf{in}, 2w_\mathsf{in}]$ &
    $\floor{c_\mathsf{in} / 2}$ &
    3 &
    2 &
    1 &
    $[4h_\mathsf{in}, 4w_\mathsf{in}]$ &
    $\floor{c_\mathsf{in} / 4}$ \\
    \hline

    $\vdots$ &
    $\vdots$ &
    $\vdots$ &
    $\vdots$ &
    $\vdots$ &
    $\vdots$ &
    $\vdots$ &
    $\vdots$ \\
    \hline

    Conv.~Transpose + BN + ReLU &
    $[h_\mathsf{out} / 2, w_\mathsf{out} / 2]$ &
    $\floor{c_\mathsf{in} / 2^{n-1}}$ &
    3 &
    2 &
    1 &
    $[h_\mathsf{out}, w_\mathsf{out}]$ &
    $\floor{c_\mathsf{in} / 2^n}$ \\
    \hline

    Conv. &
    $[h_\mathsf{out}, w_\mathsf{out}]$ &
    $\floor{c_\mathsf{in} / 2^n}$ &
    3 &
    1 &
    1 &
    $[h_\mathsf{out}, w_\mathsf{out}]$ &
    $1$ \\
    \hline


\end{tabular}
\caption{\nyudepth CNN depth predictor architecture. Each \textrm{Conv.\hspace{1.5pt}Transpose+BN+ReLU} block increases the spatial shape by a factor of $2$, where BN denotes Batch Normalization \citep{ioffe2015batch}. A sequence of such blocks computes a tensor of the correct spatial shape, from an input containing intermediate activations of a CNN encoder (which downsamples the input RGB image by a power of $2$). A final convolution at the end computes the single-channel depth prediction. } \label{supp:tbl:nyudepth-cnn-depth-predictor-arch}

\end{table*}

At each SGD iteration, a minibatch of $K$ positive pairs is sampled $\{(x_i, y_i)\}_{i=1}^K$, and the three losses for this minibatch are calculated as following: \begin{itemize}
    \item $\lcontr$: For each $x_i$, the sample contrastive loss is taken with the positive being $y_i$, and the negatives being $\{y_j\}_{j \neq i}$. For each $y_i$, the sample loss is computed similarly. The minibatch loss is calculated by aggregating these $2K$ terms: \begin{equation*}
        \frac{1}{2K} \sum_{i = 1}^{K}
        \log \frac{e^{f(x_i)\T f(y_i) / \tau}}{\sum_{j = 1}^{K} e^{f(x_i)\T f(y_j) / \tau}} +
        \frac{1}{2K} \sum_{i = 1}^{K} \log \frac{e^{f(x_i)\T f(y_i) / \tau}}{\sum_{j = 1}^{K} e^{f(x_j)\T f(y_i) / \tau}}.
    \end{equation*} This calculation follows empirical practices and is similar to \citet{oord2018representation,henaff2019data}, and \textit{end-to-end} in \citet{he2019momentum}.
    \item $\lalign$: The minibatch alignment loss is straightforwardly computed as \begin{equation*}
        \frac{1}{K} \sum_{i=1}^{K} \norm{f(x_i) - f(y_i)}_2^\alpha.
    \end{equation*}
    \item $\lunif$: The minibatch uniform loss is calculated by considering each pair of $\{x_i\}_i$ and $\{y_i\}_i$: \begin{equation*}
        \frac{1}{2} \log \bigg( \frac{2}{K (K - 1)} \sum_{i \neq j} e^{-t \norm{f(x_i) - f(x_j)}_2^2} \bigg) + \frac{1}{2} \log \bigg( \frac{2}{K (K - 1)} \sum_{i \neq j} e^{-t \norm{f(y_i) - f(y_j)}_2^2} \bigg).
    \end{equation*}
\end{itemize}

Tables~\ref{supp:tbl:stl10-big}~and~\ref{supp:tbl:nyudepth-big} below describe the full specifications of all $304$ \stl and $64$ \nyudepth encoders. These experiment results are visualized in main paper \Cref{fig:expr_scatter}, showing a clear connection between representation quality and $\lalign$ \& $\lunif$ metrics.

\begin{table*}[t!]%

\resizebox{
  \ifdim\width>\textwidth
    \textwidth
  \else
    \width
  \fi
}{!}{%
\centering
\small
\renewcommand{\arraystretch}{1.2}
\begin{tabular}{|c|c|c|c|c|c|c|c|c|c|}
    \hline
    \multicolumn{10}{|c|}{\multirow{2}{*}{\normalsize\imagenetsubset Classes}} \\
    \multicolumn{10}{|c|}{} \\\hline\hline
    \texttt{n02869837} & \texttt{n01749939} & \texttt{n02488291} & \texttt{n02107142} & \texttt{n13037406} & \texttt{n02091831} & \texttt{n04517823} & \texttt{n04589890} & \texttt{n03062245} & \texttt{n01773797} \\ \hline
    \texttt{n01735189} & \texttt{n07831146} & \texttt{n07753275} & \texttt{n03085013} & \texttt{n04485082} & \texttt{n02105505} & \texttt{n01983481} & \texttt{n02788148} & \texttt{n03530642} & \texttt{n04435653} \\ \hline
    \texttt{n02086910} & \texttt{n02859443} & \texttt{n13040303} & \texttt{n03594734} & \texttt{n02085620} & \texttt{n02099849} & \texttt{n01558993} & \texttt{n04493381} & \texttt{n02109047} & \texttt{n04111531} \\ \hline
    \texttt{n02877765} & \texttt{n04429376} & \texttt{n02009229} & \texttt{n01978455} & \texttt{n02106550} & \texttt{n01820546} & \texttt{n01692333} & \texttt{n07714571} & \texttt{n02974003} & \texttt{n02114855} \\ \hline
    \texttt{n03785016} & \texttt{n03764736} & \texttt{n03775546} & \texttt{n02087046} & \texttt{n07836838} & \texttt{n04099969} & \texttt{n04592741} & \texttt{n03891251} & \texttt{n02701002} & \texttt{n03379051} \\ \hline
    \texttt{n02259212} & \texttt{n07715103} & \texttt{n03947888} & \texttt{n04026417} & \texttt{n02326432} & \texttt{n03637318} & \texttt{n01980166} & \texttt{n02113799} & \texttt{n02086240} & \texttt{n03903868} \\ \hline
    \texttt{n02483362} & \texttt{n04127249} & \texttt{n02089973} & \texttt{n03017168} & \texttt{n02093428} & \texttt{n02804414} & \texttt{n02396427} & \texttt{n04418357} & \texttt{n02172182} & \texttt{n01729322} \\ \hline
    \texttt{n02113978} & \texttt{n03787032} & \texttt{n02089867} & \texttt{n02119022} & \texttt{n03777754} & \texttt{n04238763} & \texttt{n02231487} & \texttt{n03032252} & \texttt{n02138441} & \texttt{n02104029} \\ \hline
    \texttt{n03837869} & \texttt{n03494278} & \texttt{n04136333} & \texttt{n03794056} & \texttt{n03492542} & \texttt{n02018207} & \texttt{n04067472} & \texttt{n03930630} & \texttt{n03584829} & \texttt{n02123045} \\ \hline
    \texttt{n04229816} & \texttt{n02100583} & \texttt{n03642806} & \texttt{n04336792} & \texttt{n03259280} & \texttt{n02116738} & \texttt{n02108089} & \texttt{n03424325} & \texttt{n01855672} & \texttt{n02090622} \\ \hline
\end{tabular}%
}
\caption{$100$ randomly selected \imagenet classes forming the \imagenetsubset subset. These classes are the same as the ones used by \citet{tian2019contrastive}.} \label{supp:tbl:imagenet100-classes}

\end{table*}

\subsection{\imagenet and \imagenetsubset with Momentum Contrast (MoCo) Variants}

\paragraph{MoCo and MoCo v2 with $\lalign$ and $\lunif$. } At each SGD iteration, let \begin{itemize}
    \item $K$ be the minibatch size,
    \item $\{f(x_i)_i\}_{i=1}^K$ be the batched query features encoded by the current up-to-date encoder $f$ (\ie, $\mathtt{q}$ in Algorithm~1 of \citet{he2019momentum}),
    \item $\{f_\textsf{EMA}(y_i)\}_{i=1}^K$ be the batched key features encoded by the exponential moving average encoder $f_\textsf{EMA}$ (\ie, $\mathtt{k}$ in Algorithm~1 of \citet{he2019momentum}),
    \item $\{\mathtt{queue}_j\}_{j=1}^N$ be the feature queue, where $N$ is the queue size.
\end{itemize} $\lalign$ and $\lunif$ for this minibatch are calculated as following: \begin{itemize}
    \item $\lalign$: The minibatch alignment loss is computed as disparity between features from the two encoders: \begin{equation*}
        \frac{1}{K} \sum_{i=1}^{K} \norm{f(x_i) - f_\textsf{EMA}(y_i)}_2^\alpha.
    \end{equation*}
    \item $\lunif$: We experiment with two forms of $\lunif$: \begin{enumerate}
        \item Only computing pairwise distance between $\{f(x_i)\}_i$ and $\{\mathtt{queue}_j\}_j$:
        \begin{equation}
            \log \bigg( \frac{1}{NK} \sum_{i=1}^{K} \sum_{j=1}^{N} e^{-t \norm{f(x_i) - \mathtt{queue}_j}_2^2} \bigg). \label{supp:eq:moco:lunif-queue-only}
        \end{equation}
        \item Also computing pairwise distance inside $\{f(x_i)\}_i$:
        \begin{equation}
            \log \bigg( \frac{2}{2 NK + K (K-1)} \sum_{i=1}^{K} \sum_{j=1}^{N} e^{-t \norm{f(x_i) - \mathtt{queue}_j}_2^2} + \frac{2}{2 NK + K (K-1)} \sum_{i \neq j} e^{-t \norm{f(x_i) - f(x_j)}_2^2} \bigg). \label{supp:eq:moco:lunif-queue-and-intra-batch}
        \end{equation}
    \end{enumerate}
\end{itemize}

\subsubsection{\imagenetsubset with MoCo} \label{supp:sec:imagenet-100-moco}

\paragraph{\imagenetsubset details.} We use the same \imagenetsubset sampled by \citet{tian2019contrastive}, containing the $100$ randomly selected classes listed in \Cref{supp:tbl:imagenet100-classes}.%

\paragraph{MoCo settings.}
Our MoCo experiment settings below mostly follow \citet{he2019momentum} and the unofficial implementation by \citet{tian2019cmcgithub}, because the official implementation was not released at the time of performing these analyses: \begin{itemize}
    \item Standard data augmentation procedures are used for generating positive pairs, including resizing, cropping, horizontal flipping, color jittering, and random grayscale conversion, following \citet{tian2019cmcgithub}.
    \item Encoder architecture is ResNet50 \citep{he2016deep}.
    \item We use minibatch stochastic gradient descent (SGD) with $128$ batch size, $0.03$ initial learning rate, $0.9$ momentum and $0.0001$ weight decay.
    \item Optimization is done over $240$ epochs, with learning rate decayed by a factor of $0.1$ at epochs $120$, $160$, and $200$.
    \item We use $0.999$ exponential moving average factor, following \citet{he2019momentum}.
    \item For evaluation, we freeze the encoder, and train a linear classifier on the training set samples, and test on the validation split.  Linear classifiers are trained with minibatch SGD over $60$ epochs, with $256$ batch size, and an initial learning rate of $10$, decayed by a factor of $0.2$ at epochs $30$, $40$, and $50$.
\end{itemize}

\Cref{supp:tbl:imagenet100-big} below describes the full specifications of all $45$ \imagenetsubset encoders. These experiment results are visualized in main paper \Cref{fig:expr_scatter_imagenet100}, showing a clear connection between representation quality and $\lalign$ \& $\lunif$ metrics.

\subsubsection{\imagenet with MoCo v2}

\paragraph{MoCo v2 settings.}
Our MoCo v2 experiment settings directly follow \citet{chen2020improved} and the official implementation \citep{chen2020mocov2github}: \begin{itemize}
    \item Standard data augmentation procedures are used for generating positive pairs, including resizing, cropping, horizontal flipping, color jittering, random grayscale conversion, and random Gaussian blurring, following \citet{chen2020mocov2github}.
    \item Encoder architecture is ResNet50 \citep{he2016deep}.
    \item We use minibatch stochastic gradient descent (SGD) with $256$ batch size, $0.03$ initial learning rate, $0.9$ momentum and $0.0001$ weight decay.
    \item Optimization is done over $200$ epochs, with learning rate decayed by a factor of $0.1$ at epochs $120$ and $160$.
    \item We use $0.999$ exponential moving average factor, $65536$ queue size, $128$ feature dimensions.
    \item For evaluation, we freeze the encoder, and train a linear classifier on the training set samples, and test on the validation split.  Linear classifiers are trained with minibatch SGD over $100$ epochs, with $256$ batch size, and an initial learning rate of $30$, decayed by a factor of $0.1$ at epochs $60$ and $80$.
\end{itemize}

Unlike the MoCo experiments on \imagenetsubset, which were based on unofficial implementations for reasons stated in Sec.~\ref{supp:sec:imagenet-100-moco}, the MoCo v2 experiments on full \imagenet were based on the official implementation by \citet{chen2020mocov2github}. We provide a reference implementation that can fully reproduce the results in \Cref{tbl:expr_imagenet} at \href{https://github.com/SsnL/moco_align_uniform}{\texttt{https://github.com/SsnL/moco\_align\_uniform}}, where we also provide a model checkpoint (trained using $\lalign$ and $\lunif$) of $67.694\%$ validation \textrm{top1} accuracy.

\subsection{\bookcorpus with Quick-Thought Vectors Variants}
\paragraph{\bookcorpus details.} Since the original \bookcorpus dataset \citep{moviebook} is not distributed anymore, we use the unofficial code by \citet{sosuke2019bookcorpusgithub} to recreate our copy. Our copy ended up containing $52{,}799{,}513$ training sentences and $50{,}000$ validation sentences, compared to the original copy used by Quick-Thought Vectors \citep{logeswaran2018efficient}, which contains $45{,}786{,}400$ training sentences and $50{,}000$ validation sentences.

\paragraph{Quick-Thought Vectors with $\lalign$ and $\lunif$. } With Quick-Thought Vectors, the positive pairs are the neighboring sentences. At each optimization iteration, let \begin{itemize}
    \item $\{x_i\}_{i=1}^K$ be the $K$ \emph{consecutive} sentences forming this minibatch, where $K$ be the minibatch size,
    \item $f$ and $g$ be the two RNN sentence encoders.
\end{itemize} The original Quick-Thought Vectors \citep{logeswaran2018efficient} does not $l2$-normalize on encoder outputs during training the encoder. Here we describe the calculation of $\lcontr$, $\lalign$, and $\lunif$ for $l2$-normalized encoders, in our modified Quick-Thought Vectors method. Note that this does not affect evaluation since features are $l2$-normalized before using in downstream tasks, following the original Quick-Thought Vectors \citep{logeswaran2018efficient}. For a minibatch, these losses are calculated as following: \begin{itemize}
    \item $\lcontr$ with temperature:
    \begin{align*}
        & \frac{1}{K}~\mathtt{cross\_entropy}(\mathtt{softmax}(\{f(x_1)\T g(x_j)\}_j), \{0, 1, 0, \dots, 0\}) \\
        & \qquad + \frac{1}{K} \sum_{i=2}^{K-1} \mathtt{cross\_entropy}(\mathtt{softmax}(\{f(x_i)\T g(x_j)\}_j), \{\underbrace{0, \dots, 0}_{\text{$(i-2)$ $0$'s}}, \frac{1}{2}, 0, \frac{1}{2}, \underbrace{0, \dots, 0}_{\text{$(K-i-1)$ $0$'s}}\}) + \\
        & \qquad +\frac{1}{K}~\mathtt{cross\_entropy}(\mathtt{softmax}(\{f(x_K)\T g(x_j)\}_j), \{0, \dots, 1, 0\}).
    \end{align*}
    This is almost identical with the original contrastive loss used by Quick-Thought Vectors, except that this does not additionally manually masks out the entries $f(x_i)\T g(x_i)$ with zeros, which is unnecessary with $l2$-normalization.
    \item $\lalign$: The minibatch alignment loss is computed as disparity between features from the two encoders encoding neighboring sentences (assuming $K >= 2$): \begin{equation*}
        \frac{1}{K} \norm{f(x_1) - g(x_2)}_2^\alpha +
        \frac{1}{2K} \sum_{i=2}^{K-2} \left( \norm{f(x_{i-1}) - g(x_i)}_2^\alpha + \norm{f(x_{i}) - g(x_{i+1})}_2^\alpha \right) +
        \frac{1}{K} \norm{f(x_{K-1}) - g(x_K)}_2^\alpha.
    \end{equation*}
    \item $\lunif$: We combine the uniformity losses for each of $f$ and $g$ by summing them (instead of averaging since $f$ and $g$ are two different encoders): \begin{equation*}
        \frac{2}{K(K-1)} \sum_{i \neq j} e^{-t \norm{f(x_i) - f(x_j)}_2^2} + \frac{2}{K(K-1)} \sum_{i \neq j} e^{-t \norm{g(x_i) - g(x_j)}_2^2}.
    \end{equation*}
\end{itemize}

Our experiment settings below mostly follow the official implementation by \citet{logeswaran2018efficient}: \begin{itemize}
    \item Sentence encoder architecture is bi-directional Gated Recurrent Unit (GRU) \citep{cho-etal-2014-learning} with inputs from a $620$-dimensional word embedding trained jointly from scratch.
    \item We use Adam \citep{kingma2014adam} with $\beta_1=0.9, \beta_2=0.999, \epsilon=10^{-8}$, $400$ batch size, $0.0005$ constant learning rate, and $0.5$ gradient norm clipping.
    \item Optimization is done during $1$ epoch over the training data.
    \item For evaluation on a binary classification task, we freeze the encoder, and fit a logistic classifier with $l2$ regularization on the encoder outputs. A $10$-fold cross validation is performed to determine the regularization strength among $\{1, 2^{-1}, \dots, 2^{-8}\}$, following \citet{kiros2015skip} and \citet{logeswaran2018efficient}. The classifier is finally tested on the validation split.
\end{itemize}

\Cref{supp:tbl:bookcorpus-big} below describes the full specifications of all $108$ \bookcorpus encoders along with $6$ settings that lead to training instability (\ie, $\mathtt{NaN}$ occurring). These experiment results are visualized in main paper \Cref{fig:expr_scatter_bookcorpus}, showing a clear connection between representation quality and $\lalign$ \& $\lunif$ metrics. For the unnormalized encoders, the features are normalized before calculated $\lalign$ and $\lunif$ metrics, since they are nonetheless still normalized before being used in downstream tasks \citep{logeswaran2018efficient}.

\newpage

{
\newcommand{\scalem}[1]{\scalebox{0.62}{{\normalsize #1}}}
\newcommand{\scalemb}[1]{\scalebox{0.75}{{\normalsize #1}}}
\fontsize{6.5}{8}\selectfont
\centering
\newcommand{\mround}[1]{\round{#1}{4}}
\newcommand{\mroundprec}[1]{\round{#1}{2}\%}%
\renewcommand{\arraystretch}{1.2}%
\newcommand{\NA}{---}%
\renewcommand{\tabcolsep}{1.5pt}%
\setlength\LTleft{-1.32em}%

}

\end{document}


\onecolumn
\icmltitle{
    Understanding Contrastive Representation Learning through \\ Geometry on the Hypersphere Supplementary Material
}



\icmlsetsymbol{equal}{*}

\begin{icmlauthorlist}
\icmlauthor{Aeiau Zzzz}{equal,to}
\icmlauthor{Bauiu C.~Yyyy}{equal,to,goo}
\icmlauthor{Cieua Vvvvv}{goo}
\icmlauthor{Iaesut Saoeu}{ed}
\icmlauthor{Fiuea Rrrr}{to}
\icmlauthor{Tateu H.~Yasehe}{ed,to,goo}
\icmlauthor{Aaoeu Iasoh}{goo}
\icmlauthor{Buiui Eueu}{ed}
\icmlauthor{Aeuia Zzzz}{ed}
\icmlauthor{Bieea C.~Yyyy}{to,goo}
\icmlauthor{Teoau Xxxx}{ed}
\icmlauthor{Eee Pppp}{ed}
\end{icmlauthorlist}

\icmlaffiliation{to}{Department of Computation, University of Torontoland, Torontoland, Canada}
\icmlaffiliation{goo}{Googol ShallowMind, New London, Michigan, USA}
\icmlaffiliation{ed}{School of Computation, University of Edenborrow, Edenborrow, United Kingdom}

\icmlcorrespondingauthor{Cieua Vvvvv}{c.vvvvv@googol.com}
\icmlcorrespondingauthor{Eee Pppp}{ep@eden.co.uk}

\icmlkeywords{Machine Learning, ICML}

\vskip 0.3in



\printAffiliationsAndNotice{}  


\section{Proofs and Additional Theoretical Results}

In this section, we present proofs for propositions and theorems in main paper Sections~4.1.1~and~4.2.

The propositions in Section~4.1.1 illustrate the deep relations between the Gaussian kernel $G_t \colon \sphere^d \times \sphere^d \rightarrow \R$ and the uniform distribution on the unit hypersphere $\sphere^d$. As we will show below in Section~\ref{supp:sec:proof-prop-gauss-unif}, these properties directly follow well-known results on strictly positive definite kernels.

In Section~\ref{supp:sec:proofs-asymptotics}, we present a proof for Theorem~\ref{thm:asym_inf_negatives}. Theorem~\ref{thm:asym_inf_negatives} describes the asymptotic behavior of $\lcontr$ as the number of negative samples $M$ approaches infinity. The theorem is strongly related to empirical contrastive learning, given an error term (deviation from the limit) decaying in $\mathcal{O}(M^{-2/3})$ and that empirical practices often use a large number of negatives (\eg, $M=65536$ in \citet{he2019momentum}) based on the observation that using more negatives consistently leads to better representation quality \citep{wu2018unsupervised,tian2019contrastive,he2019momentum}. Our proof further reveals connections between $\lcontr$ and $\lunif$ which is defined via the Gaussian kernels.

Finally, also in Section~\ref{supp:sec:proofs-asymptotics}, we present a weaker result on the setting where only a single negative is used in $\lcontr$ (\ie, $M=1$).

\subsection{Proofs for Section~4.1.1} \label{supp:sec:proof-prop-gauss-unif}

To prove Propositions~\ref{supp:prop:continuous-problem}~and~\ref{supp:prop:discrete-problems}, we utilize the \emph{strict positive definiteness} \citep{bochner1992monotone,stewart1976positive} of the Gaussian kernel $G_t$: \begin{equation*}
    G_t(u, v) \trieq e^{-t \norm{u - v}_2^2} = e^{2t \cdot u\T v - 2t}, \quad t > 0.
\end{equation*} From there, we apply a known result about such kernels, from which the two propositions directly follow.

\begin{definition}[Strict positive definiteness \citep{bochner1992monotone,stewart1976positive}]
    A symmetric and lower semi-continuous kernel $K$ on $A \times A$ (where $A$ is infinite and compact) is called strictly positive definite if for every finite signed Borel measure $\mu$ supported on $A$ whose energy \begin{equation*}
        I_K[\mu] \trieq \int_{\sphere^d} \int_{\sphere^d} K(u, v) \diff \mu(v) \diff \mu(u)
    \end{equation*}
    is well defined, we have $I_K[\mu] \geq 0$, where equality holds only if $\mu \equiv 0$ on the $\sigma$-algebra of Borel subsets of $A$.
\end{definition}

\begin{definition}
    Let $\mathcal{M}(\sphere^{d})$ be the set of Borel probability measures on $\sphere^{d}$.
\end{definition}

We are now in the place to apply the following two well-known results, which we present by restating Proposition~4.4.1, Theorem~6.2.1 and Corollary~6.2.2 of \citet{borodachov2019discrete} in weaker forms. We refer readers to \citet{borodachov2019discrete} for their proofs.
\begin{lemma}[Strict positive definiteness of $G_t$] \label{supp:lemma-strict-pd-gaussian}
    For $t > 0$, the Gaussian kernel $G_t(u, v) \trieq e^{-t \norm{u - v}_2^2} = e^{2t \cdot u\T v - 2t}$ is strictly positive definite  on $\sphere^d \times \sphere^d$.
\end{lemma}

\begin{lemma}[Strictly positive definite kernels on $\sphere^d$] \label{supp:lemma:strict-pd-uniform}
    Consider kernel $K_f \colon \sphere^d \times \sphere^d \rightarrow (-\infty, +\infty]$ of the form, \begin{equation}
        K_f(u, v) \trieq f(\norm{u - v}_2^2).
    \end{equation}
    If $K_f$ is strictly positive definite on $\sphere^d \times \sphere^d $ and $I_{K_f}[\sigma_d]$ is finite, then $\sigma_d$ is the unique measure (on Borel subsets of $\sphere^d$) in the solution of $\min_{\mu \in \mathcal{M}(\sphere^d)} I_{K_f}[\mu]$, and the normalized counting measures associated with any $K_f$-energy minimizing sequence of $N$-point configurations on $\sphere^d$ converges weak$^*$ to $\sigma_d$.

    In particular, this conclusion holds whenever $f$ has the property that $-f'(t)$ is strictly completely monotone on $(0, 4]$ and $I_{K_f}[\sigma_d]$ is finite.
\end{lemma}

\begin{proposition} \label{supp:prop:continuous-problem}
    $\sigma_{d}$ is the unique solution (on Borel subsets of $\sphere^d$) of \begin{equation}
        \min_{\mu \in \mathcal{M}(\sphere^{d})} I_{G_t}[\mu] = \min_{\mu \in \mathcal{M}(\sphere^{d})} \int_{\sphere^d} \int_{\sphere^d} G_t(u, v) \diff \mu(v) \diff \mu(u). \label{supp:prop:continuous-problem-solution}
    \end{equation}
\end{proposition}
\begin{proof}[Proof of Proposition~\ref{supp:prop:continuous-problem}]
    This is a direct consequence of Lemmas~\ref{supp:lemma-strict-pd-gaussian}~and~\ref{supp:lemma:strict-pd-uniform}.
\end{proof}

\begin{proposition} \label{supp:prop:discrete-problems}
    For each $N > 0$, the $N$ point minimizer of the average pairwise potential is \begin{equation*}
        \mathbf{u}^*_N = \argmin_{u_1, u_2, \dots, u_N \in \sphere^{d}} \sum_{1 \leq i < j \leq N} G_t(u_i, u_j).
    \end{equation*}
    The normalized counting measures associated with the $\{\mathbf{u}^*_N\}_{N=1}^\infty$ sequence converge weak$^*$ to $\sigma_d$.
\end{proposition}
\begin{proof}[Proof of Proposition~\ref{supp:prop:discrete-problems}]
    This is a direct consequence of Lemmas~\ref{supp:lemma-strict-pd-gaussian}~and~\ref{supp:lemma:strict-pd-uniform}.
\end{proof}

\subsection{Proofs and Additional Results for Section~4.2} \label{supp:sec:proofs-asymptotics}

The following lemma directly follows Theorem~3.3 and Remarks~3.4~(b)(i) of \citet{serfozo1982convergence}. We refer readers to \citet{serfozo1982convergence} for its proof.

\begin{lemma} \label{supp:lemma:int-conv-fn-conv-measure}
    Let $A$ be a compact second countable Hausdorff space. Suppose \begin{enumerate}
        \item $\{\mu_n\}_{n=1}^\infty$ is a sequence of finite and positive Borel measures supported on $A$ that converges weak$^*$ to some finite and positive Borel measure $\mu$ (which is same as vague convergence since $A$ is compact);
        \item $\{f_n\}_{n=1}^\infty$ is a sequence of Borel measurable functions that converges continuously to a Borel measurable $f$;
        \item $\{f_n\}_n$ are uniformly bounded over $A$.
    \end{enumerate}
    Then, we have the following convergence: \begin{equation*}
        \lim_{n \rightarrow \infty} \int_{x \in A} f_n(x) \diff \mu_n(x) = \int_{x \in A} f(x) \diff \mu(x).
    \end{equation*}
\end{lemma}

\begin{theorem}[Asymptotics of $\lcontr$] \label{thm:asym_inf_negatives}
    For fixed $\tau > 0$, as the number of negative samples $M \rightarrow \infty$, the (normalized) contrastive loss converges to \begin{align}
        & \lim_{M \rightarrow \infty} \lcontr(f; \tau, M) - \log M \notag \\
        & \qquad\qquad =
        \lim_{M \rightarrow \infty} \expectunder[\substack{
            (x, y) \sim \distnpos \\
            \{x^-_i\}_{i=1}^M \iidsim \distndata
        }]{- \log \frac{e^{f(x)\T f(y) / \tau}}{e^{f(x)\T f(y) / \tau} + \sum_i e^{f(x^-_i)\T f(y) / \tau}}} - \log M \notag \\
        & \qquad\qquad =
        -\frac{1}{\tau} \expectunder[(x, y) \sim \distnpos]{f(x)\T f(y)} + \expectunder[x \sim \distndata]{\log\expectunder[x^- \sim \distndata]{e^{f(x^-)\T f(x) / \tau}}}.
        \label{eq:contrastive_loss_limit}
    \end{align}

    We have the following results: \begin{enumerate}
        \item \label{thm:asym_inf_negatives:itm:align} The first term is minimized iff $f$ is perfectly aligned.
        \item \label{thm:asym_inf_negatives:itm:uniform} If perfectly uniform encoders exist, they form the exact minimizers of the second term.
        \item \label{thm:asym_inf_negatives:itm:convergence-rate} For the convergence in Equation~\eqref{eq:contrastive_loss_limit}, the error term (the absolute deviation from the limit) decays in $\mathcal{O}(M^{-2/3})$.
    \end{enumerate}
\end{theorem}

\begin{proof}[Proof of Theorem~\ref{thm:asym_inf_negatives}]
We first show the convergence stated in Equation~\eqref{eq:contrastive_loss_limit} along with its speed (result~\ref{thm:asym_inf_negatives:itm:convergence-rate}), and then the relations between the two limiting terms and the alignment and uniformity properties (results~\ref{thm:asym_inf_negatives:itm:align}~and~\ref{thm:asym_inf_negatives:itm:uniform}).

\begin{itemize}
    \item \itempara{Proof of the convergence in Equation~\eqref{eq:contrastive_loss_limit} and the $\mathcal{O}(M^{-2/3})$ decay rate of its error term (result~\ref{thm:asym_inf_negatives:itm:convergence-rate}).}

    Note that for any $x, y \in \R^n$ and $\{x^-_i\}_{i=1}^{M} \iidsim \pdata$, we have \begin{equation}
        \lim_{M \rightarrow \infty} \log\left(\frac{1}{M} e^{f(x)\T f(y) / \tau} + \frac{1}{M} \sum_{i=1}^M e^{f(x^-_i)\T f(x) / \tau} \right) = \log\expectunder[x^- \sim \pdata]{e^{f(x^-)\T f(x) / \tau}} \qquad\text{almost surely}, \label{supp:eq:asymp-inner-convergence-slln}
    \end{equation}
    by the strong law of large numbers (SLLN) and the Continuous Mapping Theorem.

    Then, we can derive
    \begin{equation*}
    \begin{split}
        & \lim_{M \rightarrow \infty} \lcontr(f; \tau, M) - \log M \\
        & \qquad\qquad = \expectunder[(x, y) \sim \ppos]{- f(x)\T f(y) / \tau} + \lim_{M \rightarrow \infty} \expectunder[\substack{
            (x, y) \sim \ppos \\
            \{x^-_i\}_{i=1}^{M} \iidsim \pdata
            }]{\log\left(\frac{1}{M} e^{f(x)\T f(y) / \tau} + \frac{1}{M} \sum_{i=1}^M e^{f(x^-_i)\T f(x) / \tau} \right)} \\
        & \qquad\qquad = \expectunder[(x, y) \sim \ppos]{- f(x)\T f(y) / \tau} + \expect{\lim_{M \rightarrow \infty}  \log\left(\frac{1}{M} e^{f(x)\T f(y) / \tau} + \frac{1}{M} \sum_{i=1}^M e^{f(x^-_i)\T f(x) / \tau} \right)}\\
        & \qquad\qquad = -\frac{1}{\tau} \expectunder[(x, y) \sim \ppos]{f(x)\T f(y)} + \expectunder[x \sim \pdata]{\log\expectunder[x^- \sim \pdata]{e^{f(x^-)\T f(x) / \tau}}},
    \end{split}
    \end{equation*}
    where we justify the switching of expectation and limit by the convergence stated in Equation~\eqref{supp:eq:asymp-inner-convergence-slln}, the boundedness of $e^{u\T v/\tau}$ (where $u, v \in \sphere^d, \tau > 0$), and the Dominated Convergence Theorem (DCT).

    For convergence speed, we consider both sides: \begin{align}
        & \left(\lcontr(f; \tau, M) - \log M\right) - \left( \lim_{M \rightarrow \infty} \lcontr(f; \tau, M) - \log M \right) \notag\\
        & \qquad\qquad = \expectunder[\substack{
            (x, y) \sim \ppos \\
            \{x^-_i\}_{i=1}^{M} \iidsim \pdata
            }]{\log\left(\frac{1}{M} e^{f(x)\T f(y) / \tau} + \frac{1}{M} \sum_{i=1}^M e^{f(x^-_i)\T f(x) / \tau} \right)}
            - \expectunder[x \sim \pdata]{\log\expectunder[x^- \sim \pdata]{e^{f(x^-)\T f(x) / \tau}}} \notag\\
        & \qquad\qquad \leq \expectunder[\substack{
            x \sim \pdata \\
            \{x^-_i\}_{i=1}^{M} \iidsim \pdata
            }]{\log\left(\frac{1}{M} e^{1 / \tau} + \frac{1}{M} \sum_{i=1}^M e^{f(x^-_i)\T f(x) / \tau} \right)}
            - \expectunder[x \sim \pdata]{\log\expectunder[x^- \sim \pdata]{e^{f(x^-)\T f(x) / \tau}}} \notag\\
        & \qquad\qquad \leq \expectunder[x \sim \pdata]{
            \log \expectunder[\{x^-_i\}_{i=1}^{M} \iidsim \pdata]{\frac{1}{M} e^{1 / \tau} + \frac{1}{M} \sum_{i=1}^M e^{f(x^-_i)\T f(x) / \tau} } - \log\expectunder[x^- \sim \pdata]{e^{f(x^-)\T f(x) / \tau}}} \notag\\
        & \qquad\qquad = \expectunder[x \sim \pdata]{
            \log \expectunder[x^- \sim \pdata]{\frac{1}{M} e^{1 / \tau} + e^{f(x^-)\T f(x) / \tau} } - \log\expectunder[x^- \sim \pdata]{e^{f(x^-)\T f(x) / \tau}}} \notag\\
        & \qquad\qquad \leq \expectunder[x \sim \pdata]{\frac{1}{M} e^{2 / \tau}} \notag\\
        & \qquad\qquad = \frac{1}{M} e^{2 / \tau},
    \end{align}
    where the last inequality follows the concavity of $\log$, and
    \begin{align}
        & \left( \lim_{M \rightarrow \infty} \lcontr(f; \tau, M) - \log M \right) - \left(\lcontr(f; \tau, M) - \log M\right) \notag\\
        & \qquad\qquad = \expectunder[\substack{
            (x, y) \sim \ppos \\
            \{x^-_i\}_{i=1}^{M} \iidsim \pdata
            }]{\log\expectunder[x^- \sim \pdata]{e^{f(x^-)\T f(x) / \tau}} - \log\left(\frac{1}{M} e^{f(x)\T f(y) / \tau} + \frac{1}{M} \sum_{i=1}^M e^{f(x^-_i)\T f(x) / \tau}\right)} \notag\\
        & \qquad\qquad \leq \frac{M}{M+1} e^{1/\tau} \expectunder[\substack{
            (x, y) \sim \ppos \\
            \{x^-_i\}_{i=1}^{M} \iidsim \pdata
            }]{\abs{\expectunder[x^- \sim \pdata]{e^{f(x^-)\T f(x) / \tau}} - \left(\frac{1}{M} e^{f(x)\T f(y) / \tau} + \frac{1}{M} \sum_{i=1}^M e^{f(x^-_i)\T f(x) / \tau}\right)}} \notag\\
        & \qquad\qquad \leq \frac{1}{M+1} e^{2/\tau} + \frac{M}{M+1} e^{1/\tau} \expectunder[
            x, \{x^-_i\}_{i=1}^{M} \iidsim \pdata]{\abs{\expectunder[x^- \sim \pdata]{e^{f(x^-)\T f(x) / \tau}} - \frac{1}{M} \sum_{i=1}^M e^{f(x^-_i)\T f(x) / \tau}}} \notag\\
        & \qquad\qquad \leq \frac{1}{M+1} e^{2/\tau} + \frac{5}{4} \frac{M^{1/3}}{M+1} e^{1/\tau} \left(e^{1/\tau} - e^{-1/\tau} \right),
    \end{align}
    where the first inequality follows the concavity of $\log$ and the last inequality follows the simple bound from Chebychev's inequality: denoting \iid random variables $e^{f(x_i)\T f(x) / \tau}$ for $x_i \sim \pdata$ as $Y_i$ with $\supp(Y_i) \subset [e^{-1/\tau}, e^{1/\tau}]$, and their mean as $\bar{Y} \trieq \expect{Y_i}$, we have \begin{align*}
        \expect{\abs{\frac{1}{M}\sum_{i=1}^M Y_i - \bar{Y}}}
        & \leq \prob{\abs{\frac{1}{M}\sum_{i=1}^M Y_i - \bar{Y}} \geq M^{-2/3} \left(e^{1/\tau} - e^{-1/\tau}\right)} \left(e^{1/\tau} - e^{-1/\tau}\right) \\
        & \qquad\qquad+ \left(1- \prob{\abs{\frac{1}{M}\sum_{i=1}^M Y_i - \bar{Y}} \geq M^{-2/3} \left(e^{1/\tau} - e^{-1/\tau}\right)}\right)  M^{-2/3} \left(e^{1/\tau} - e^{-1/\tau}\right) \\
        & \leq \frac{\var{Y_i}}{M^2 \cdot M^{-4/3}(e^{1/\tau} - e^{-1/\tau})^2} \left(e^{1/\tau} - e^{-1/\tau}\right) + M^{-2/3} \left(e^{1/\tau} - e^{-1/\tau}\right) \\
        & \leq \frac{5}{4}  M^{-2/3} \left(e^{1/\tau} - e^{-1/\tau}\right),
    \end{align*}
    where the last inequality is from $\var{Y_i} \leq \frac{1}{4} \left(e^{1/\tau} - e^{-1/\tau}\right)^2$ given its bounded support.

    Combining both sides, we can immediately see that the absolute deviation from the limit decays in $\mathcal{O}(M^{-2/3})$.

    \item \itempara{Proof of result~\ref{thm:asym_inf_negatives:itm:align}: The first term is minimized iff $f$ is perfectly aligned.}

    Note that for $u, v \in \sphere^d$, \begin{equation*}
        \norm{u - v}_2^2 = 2 - 2 \cdot u^T v.
    \end{equation*}

    Then the result follows directly the definition of perfect alignment, and the existence of perfectly aligned encoders (\eg, an encoder that maps every input to the same output vector).

    \item \itempara{Proof of result~\ref{thm:asym_inf_negatives:itm:uniform}: If perfectly uniform encoders exist, they form the exact minimizers of the second term.}

    For simplicity, we define the following notation: \begin{definition}
        $\forall \mu \in \mathcal{M}(\sphere^{d})$, $u \in \sphere^d$, we define the continuous and Borel measurable function  \begin{equation}
            U_\mu(u) \trieq \int_{\sphere^d} e^{u\T v / \tau} \diff \mu(v).
        \end{equation} with its range bounded in $[e^{-1/\tau}, e^{1/\tau}]$.
    \end{definition}

    Then the second term can be equivalently written as \begin{equation*}
        \expectunder[x \sim \pdata]{\log\expectunder[x^- \sim \pdata]{e^{f(x^-)\T f(x) / \tau}}} = \expectunder[x \sim \pdata]{\log U_{\pdata \circ f^{-1}} (f(x))},
    \end{equation*} where $\pdata \circ f^{-1} \in \mathcal{M}(\sphere^d)$ is the probability measure of features, \ie, the pushforward measure of $\pdata$ via $f$.

    We now consider the following relaxed problem, where the minimization is taken over $\mathcal{M}(\sphere^{d})$, all possible Borel probability measures on the hypersphere $\sphere^d$: %
    \begin{equation}
        \min_{\mu \in \mathcal{M}(\sphere^{d})} \int_{\sphere^d} \log U_\mu(u) \diff \mu(u).
        \label{supp:eq:thm:inf-neg-unif-relaxed-measure}
    \end{equation}

    Our strategy is to show that the unique minimizer of Equation~\eqref{supp:eq:thm:inf-neg-unif-relaxed-measure} is $\sigma_d$, from which the result~\ref{thm:asym_inf_negatives:itm:uniform} directly follows. The rest of the proof is structured in three parts.

    \begin{enumerate}
        \item
        \itempara{We show that minimizers of Equation~\eqref{supp:eq:thm:inf-neg-unif-relaxed-measure} exist, \ie, the above infimum is attained for some $\mu \in \mathcal{M}(\sphere^d)$.}\par\par

        Let $\{\mu_m\}_{m=1}^\infty$ be a sequence in $\mathcal{M}(\sphere^d)$ such that the infimum of Equation~\eqref{supp:eq:thm:inf-neg-unif-relaxed-measure} is reached in the limit: \begin{equation*}
            \lim_{m\rightarrow \infty} \int_{\sphere^d} \log U_{\mu_m}(u) \diff {\mu_m}(u) = \inf_{\mu \in \mathcal{M}(\sphere^{d})} \int_{\sphere^d}  \log U_\mu(u) \diff \mu(u).
        \end{equation*}
        From the Helly's Selection Theorem, let $\mu^*$ denote some weak$^*$ cluster point of this sequence. Then $\mu_m$ converges weak$^*$ to $\mu^*$ along a subsequence $m \in \mathcal{N} \in \N$. For simplicity and with a slight abuse of notation, we denote this convergent (sub)sequence of measures by $\{\mu_n\}_{n=1}^{\infty}$.

        We want to show that $\mu^*$ attains the limit (and thus the infimum), \ie,  \begin{equation}
            \int_{\sphere^d} \log U_{\mu^*}(u) \diff \mu^*(u) = \lim_{n\rightarrow \infty} \int_{\sphere^d} \log U_{\mu_n}(u) \diff {\mu_n}(u). \label{supp:eq:thm:converge-new-obj}
        \end{equation}

        In view of Lemma~\ref{supp:lemma:int-conv-fn-conv-measure}, since $\sphere^d$ is a compact second countable Hausdorff space and $\{\log U_{\mu_n}\}_n$ is uniformly bounded over $\sphere^d$, it remains to prove that $\{\log U_{\mu_n}\}_n$ is continuously convergent to $\log U_{\mu^*}$.

        Consider any convergent sequence of points $\{x_n\}_{n=1}^\infty \in \R^{d+1}$ \st $x_n \rightarrow x$ where $x \in \sphere^d$.

        Let $\delta_n = x_n - x$. By simply expanding $U_{\mu_n}$ and $\mu_{\mu^*}$, we have
        \begin{equation*}
            e^{-\norm{\delta_n} / \tau} U_{\mu_n}(x) \leq U_{\mu_n}(x_n) \leq e^{\norm{\delta_n} / \tau} U_{\mu_n}(x).
        \end{equation*}
        Since both the upper and the lower bound converge to $U_{\mu^*}(x)$  (by the weak $^*$ convergence of $\{\mu_n\}_n$ to $\mu^*$), $U_{\mu_n}(x_n)$ must as well. We have proved the continuous convergence of $\{\log U_{\mu_n}\}_n$ to $\log U_{\mu^*}$.

        Therefore, the limit in Equation~\eqref{supp:eq:thm:converge-new-obj} holds. The infimum is thus attained at $\mu^*$: \begin{equation*}
            \lim_{n \rightarrow \infty} \int_u \log U_{\mu_n}(u) \diff {\mu_n} = \int_u \log U_{\mu^*}(u) \diff \mu^*.
        \end{equation*}

        \item
        \itempara{We show that $U_{\mu^*}$ is constant $\mu^*$-almost surely for any minimizer $\mu^*$ of Equation~\eqref{supp:eq:thm:inf-neg-unif-relaxed-measure}.}

        Let $\mu^*$ be any solution of Equation~\eqref{supp:eq:thm:inf-neg-unif-relaxed-measure}: \begin{equation*}
            \mu^* \in \argmin_{\mu \in \mathcal{M}(\sphere^{d})} \int_u  \log U_\mu(u) \diff \mu.
        \end{equation*}

        Consider the Borel sets where $\mu^*$ has positive measure: $\mathcal{T} \trieq \{T \in \mathcal{B}(\sphere^d) \colon \mu^*(T) > 0\}$. For any $T \in \mathcal{T}$, let $\mu^*_T$ denote the conditional distribution of $\mu^*$ on $T$, \ie, $\forall A \in \mathcal{B}(\sphere^d)$, \begin{equation*}
            \mu^*_T(A) = \frac{\mu^*(A \cap T)}{\mu^*(T)}.
        \end{equation*}

        Note that for any such $T \in \mathcal{T}$, the mixture $(1 - \alpha)\mu^* + \alpha \mu^*_T$ is a valid probability distribution (\ie, in $\mathcal{M}(\sphere^d)$) for $\alpha \in (-\mu^*(T), 1)$, an open interval containing $0$.

        By the first variation, we must have%
         \begin{align}
            0
            & = \frac{\partial}{\partial \alpha}
            \int_{\sphere^d} \log U_{(1-\alpha) \mu^* + \alpha \mu^*_T}(u) \diff ((1-\alpha) \mu^* + \alpha \mu^*_T)(u)
            \evalat{\alpha=0} \notag\\
            & =
            \frac{\partial}{\partial \alpha} (1-\alpha)
            \int_{\sphere^d} \log U_{(1-\alpha) \mu^* + \alpha \mu^*_T}(u) \diff \mu^*(u)
            \evalat{\alpha=0}
            + \frac{\partial}{\partial \alpha} \alpha
            \int_{\sphere^d} \log U_{(1-\alpha) \mu^* + \alpha \mu^*_T}(u) \diff \mu^*_T(u)
            \evalat{\alpha=0} \notag\\
            & =
            - \int_{\sphere^d} \log U_{(1-\alpha) \mu^* + \alpha \mu^*_T}(u) \diff \mu^*(u)
            \evalat{\alpha=0}
            +
            \frac{\partial}{\partial \alpha}
            \int_{\sphere^d} \log U_{(1-\alpha) \mu^* + \alpha \mu^*_T}(u) \diff \mu^*(u)
            \evalat{\alpha=0} \notag\\
            & \qquad\qquad +
            \int_{\sphere^d} \log U_{(1-\alpha) \mu^* + \alpha \mu^*_T}(u) \diff \mu^*_T(u)
            \evalat{\alpha=0}
            +
            0 \cdot \frac{\partial}{\partial \alpha}
            \int_{\sphere^d} \log U_{(1-\alpha) \mu^* + \alpha \mu^*_T}(u) \diff \mu^*_T(u)
            \evalat{\alpha=0} \notag\\
            & =
            - \int_{\sphere^d} \log U_{\mu^*}(u) \diff \mu^*(u)
            +
            \int_{\sphere^d} \frac{U_{\mu^*_T}(u) - U_{\mu^*}(u)}{U_{\mu^* }(u)} \diff \mu^*(u) \notag\\
            & \qquad\qquad +
            \int_{\sphere^d} \log U_{\mu^*}(u) \diff \mu^*_T(u)
            +
            0 \cdot
            \int_{\sphere^d} \frac{U_{\mu^*_T}(u) - U_{\mu^*}(u)}{U_{\mu^* }(u)} \diff \mu^*_T(u) \notag\\
            & =
            \int_{\sphere^d} \frac{U_{\mu^*_T}(u)}{U_{\mu^* }(u)} \diff \mu^*(u)
            +
            \int_{\sphere^d} \log U_{\mu^*}(u) \diff (\mu^*_T - \mu^*)(u) - 1 \label{supp:thm:asym-inf-neg:eq:alpha-first-variation},
        \end{align}where the Leibniz rule along with the boundedness of $U_{\mu^*}$ and $U_{\mu^*_{T_n}}$ together justify the exchanges of integration and differentiation.

        Let $\{T_n\}_{n=1}^\infty$ be a sequence of sets in $\mathcal{T}$ such that \begin{equation*}
            \lim_{n \rightarrow \infty} \int_{\sphere^d} U_{\mu^*}(u) \diff \mu_{T_n}^*(u) = \sup_{T \in \mathcal{T}} \int_{\sphere^d} U_{\mu^*}(u) \diff \mu_{T}^*(u) \trieq U^*,
        \end{equation*} where the supremum must exist since $U_{\mu^*}$ is bounded above.

        Because $U_{\mu^*}$ is a continuous and Borel measurable function, we have $\{u \colon U_{\mu^*}(u) > U^*\} \in \mathcal{B}(\sphere^d)$ and thus \begin{align*}
            \mu^*(\{u \colon U_{\mu^*}(u) > U^*\}) & = 0, \\
            \mu^*_{T_n}(\{u \colon U_{\mu^*}(u) > U^*\}) & = 0, & \forall n = 1, 2, \dots,
        \end{align*}
        otherwise $\{u \colon U_{\mu^*}(u) > U^*\} \in \mathcal{T}$.

        Asymptotically, $U_{\mu^*}$ is constant $\mu^*_{T_n}$-almost surely: \begin{align*}
            & \int_{\sphere^d} \abs{ U_{\mu^*}(u) - \int_{\sphere^d} U_{\mu^*}(u') \diff \mu_{T_n}^*(u')} \diff \mu_{T_n}^*(u) \\
            & \qquad\qquad = 2 \int_{\sphere^d} \max \left(0,~U_{\mu^*}(u) - \int_{\sphere^d} U_{\mu^*}(u') \diff \mu_{T_n}^*(u') \right) \diff \mu_{T_n}^*(u) \\
            & \qquad\qquad \leq 2 (U^* - \int_{\sphere^d} U_{\mu^*}(u) \diff \mu_{T_n}^*(u)) \\
            & \qquad\qquad \rightarrow 0, & \text{as $n \rightarrow \infty$,}
        \end{align*}
        where the inequality follows the boundedness of $U_{\mu^*}$ and that $\mu^*_{T_n}(\{u \colon U_{\mu^*}(u) > U^*\}) = 0$.

        Therefore, given the continuity of $\log$ and the boundedness of $U_{\mu^*}$, we have \begin{equation*}
            \lim_{n\rightarrow \infty} \int_{\sphere^d} \log U_{\mu^*}(u) \diff \mu_{T_n}^*  = \log U^*.
        \end{equation*}


        Equation~\eqref{supp:thm:asym-inf-neg:eq:alpha-first-variation} gives that $\forall n = 1, 2, \dots$, \begin{align*}
            1
            & = \int_{\sphere^d} \frac{U_{\mu_{T_n}^*}(u)}{U_{\mu^* }(u)} \diff \mu^*
            +
            \int_{\sphere^d} \log U_{\mu^*}(u) \diff (\mu_{T_n}^* - \mu^*) \notag\\
            & \geq \frac{1}{U^*} \int_{\sphere^d} U_{\mu_{T_n}^*}(u) \diff \mu^*(u)
            +
            \int_{\sphere^d} \log U_{\mu^*}(u) \diff \mu_{T_n}^*
            -
            \int_{\sphere^d} \log U_{\mu^*}(u) \diff \mu^* \\
            & = \frac{1}{U^*} \int_{\sphere^d} U_{\mu^*}(u) \diff \mu_{T_n}^*(u)
            +
            \int_{\sphere^d} \log U_{\mu^*}(u) \diff \mu_{T_n}^*
            -
            \int_{\sphere^d} \log U_{\mu^*}(u) \diff \mu^*,
        \end{align*}
        where the inequality follows the boundedness of $\frac{U_{\mu_{T_n}^*}}{U_{\mu^*}}$ and that $\mu^*(\{u \colon U_{\mu^*}(u) > U^*\}) = 0$.

        Taking the limit of $n \rightarrow \infty$ on both sides, we have \begin{align*}
            1 = \lim_{n \rightarrow \infty} 1
            & \geq
            \frac{1} {U^*} \lim_{n \rightarrow \infty} \int_{\sphere^d} U_{\mu^*}(u) \diff \mu_{T_n}^*(u)
            +
            \lim_{n\rightarrow \infty} \int_{\sphere^d} \log U_{\mu^*}(u) \diff \mu_{T_n}^*(u)
            -
            \int_{\sphere^d} \log U_{\mu^*}(u) \diff \mu^*(u) \\
            & = 1 + \log U^* - \int_{\sphere^d} \log U_{\mu^*}(u) \diff \mu^*(u) \\
            & \geq 1 + \log U^* - \log \int_{\sphere^d} U_{\mu^*}(u) \diff \mu^*(u) \\
            & \geq 1,
        \end{align*}where the last inequality holds because the supremum taken over $\mathcal{T} \supset \{\sphere^d\}$.

        Since $1=1$, all inequalities must be equalities. In particular, \begin{equation*}
            \int_{\sphere^d} \log U_{\mu^*}(u) \diff \mu^*(u) =  \log \int_{\sphere^d} U_{\mu^*}(u) \diff \mu^*(u).
        \end{equation*}
        That is, for any solution $\mu^*$ of Equation~\eqref{supp:eq:thm:inf-neg-unif-relaxed-measure}, $U_{\mu^*}$ must be constant $\mu^*$-almost surely.

        \item \itempara{We show that $\sigma_d$ is the unique minimizer of the relaxed problem in Equation~\eqref{supp:eq:thm:inf-neg-unif-relaxed-measure}.}

        Let $S \subset \mathcal{M}(\sphere^d)$ be the set of measures where the above property holds: \begin{equation*}
            S \trieq \left\{\mu \in \mathcal{M}(\sphere^{d}) \colon U_{\mu} \text{ is constant $\mu$-almost surely} \right\}.
        \end{equation*}

        The problem in Equation~\eqref{supp:eq:thm:inf-neg-unif-relaxed-measure} is thus equivalent to minimizing over $S$: \begin{align*}
            \argmin_{\mu \in \mathcal{M}(\sphere^{d})} \int_{\sphere^d}  \log U_\mu(u) \diff \mu(u)
            & = \argmin_{\mu \in S} \int_{\sphere^d}  \log U_\mu(u) \diff \mu(u) \\
            & = \argmin_{\mu \in S} \log \int_{\sphere^d} U_\mu(u) \diff \mu(u) \\
            & = \argmin_{\mu \in S} \log \int_{\sphere^d} \int_{\sphere^d} e^{u\T v / \tau} \diff \mu(v) \diff \mu(u) \\
            & = \argmin_{\mu \in S} \left( \frac{1}{\tau} + \log  \int_{\sphere^d} \int_{\sphere^d} e^{-\frac{1}{2 \tau} \norm{u - v}^2} \diff \mu(v) \diff \mu(u) \right) \\
            & = \argmin_{\mu \in S} \int_{\sphere^d} \int_{\sphere^d} G_{\frac{1}{2\tau}}(u, v) \diff \mu(v) \diff \mu(u).
        \end{align*}

        By Proposition~\ref{supp:prop:continuous-problem} and $\tau > 0$, we know that the uniform distribution $\sigma_d$ is the unique solution to \begin{equation}
            \argmin_{\mu \in \mathcal{M}(\sphere^d)} \int_{\sphere^d} \int_{\sphere^d} G_{\frac{1}{2\tau}}(u, v) \diff \mu(v) \diff \mu(u). \label{supp:thm:asymp-inf-neg:eq:unif-equiv-gaussian-prob}
        \end{equation}

        Since $\sigma_d \in S$, it must also be the unique solution to Equation~\eqref{supp:eq:thm:inf-neg-unif-relaxed-measure}.
    \end{enumerate}

    Finally, if perfectly uniform encoders exist, $\sigma_d$ is realizable, and they are the exact encoders that realize it. Hence, in such cases, they are the exact minimizers of \begin{equation*}
        \min_f \expectunder[x \sim \pdata]{\log\expectunder[x^- \sim \pdata]{e^{f(x^-)\T f(x) / \tau}}}.
    \end{equation*}
\end{itemize}
\end{proof}

\paragraph{Relation between Theorem~\ref{thm:asym_inf_negatives}, $\lalign$ and $\lunif$.} The first term of Equation~\eqref{eq:contrastive_loss_limit} is equivalent with $\lalign$ when $\alpha = 2$, up to a constant and a scaling. In the above proof, we showed that the second term favors uniformity, via the feature distribution that minimizes the pairwise Gaussian kernel (see Equation~\eqref{supp:thm:asymp-inf-neg:eq:unif-equiv-gaussian-prob}): \begin{equation}
    \argmin_{\mu \in \mathcal{M}(\sphere^d)} \int_{\sphere^d} \int_{\sphere^d} G_{\frac{1}{2\tau}}(u, v) \diff \mu(v) \diff \mu(u), \label{supp:eq:asym-inf-neg-unif-term-measure-relax}
\end{equation} which can be alternatively viewed as the relaxed problem of optimizing for the uniformity loss $\lunif$: \begin{equation}
    \argmin_f \lunif(f; \frac{1}{2\tau}) = \argmin_f \expect[x, y \iidsim \distndata] {G_{\frac{1}{2\tau}}(f(x), f(y))}. \label{supp:eq:asym-inf-neg-unif-term-realizable-nonrelax}
\end{equation} The relaxation comes from the observation that Equation~\eqref{supp:eq:asym-inf-neg-unif-term-measure-relax} minimizes over all feature distributions on $\sphere^d$, while Equation~\eqref{supp:eq:asym-inf-neg-unif-term-realizable-nonrelax} only considers the realizable ones.

{\color{red}
\paragraph{Relation between Equation~\eqref{supp:eq:thm:inf-neg-unif-relaxed-measure} and minimizing pairwise Gaussian potential.} In view of the Proposition~\ref{supp:prop:continuous-problem} and the proof of Theorem~\ref{thm:asym_inf_negatives}, we know that the uniform distribution $\sigma_d$ is the unique minimizer of both of the following problems: \begin{align*}
    \{\sigma_d\} & = \min_{\mu \in \mathcal{M}(\sphere^d)}  \log \int_{\sphere^d} \int_{\sphere^d} e^{u\T v /\tau} \diff \mu(v) \diff \mu(u), \\
    \{\sigma_d\} & = \min_{\mu \in \mathcal{M}(\sphere^d)} \int_{\sphere^d} \log  \int_{\sphere^d} e^{u\T v /\tau} \diff \mu(v) \diff \mu(u).
\end{align*}
So pushing the $\log$ inside the outer integral doesn't change the solution. However, if we push the $\log$ all the way inside the inner integral, the problem becomes equivalent with minimizing the norm of the mean, \ie, \begin{equation*}
    \min_{\mu \in \mathcal{M}(\sphere^d)} \expect[U \sim \mu]{U}\T\expect[U \sim \mu]{U},
\end{equation*}
which is minimized for any distribution with mean being the all-zeros vector $0$, \eg, $\frac{1}{2} \delta_{u} + \frac{1}{2} \delta_{-u}$ for any $u \in \sphere^d$ (where $\delta_u$ is the Dirac delta distribution at $u$ \st $\delta_u(S) = \mathbbm{1}_{S}(u)$, $\forall S \in \mathcal{B}(\sphere^d)$). Therefore, the location of the $\log$ is important.
}

\begin{theorem}[Single negative sample] \label{thm:asym_single_negative}
    If perfectly aligned and uniform encoders exist, they form the exact minimizers of the contrastive loss $\lcontr(f; \tau, M=1)$.
\end{theorem}
\begin{proof}[Proof of Theorem~\ref{thm:asym_single_negative}]
    Since $M=1$, we have \begin{align}
        \lcontr(f; \tau, 1)
        & = \expectunder[\substack{
            (x, y) \sim \ppos \\
            x^- \sim \pdata
            }]{-\frac{1}{\tau} f(x)\T f(y) + \log \left( e^{f(x)\T f(y) / \tau}+  e^{f(x^-)\T f(x) / \tau} \right)} \notag \\
        & \geq \expectunder[\substack{
            x \sim \pdata \\
            x^- \sim \pdata
            }]{-\frac{1}{\tau} + \log \left( e^{1 / \tau}+  e^{f(x^-)\T f(x) / \tau} \right)}\label{supp:thm:single-neg:eq:align-relax} \\
        & \geq -\frac{1}{\tau} + \min_{\mu \in \mathcal{M}(\sphere^d)} \int_{\sphere^d} \int_{\sphere^d} \log \left( e^{1 / \tau}+  e^{u\T v / \tau} \right) \diff \mu(u) \diff \mu(v) \label{supp:thm:single-neg:eq:unif-relax} \\
        & = -\frac{1}{\tau} + \min_{\mu \in \mathcal{M}(\sphere^d)} \int_{\sphere^d} \int_{\sphere^d} \log \left( e^{1 / \tau}+  e^{(2 - \norm{u - v}_2^2) / (2 \tau)} \right) \diff \mu(u) \diff \mu(v). \notag
    \end{align}
    By the definition of perfect alignment, the equality in Equation~\eqref{supp:thm:single-neg:eq:align-relax} is satisfied iff $f$ is perfectly aligned.

    Consider function $f \colon (0, 4] \rightarrow \R_+$: \begin{equation*}
        f(t) = \log(e^{\frac{1}{\tau}} + e^{\frac{2 - t}{2\tau}}).
    \end{equation*} It has the following properties: \begin{itemize}
        \item $-f'(t) = \frac{1}{2\tau} \frac{ e^{-\frac{t}{2\tau}} }{1 + e^{-\frac{t}{2\tau}}} = \frac{1}{2\tau} (1 - (1 + e^{-\frac{t}{2\tau}})^{-1})$ is strictly completely monotone on $(0, +\infty)$:

        $\forall t \in (0,  +\infty)$, \begin{align*}
            \frac{1}{2\tau} (1 - (1 + e^{-\frac{t}{2\tau}})^{-1}) & > 0 \\
            (-1)^n \frac{\diff^n }{\diff t^n} \frac{1}{2\tau} (1 - (1 + e^{-\frac{t}{2\tau}})^{-1}) & = \frac{n!}{(2\tau)^{n+1}} (1 + e^{-\frac{t}{2\tau}})^{-(n+1)} > 0, \qquad\qquad n = 1, 2, \dots. \\
        \end{align*}
        \item $f$ is bounded on $(0, 4]$.
    \end{itemize}
    Then, by Lemma~\ref{supp:lemma:strict-pd-uniform}, we have that the equality in Equation~\eqref{supp:thm:single-neg:eq:unif-relax} is satisfied iff the feature distribution induced by $f$ (\ie, the pushforward measure $\pdata \circ f^{-1}$) is $\sigma_d$, that is, in other words, $f$ is perfectly uniform.

    Therefore, \begin{equation*}
        \lcontr(f; \tau, 1) \geq -\frac{1}{\tau} + \int_{\sphere^d} \int_{\sphere^d} \log \left( e^{1 / \tau}+  e^{u\T v / \tau} \right) \diff \sigma_d(u) \diff \sigma_d(v) = \text{constant independent of $f$},
    \end{equation*}where equality is satisfied iff $f$ is perfectly aligned and uniform. This concludes the proof.
\end{proof}

\paragraph{Difference between conditions of Theorems~\ref{thm:asym_inf_negatives}~and~\ref{thm:asym_single_negative}.}We remark that the statement in Theorem~\ref{thm:asym_single_negative} is weaker than the previous Theorem~\ref{thm:asym_inf_negatives}. Theorem~\ref{thm:asym_single_negative} is conditioned on the existence perfectly aligned and uniform encoders. It only shows that $\lcontr(f; \tau, M=1)$  favors alignment under the condition that perfect uniformity is realizable, and vice versa. In Theorem~\ref{thm:asym_inf_negatives}, $\lcontr$ decomposes into two terms, each favoring alignment and uniformity. Therefore, the decomposition in Theorem~\ref{thm:asym_inf_negatives} is exempt from this constraint.

\section{Experiment Details}
For \cifar, \stl and \nyudepth experiments, we use the following settings (unless otherwise stated): \begin{itemize}
    \item Standard data augmentation procedures are used for generating positive pairs, including resizing, cropping, horizontal flipping, color jittering, and random grayscale conversion. This follows prior empirical work in contrastive representation learning \citep{wu2018unsupervised,tian2019contrastive,hjelm2018learning,bachman2019learning}.
    \item Neural network architectures follow the corresponding experiments on these datasets in \citet{tian2019contrastive}. For \nyudepth evaluation, the architecture of the depth prediction CNN is described in Table~\ref{supp:tbl:nyudepth-cnn-depth-predictor-arch}.
    \item We use minibatch stochastic gradient descent (SGD) with $0.9$ momentum and $0.0001$ weight decay.
    \item We use linearly scaled learning rate ($0.06$ per $256$ batch size) \citep{goyal2017accurate}. \begin{itemize}
        \item \cifar and \stl: Optimization is done over $200$ epochs, with learning rate decayed by a factor of $0.2$ at epochs $155$, $170$, and $185$.
        \item \nyudepth: Optimization is done over $400$ epochs, with learning rate decayed by a factor of $0.2$ at epochs $310$, $340$, and $370$.
    \end{itemize}
    \item Encoders are optimized over the training split. For evaluation, we freeze the encoder, and train classifiers / depth predictors on the training set samples, and test on the validation split. \begin{itemize}
        \item \cifar and \stl: We use standard train-val split. Linear classifier are trained with Adam \citep{kingma2014adam} over $100$ epochs, with a learning rate of $0.001$, decayed by a factor of $0.2$ at epochs $60$ and $80$.
        \item \nyudepth: We use the train-val split on the $1449$ labeled images from \citet{Silberman:ECCV12}. Depth predictors are trained with Adam \citep{kingma2014adam} over $100$ epochs, with a learning rate of $0.003$, decayed by a factor of $0.2$ at epochs $60$, $80$, and $90$.
    \end{itemize}
    \item All experiments are performed on 1-4 NVIDIA Titan Xp, Titan X PASCAL, Titan RTX, or 2080 Ti GPUs.
\end{itemize}

\begin{table*}[t!]%

\newcommand{\mround}[1]{\round{#1}{4}}
\renewcommand{\arraystretch}{1.2}
\newcommand{\NA}{---}
\centering
\small
\begin{tabular}{|c|c|c|c|c|c|c|c|}
    \hline
    \multirow{2}{*}{Operator} &
    \multirow{2}{*}{\shortstack{Input\\Spatial Shape}} &
    \multirow{2}{*}{\shortstack{Input\\\#Channel}} &
    \multirow{2}{*}{\shortstack{Kernel\\Size}} &
    \multirow{2}{*}{Stride} &
    \multirow{2}{*}{Padding} &
    \multirow{2}{*}{\shortstack{Output\\Spatial Shape}} &
    \multirow{2}{*}{\shortstack{Output\\\#Channel}} \\

    & & & & & & & \\
    \hline\hline

    Input &
    $[h_\mathsf{in}, w_\mathsf{in}]$ &
    $c_\mathsf{in}$ &
    \NA &
    \NA &
    \NA &
    $[h_\mathsf{in}, w_\mathsf{in}]$ &
    $c_\mathsf{in}$ \\
    \hline

    Conv.~Transpose + BN + ReLU &
    $[h_\mathsf{in}, w_\mathsf{in}]$ &
    $c_\mathsf{in}$ &
    3 &
    2 &
    1 &
    $[2h_\mathsf{in}, 2w_\mathsf{in}]$ &
    $\floor{c_\mathsf{in} / 2}$ \\
    \hline

    Conv.~Transpose + BN + ReLU &
    $[2h_\mathsf{in}, 2w_\mathsf{in}]$ &
    $\floor{c_\mathsf{in} / 2}$ &
    3 &
    2 &
    1 &
    $[4h_\mathsf{in}, 4w_\mathsf{in}]$ &
    $\floor{c_\mathsf{in} / 4}$ \\
    \hline

    $\vdots$ &
    $\vdots$ &
    $\vdots$ &
    $\vdots$ &
    $\vdots$ &
    $\vdots$ &
    $\vdots$ &
    $\vdots$ \\
    \hline

    Conv.~Transpose + BN + ReLU &
    $[h_\mathsf{out} / 2, w_\mathsf{out} / 2]$ &
    $\floor{c_\mathsf{in} / 2^{n-1}}$ &
    3 &
    2 &
    1 &
    $[h_\mathsf{out}, w_\mathsf{out}]$ &
    $\floor{c_\mathsf{in} / 2^n}$ \\
    \hline

    Conv. &
    $[h_\mathsf{out}, w_\mathsf{out}]$ &
    $\floor{c_\mathsf{in} / 2^n}$ &
    3 &
    1 &
    1 &
    $[h_\mathsf{out}, w_\mathsf{out}]$ &
    $1$ \\
    \hline


\end{tabular}
\caption{\nyudepth CNN depth predictor architecture. Each \textrm{Conv.\hspace{1.5pt}Transpose+BN+ReLU} block increases the spatial shape by a factor of $2$, where BN denotes Batch Normalization \citep{ioffe2015batch}. A sequence of such blocks computes a tensor of the correct spatial shape, from an input containing intermediate activations of a CNN encoder (which downsamples the input RGB image by a power of $2$). A final convolution at the end computes the single-channel depth prediction. } \label{supp:tbl:nyudepth-cnn-depth-predictor-arch}

\end{table*}

At each SGD iteration, a minibatch of $K$ positive pairs is sampled $\{(x_i, y_i)\}_{i=1}^K$, and the three losses for this minibatch are calculated as following: \begin{itemize}
    \item $\lcontr$: For each $x_i$, the sample contrastive loss is taken with the positive being $y_i$, and the negatives being $\{y_j\}_{j \neq i}$. For each $y_i$, the sample loss is computed similarly. The minibatch loss is calculated by aggregating these $2K$ terms: \begin{equation*}
        \frac{1}{K} \sum_{i = 1}^{K}
        \log \frac{e^{f(x_i)\T f(y_i) / \tau}}{\sum_{j = 1}^{K} e^{f(x_i)\T f(y_j) / \tau}} +
        \frac{1}{K} \sum_{i = 1}^{K} \log \frac{e^{f(x_i)\T f(y_i) / \tau}}{\sum_{j = 1}^{K} e^{f(x_j)\T f(y_i) / \tau}}.
    \end{equation*} This calculation follows empirical practices and is similar to \citet{oord2018representation,henaff2019data}, and \textit{end-to-end} in \citet{he2019momentum}.
    \item $\lalign$: The minibatch alignment loss is straightforwardly computed as \begin{equation*}
        \frac{1}{K} \sum_{i=1}^{K} \norm{f(x_i) - f(y_i)}_2^\alpha.
    \end{equation*}
    \item $\lunif$: The minibatch uniform loss is calculated by considering each pair of $\{x_i\}_i$ and $\{y_i\}_i$: \begin{equation*}
        \log \bigg( \frac{2}{K (K - 1)} \sum_{i \neq j} e^{-t \norm{f(x_i) - f(x_j)}_2^2} \bigg) + \log \bigg( \frac{2}{K (K - 1)} \sum_{i \neq j} e^{-t \norm{f(y_i) - f(y_j)}_2^2} \bigg).
    \end{equation*}
\end{itemize}

Tables~\ref{supp:tbl:stl10-big}~and~\ref{supp:tbl:nyudepth-big} below describe the full specifications of all  $260$ \stl and $64$ \nyudepth encoders. These experiment results are visualized in main paper Figure~3, showing a clear connection between representation quality and $\lalign$ \& $\lunif$ metrics.

\newpage

{
\newcommand{\scalem}[1]{\scalebox{0.62}{{\normalsize #1}}}
\newcommand{\scalemb}[1]{\scalebox{0.75}{{\normalsize #1}}}
\fontsize{6.5}{8}\selectfont
\centering
\newcommand{\mround}[1]{\round{#1}{4}}
\newcommand{\mroundprec}[1]{\round{#1}{2}\%}%
\renewcommand{\arraystretch}{1.2}%
\newcommand{\NA}{---}%
\renewcommand{\tabcolsep}{1.5pt}%
\setlength\LTleft{-1.32em}%

}

\clearpage
\newpage
{\small
\bibliography{reference}
\bibliographystyle{icml2020}
}


\onecolumn
\icmltitle{
    Understanding Contrastive Representation Learning through \\ Geometry on the Hypersphere Supplementary Material
}



\icmlsetsymbol{equal}{*}

\begin{icmlauthorlist}
\icmlauthor{Aeiau Zzzz}{equal,to}
\icmlauthor{Bauiu C.~Yyyy}{equal,to,goo}
\icmlauthor{Cieua Vvvvv}{goo}
\icmlauthor{Iaesut Saoeu}{ed}
\icmlauthor{Fiuea Rrrr}{to}
\icmlauthor{Tateu H.~Yasehe}{ed,to,goo}
\icmlauthor{Aaoeu Iasoh}{goo}
\icmlauthor{Buiui Eueu}{ed}
\icmlauthor{Aeuia Zzzz}{ed}
\icmlauthor{Bieea C.~Yyyy}{to,goo}
\icmlauthor{Teoau Xxxx}{ed}
\icmlauthor{Eee Pppp}{ed}
\end{icmlauthorlist}

\icmlaffiliation{to}{Department of Computation, University of Torontoland, Torontoland, Canada}
\icmlaffiliation{goo}{Googol ShallowMind, New London, Michigan, USA}
\icmlaffiliation{ed}{School of Computation, University of Edenborrow, Edenborrow, United Kingdom}

\icmlcorrespondingauthor{Cieua Vvvvv}{c.vvvvv@googol.com}
\icmlcorrespondingauthor{Eee Pppp}{ep@eden.co.uk}

\icmlkeywords{Machine Learning, ICML}

\vskip 0.3in



\printAffiliationsAndNotice{}  

\setcounter{section}{0}
\renewcommand{\thesection}{S-\arabic{section}}


\section{Proofs and Additional Theoretical Analysis}

In this section, we present proofs for propositions and theorems in main paper \Cref{sec:unif,sec:limit}.

The propositions in \Cref{sec:unif} illustrate the deep relations between the Gaussian kernel $G_t \colon \sphere^d \times \sphere^d \rightarrow \R$ and the uniform distribution on the unit hypersphere $\sphere^d$. As we will show below in \Cref{supp:sec:proof-prop-gauss-unif}, these properties directly follow well-known results on strictly positive definite kernels.

In \Cref{supp:sec:proofs-asymptotics}, we present a proof for \Cref{thm:asym_inf_negatives}. \Cref{thm:asym_inf_negatives} describes the asymptotic behavior of $\lcontr$ as the number of negative samples $M$ approaches infinity. The theorem is strongly related to empirical contrastive learning, given an error term (deviation from the limit) decaying in $\mathcal{O}(M^{-1/2})$ and that empirical practices often use a large number of negatives (\eg, $M=65536$ in \citet{he2019momentum}) based on the observation that using more negatives consistently leads to better representation quality \citep{wu2018unsupervised,tian2019contrastive,he2019momentum}. Our proof further reveals connections between $\lcontr$ and $\lunif$ which is defined via the Gaussian kernel.

Finally, also in \Cref{supp:sec:proofs-asymptotics}, we present a weaker result on the setting where only a single negative is used in $\lcontr$ (\ie, $M=1$).

\subsection{Proofs for \Cref{sec:unif} and Properties of $\lunif$} \label{supp:sec:proof-prop-gauss-unif}

To prove \Cref{prop:continuous-problem}~and~\ref{prop:discrete-problems}, we utilize the \emph{strict positive definiteness} \citep{bochner1992monotone,stewart1976positive} of the Gaussian kernel $G_t$: \begin{equation*}
    G_t(u, v) \trieq e^{-t \norm{u - v}_2^2} = e^{2t \cdot u\T v - 2t}, \quad t > 0.
\end{equation*} From there, we apply a known result about such kernels, from which the two propositions directly follow.

\begin{definition}[Strict positive definiteness \citep{bochner1992monotone,stewart1976positive}]
    A symmetric and lower semi-continuous kernel $K$ on $A \times A$ (where $A$ is infinite and compact) is called strictly positive definite if for every finite signed Borel measure $\mu$ supported on $A$ whose energy \begin{equation*}
        I_K[\mu] \trieq \int_{\sphere^d} \int_{\sphere^d} K(u, v) \diff \mu(v) \diff \mu(u)
    \end{equation*}
    is well defined, we have $I_K[\mu] \geq 0$, where equality holds only if $\mu \equiv 0$ on the $\sigma$-algebra of Borel subsets of $A$.
\end{definition}

\begin{definition}
    Let $\mathcal{M}(\sphere^{d})$ be the set of Borel probability measures on $\sphere^{d}$.
\end{definition}

We are now in the place to apply the following two well-known results, which we present by restating Proposition~4.4.1, Theorem~6.2.1 and Corollary~6.2.2 of \citet{borodachov2019discrete} in weaker forms. We refer readers to \citet{borodachov2019discrete} for their proofs.
\begin{lemma}[Strict positive definiteness of $G_t$] \label{supp:lemma-strict-pd-gaussian}
    For $t > 0$, the Gaussian kernel $G_t(u, v) \trieq e^{-t \norm{u - v}_2^2} = e^{2t \cdot u\T v - 2t}$ is strictly positive definite  on $\sphere^d \times \sphere^d$.
\end{lemma}

\begin{lemma}[Strictly positive definite kernels on $\sphere^d$] \label{supp:lemma:strict-pd-uniform}
    Consider kernel $K_f \colon \sphere^d \times \sphere^d \rightarrow (-\infty, +\infty]$ of the form, \begin{equation}
        K_f(u, v) \trieq f(\norm{u - v}_2^2).
    \end{equation}
    If $K_f$ is strictly positive definite on $\sphere^d \times \sphere^d $ and $I_{K_f}[\sigma_d]$ is finite, then $\sigma_d$ is the unique measure (on Borel subsets of $\sphere^d$) in the solution of $\min_{\mu \in \mathcal{M}(\sphere^d)} I_{K_f}[\mu]$, and the normalized counting measures associated with any $K_f$-energy minimizing sequence of $N$-point configurations on $\sphere^d$ converges weak$^*$ to $\sigma_d$.

    In particular, this conclusion holds whenever $f$ has the property that $-f'(t)$ is strictly completely monotone on $(0, 4]$ and $I_{K_f}[\sigma_d]$ is finite.
\end{lemma}

We now recall Propositions~\ref{prop:continuous-problem}~and~\ref{prop:discrete-problems}.
\begingroup
\def\theproposition{\ref{prop:continuous-problem}}
\begin{proposition}
    $\sigma_{d}$ is the unique solution (on Borel subsets of $\sphere^d$) of \begin{equation}
        \min_{\mu \in \mathcal{M}(\sphere^{d})} I_{G_t}[\mu] = \min_{\mu \in \mathcal{M}(\sphere^{d})} \int_{\sphere^d} \int_{\sphere^d} G_t(u, v) \diff \mu(v) \diff \mu(u). \label{supp:prop:continuous-problem-solution}
    \end{equation}
\end{proposition}
\addtocounter{proposition}{-1}
\endgroup
\begin{proof}[Proof of \Cref{prop:continuous-problem}]
    This is a direct consequence of \Cref{supp:lemma-strict-pd-gaussian,supp:lemma:strict-pd-uniform}.
\end{proof}

\begingroup
\def\theproposition{\ref{prop:discrete-problems}}
\begin{proposition}
    For each $N > 0$, the $N$ point minimizer of the average pairwise potential is \begin{equation*}
        \mathbf{u}^*_N = \argmin_{u_1, u_2, \dots, u_N \in \sphere^{d}} \sum_{1 \leq i < j \leq N} G_t(u_i, u_j).
    \end{equation*}
    The normalized counting measures associated with the $\{\mathbf{u}^*_N\}_{N=1}^\infty$ sequence converge weak$^*$ to $\sigma_d$.
\end{proposition}
\addtocounter{proposition}{-1}
\endgroup
\begin{proof}[Proof of \Cref{prop:discrete-problems}]
    This is a direct consequence of \Cref{supp:lemma-strict-pd-gaussian,supp:lemma:strict-pd-uniform}.
\end{proof}

\subsubsection{More Properties of $\lunif$}

\paragraph{Range of $\lunif$.} It's not obvious what the optimal value of $\lunif$ is. In the following proposition, we characterize the exact range of the expected Gaussian potential and how it evolves as dimensionality increases. The situation for $\lunif$ directly follows as a corollary.

\begin{proposition}[Range of the expected pairwise Gaussian potential $G_t$]\label{supp:prop:gaussian-range}
    For $t > 0$, the expected pairwise Gaussian potential \wrt Borel probability measure $\mu \in \mathcal{M}(\sphere^d)$
    \begin{equation*}
        I_{G_t}[\mu] = \int_{\sphere^d} \int_{\sphere^d} G_t(u, v) \diff \mu(v) \diff \mu(u)
    \end{equation*}has range $[e^{-2 t} \prescript{}{0}{F}_1(;\frac{d+1}{2}; t^2), 1]$, where $\prescript{}{0}{F}_1$ is the confluent hypergeometric limit function defined as \begin{equation}
        \prescript{}{0}{F}_1(;\alpha;z) \trieq \sum_{n=0}^\infty \frac{z^n}{(\alpha)_n n!}, \label{supp:eq:0f1-formula}
    \end{equation}
    where we have used the Pochhammer symbol $(a)_n = \begin{cases}
        1 & \mbox{if } n = 0 \\
        a(a+1)(n+2) \dots (a+n-1) & \mbox{if } n \geq 1.
    \end{cases}$

    We have \begin{itemize}
        \item The minimum $e^{-2 t} \prescript{}{0}{F}_1(;\frac{d+1}{2}; t^2)$ is achieved iff $\mu = \sigma_d$ (on Borel subsets of $\sphere^d$). Furthermore, this value strictly decreases as $d$ increases, converging to $e^{-2t}$ in the limit of $d \rightarrow \infty$.
        \item The maximum is achieved iff $\mu$ is a Dirac delta distribution, \ie, $\mu = \delta_u$  (on Borel subsets of $\sphere^d$), for some $u \in \sphere^d$.
    \end{itemize}
\end{proposition}
\begin{proof}[Proof of \Cref{supp:prop:gaussian-range}]~
    \begin{itemize}
        \item \itempara{Minimum. }

        We know from \Cref{prop:continuous-problem} that $\sigma_d$ {\em uniquely} achieves the minimum, given by the following integral ratio \begin{align*}
            I_{G_t}[\sigma_d]
            & = \frac{\int_0^\pi e^{-t (2 \sin \frac{\theta}{2})^2} \sin^{d-1} \theta \diff \theta}{\int_0^\pi \sin^{d-1} \theta \diff \theta} \\
            & = \frac{\int_0^\pi e^{-2t (1 - \cos \theta)} \sin^{d-1} \theta \diff \theta}{\int_0^\pi \sin^{d-1} \theta \diff \theta} \\
            & = e^{-2t} \frac{\int_0^\pi e^{2t\cos \theta} \sin^{d-1} \theta \diff \theta}{\int_0^\pi \sin^{d-1} \theta \diff \theta}.
        \end{align*}

        The denominator, with some trigonometric identities, can be more straightforwardly evaluated as \begin{equation*}
            \int_0^\pi \sin^{d-1} \theta \diff \theta = \sqrt{\pi} \frac{\Gamma(\frac{d}{2})}{\Gamma(\frac{d+1}{2})}.
        \end{equation*}

        The numerator is \begin{align*}
            \int_0^\pi e^{2t\cos \theta} \sin^{d-1} \theta \diff \theta
            & =
            - \int_0^\pi e^{2t\cos \theta} \sin^{d-2} \theta \cos' \theta \diff \theta \\
            & =
            \int_{-1}^1 e^{2t s} (1-s^2)^{d/2-1} \diff s \\
            & = \frac{\Gamma(\frac{d-1}{2} + \frac{1}{2}) \sqrt{\pi}}{\Gamma(\frac{d-1}{2}+1)} \prescript{}{0}{F}_1(;\frac{d-1}{2}+1;-\frac{1}{4}(-2it)^2) \\
            & = \frac{\Gamma(\frac{d}{2}) \sqrt{\pi}}{\Gamma(\frac{d+1}{2})} \prescript{}{0}{F}_1(;\frac{d+1}{2};t^2),
        \end{align*}
        where we have used the following identity based on the Poisson formula for Bessel functions and the relationship between $\prescript{}{0}{F}_1$ and Bessel functions: \begin{equation*}
            \int_{-1}^1 e^{iz s} (1-s^2)^{\nu - \frac{1}{2}} \diff s =
            \frac{\Gamma(\nu + \frac{1}{2}) \sqrt{\pi}}{(\frac{z}{2})^\nu}  J_\nu(z) =
            \frac{\Gamma(\nu + \frac{1}{2}) \sqrt{\pi}}{\Gamma(\nu+1)} \prescript{}{0}{F}_1(;\nu+1;-\frac{1}{4}z^2).
        \end{equation*}

        Putting both together, we have \begin{align*}
            I_{G_t}[\sigma_d]
            & = e^{-2t} \frac{\int_0^\pi e^{2t\cos \theta} \sin^{d-1} \theta \diff \theta}{\int_0^\pi \sin^{d-1} \theta \diff \theta} \\
            & = e^{-2t} \dfrac{\frac{\Gamma(\frac{d}{2}) \sqrt{\pi}}{\Gamma(\frac{d+1}{2})} \prescript{}{0}{F}_1(;\frac{d+1}{2};t^2)}{\sqrt{\pi} \frac{\Gamma(\frac{d}{2})}{\Gamma(\frac{d+1}{2})}} \\
            & = e^{-2t} \prescript{}{0}{F}_1(;\frac{d+1}{2};t^2) \\
            & = e^{-2t} \sum_{n=0}^\infty \frac{t^{2n}}{(\frac{d+1}{2})_n n!},
        \end{align*}
        where we have used the definition of $\prescript{}{0}{F}_1$ in \Cref{supp:eq:0f1-formula} to expand the formula.

        Notice that each summand strictly decreases as $d \rightarrow \infty$. So must the total sum.

        For the asymptotic behavior at $d \rightarrow \infty$, it only remains to show that  \begin{equation}
            \lim_{d \rightarrow \infty} \sum_{n=0}^\infty \frac{t^{2n}}{(\frac{d+1}{2})_n n!} = 1. \label{supp:eq:0f1-convergence-series-sum}
        \end{equation}

        For the purpose of applying the Dominated Convergence Theorem (DCT) (on the counting measure). We consider the following summable series \begin{equation*}
            \sum_{n=0}^\infty \frac{t^{2n}}{n!} = e^{t^2},
        \end{equation*}
        with each term bounding the corresponding one in \Cref{supp:eq:0f1-convergence-series-sum}: \begin{equation*}
            \frac{t^{2n}}{n!} \geq \frac{t^{2n}}{(\frac{d+1}{2})_n n!},\qquad\qquad \forall n \geq 0, d > 0.
        \end{equation*}

        Thus, \begin{equation*}
            \lim_{d \rightarrow \infty} \sum_{n=0}^\infty \frac{t^{2n}}{(\frac{d+1}{2})_n n!} = \sum_{n=0}^\infty \lim_{d \rightarrow \infty}  \frac{t^{2n}}{(\frac{d+1}{2})_n n!} = 1 + 0 + 0 + \dots  = 1.
        \end{equation*}

        Hence, the asymptotic lower range is $e^{-2t}$.

        \item \itempara{Maximum.}

        Obviously, Dirac delta distributions $\delta_u$, $u \in \sphere^d$ would achieve a maximum of $1$. We will now show that all Borel probability measures $\mu$ \st $I_{G_t}[\mu] = 1$ are delta distributions.

        Suppose that such a $\mu$ is not a Dirac delta distribution. Then, we can take distinct $x, y \in \supp(\mu) \subseteq \sphere^d$, and open neighborhoods around $x$ and $v$, $N_x, N_y \in \sphere^d$ such that they are small enough and disjoint: \begin{align*}
            N_x & \trieq \{u \in \sphere^d \colon \norm{u - x}_2 < \frac{1}{3} \norm{x - y}_2 \} \\
            N_y & \trieq \{u \in \sphere^d \colon \norm{u - y}_2 < \frac{1}{3} \norm{x - y}_2 \}.
        \end{align*}
        Then, \begin{align*}
            I_{G_t}[\mu]
            & = \int_{\sphere^d} \int_{\sphere^d} G_t(u, v) \diff \mu(v) \diff \mu(u) \\
            & = \int_{\sphere^d} \int_{\sphere^d} e^{-t \norm{u-v}_2^2} \diff \mu(v) \diff \mu(u) \\
            & \leq (1 - 2 \mu({N_x}) \mu({N_y})) e^{-t \cdot 0} + 2 \int_{N_x} \int_{N_y} e^{-t \norm{u-v}_2^2} \diff \mu(v) \diff \mu(u) \\
            & < 1 - 2 \mu({N_x}) \mu({N_y}) + 2 \mu({N_x}) \mu({N_y}) e^{-t(\norm{x - y}_2/3)^2} \\
            & = 1 - 2 \mu({N_x}) \mu({N_y}) (1 - e^{-\frac{t}{9}\norm{x - y}_2^2}) \\
            & < 1.
        \end{align*}

        Hence, only Dirac delta distributions attain the maximum.
    \end{itemize}

\end{proof}


\begin{corollary}[Range of $\lunif$]\label{supp:coro:range-lunif}
    For encoder $f \colon \R^n \rightarrow \sphere^{m-1}$, $\lunif(f; t) \in [-2t + \log \prescript{}{0}{F}_1(;\frac{m}{2};t^2), 0]$, where the lower bound $-2t + \log \prescript{}{0}{F}_1(;\frac{m}{2};t^2)$ is achieved only by perfectly uniform encoders $f$, and the upper bound $0$ is achieved only by degenerate encoders that output a fixed feature vector almost surely.

    Furthermore, the lower bound strictly decreases as the output dimension $m$ increases, attaining the following asymptotic value \begin{equation}
        \lim_{m\rightarrow\infty} -2t+\log \prescript{}{0}{F}_1(;\frac{m}{2};t^2) = -2t.\label{supp:eq:range-lunif-lower-bound-asymptotics}
    \end{equation}
\end{corollary}


\begin{figure}[t]\vspace{-3pt}
    \centering
    \begin{minipage}[t]{0.484\linewidth}
        \includegraphics[width=\linewidth]{figures/supp/0f1_asymptotics}
        \caption{Asymptotic behavior of $\prescript{}{0}{F}_1(;\alpha;z)$. For $z>0$, as $\alpha$ grows larger, the function converges to $1$.}
        \label{supp:fig:0f1-asymptotics}
    \end{minipage}\hfill
    \begin{minipage}[t]{0.484\linewidth}
        \includegraphics[width=\linewidth]{figures/supp/lunif_optima_asymptotics}
        \caption{Asymptotic behavior of optimal $\lunif(f, t)$, attained by a perfectly uniform encoder $f^*$. As the feature dimension $m$ grows larger, the value converges to $-2t$.}
        \label{supp:fig:lunif-optima-asymptotics}
    \end{minipage}
\end{figure}

\paragraph{Intuition for the optimal $\lunif$ value in high dimensions.} If we ignore the $\log \prescript{}{0}{F}_1(;\frac{m}{2};t^2)$ term, informally, the optimal value of $-2t$ roughly says that any pair of feature vectors on $\sphere^{d}$ has distance about $\sqrt{2}$, \ie, are nearly orthogonal to each other. Indeed, vectors of high dimensions are usually nearly orthogonal, which is also consistent with the asymptotic result in \Cref{supp:eq:range-lunif-lower-bound-asymptotics}.

\Cref{supp:fig:0f1-asymptotics,supp:fig:lunif-optima-asymptotics} visualize how $\prescript{}{0}{F}_1$ and the optimal $\lunif$ (given by perfectly uniform encoders) evolve.

\paragraph{Lower bound of $\lunif$ estimates.} In practice, when $\lunif$ calculated using expectation over (a batch of) empirical samples $\{x_i\}_{i=1}^B$, $B>1$, the range in Corollary~\ref{supp:coro:range-lunif} is indeed valid, since it bounds over all distributions: \begin{equation}
    \hat{\mathcal{L}}^{(1)}_\mathsf{uniform} \trieq \log \frac{1}{B^2} \sum_{i=1}^B\sum_{j=1}^B e^{-t\norm{f(x_i) - f(x_j)}^2} > -2t + \log \prescript{}{0}{F}_1(;\frac{m}{2};t^2).\label{supp:eq:empirical-cdist-lunif-follows-range}
\end{equation}However, often $\lunif$ is empirically estimated without considering distances between a vector and itself (\eg, in \Cref{fig:pytorch-losses-code} and in our experiment settings as described in \Suppmatsection~\ref{supp:sec:expr-details}):\begin{equation}
    \hat{\mathcal{L}}^{(2)}_\mathsf{uniform} \trieq \log \frac{1}{B(B-1)} \sum_{i=1}^B\sum_{j\in\{1, \dots, B\} \setminus \{i\}} e^{-t\norm{f(x_i) - f(x_j)}^2}.\label{supp:eq:empirical-pdist-lunif}
\end{equation}While both quantities converge to the correct value in the limit, the lower bound is not always true for this one, because it is {\em not} the expected pairwise Gaussian kernel based on some distribution. Note the following relation:
\begin{equation*}
    \hat{\mathcal{L}}^{(2)}_\mathsf{uniform} = \log \left(\frac{B\cdot\exp( \hat{\mathcal{L}}^{(1)}_\mathsf{uniform} ) - 1}{B-1} \right).
\end{equation*}
We can derive a valid lower bound using \Cref{supp:eq:empirical-cdist-lunif-follows-range}: for $\prescript{}{0}{F}_1(;\frac{m}{2};t^2) > \frac{e^{2t}}{B}$, \begin{equation*}
    \hat{\mathcal{L}}^{(2)}_\mathsf{uniform} > \log \left(\frac{B\cdot\exp( -2t + \log \prescript{}{0}{F}_1(;\frac{m}{2};t^2) ) - 1}{B-1} \right) = \log \left(\frac{B e^{-2t} \prescript{}{0}{F}_1(;\frac{m}{2};t^2) - 1}{B-1} \right).
\end{equation*}
Since this approaches fails for cases that $\prescript{}{0}{F}_1(;\frac{m}{2};t^2) \leq \frac{e^{2t}}{B}$, we can combine it with the naive lower bound $-4t$, and have \begin{equation*}
    \hat{\mathcal{L}}^{(2)}_\mathsf{uniform} > \begin{cases}
        \max(-4t, \log \left(\frac{B e^{-2t} \prescript{}{0}{F}_1(;\frac{m}{2};t^2) - 1}{B-1} \right)) & \mbox{if } \prescript{}{0}{F}_1(;\frac{m}{2};t^2) > \frac{e^{2t}}{B} \\
        -4t & \mbox{otherwise.}
    \end{cases}
\end{equation*}

\paragraph{Non-negative versions of $\lunif$ for practical uses.} By definition, $\lunif$ always non-positive. As shown above, different $\lunif$ empirical estimates may admit different lower bounds. However, in our experience, for reasonably large batch sizes, adding an offset of $2t$ often ensures a non-negative loss that is near zero at optimum. When output dimensionality $m$ is low, it might be useful to add an additional offset of $-\log \prescript{}{0}{F}_1(;\frac{m}{2};t^2)$, which can be computed with the help of the SciPy package function \texttt{scipy.special.hyp0f1(m/2, t**2)} \citep{virtanen2020scipy}.

\subsection{Proofs and Additional Results for Section~4.2} \label{supp:sec:proofs-asymptotics}

The following lemma directly follows Theorem~3.3 and Remarks~3.4~(b)(i) of \citet{serfozo1982convergence}. We refer readers to \citet{serfozo1982convergence} for its proof.

\begin{lemma} \label{supp:lemma:int-conv-fn-conv-measure}
    Let $A$ be a compact second countable Hausdorff space. Suppose \begin{enumerate}
        \item $\{\mu_n\}_{n=1}^\infty$ is a sequence of finite and positive Borel measures supported on $A$ that converges weak$^*$ to some finite and positive Borel measure $\mu$ (which is same as vague convergence since $A$ is compact);
        \item $\{f_n\}_{n=1}^\infty$ is a sequence of Borel measurable functions that converges continuously to a Borel measurable $f$;
        \item $\{f_n\}_n$ are uniformly bounded over $A$.
    \end{enumerate}
    Then, we have the following convergence: \begin{equation*}
        \lim_{n \rightarrow \infty} \int_{x \in A} f_n(x) \diff \mu_n(x) = \int_{x \in A} f(x) \diff \mu(x).
    \end{equation*}
\end{lemma}

We now recall \Cref{thm:asym_inf_negatives}.
\begingroup
\def\thetheorem{\ref{thm:asym_inf_negatives}}
\begin{theorem}[Asymptotics of $\lcontr$]
    For fixed $\tau > 0$, as the number of negative samples $M \rightarrow \infty$, the (normalized) contrastive loss converges to \begin{align}
        & \lim_{M \rightarrow \infty} \lcontr(f; \tau, M) - \log M \notag \\
        & \qquad\qquad =
        \lim_{M \rightarrow \infty} \expectunder[\substack{
            (x, y) \sim \distnpos \\
            \{x^-_i\}_{i=1}^M \iidsim \distndata
        }]{- \log \frac{e^{f(x)\T f(y) / \tau}}{e^{f(x)\T f(y) / \tau} + \sum_i e^{f(x^-_i)\T f(y) / \tau}}} - \log M \notag \\
        & \qquad\qquad =
        -\frac{1}{\tau} \expectunder[(x, y) \sim \distnpos]{f(x)\T f(y)} + \expectunder[x \sim \distndata]{\log\expectunder[x^- \sim \distndata]{e^{f(x^-)\T f(x) / \tau}}}.
        \tag{\ref{eq:contrastive_loss_limit}}
    \end{align}

    We have the following results: \begin{enumerate}
        \item \label{thm:asym_inf_negatives:itm:align} The first term is minimized iff $f$ is perfectly aligned.
        \item \label{thm:asym_inf_negatives:itm:uniform} If perfectly uniform encoders exist, they form the exact minimizers of the second term.
        \item \label{thm:asym_inf_negatives:itm:convergence-rate} For the convergence in \Cref{eq:contrastive_loss_limit}, the absolute deviation from the limit (\ie, the error term) decays in $\mathcal{O}(M^{-1/2})$.
    \end{enumerate}
\end{theorem}
\addtocounter{theorem}{-1}
\endgroup

\begin{proof}[Proof of \Cref{thm:asym_inf_negatives}]
We first show the convergence stated in \Cref{eq:contrastive_loss_limit} along with its speed (result~\ref{thm:asym_inf_negatives:itm:convergence-rate}), and then the relations between the two limiting terms and the alignment and uniformity properties (results~\ref{thm:asym_inf_negatives:itm:align}~and~\ref{thm:asym_inf_negatives:itm:uniform}).

\begin{itemize}
    \item \itempara{Proof of the convergence in \Cref{eq:contrastive_loss_limit} and the $\mathcal{O}(M^{-1/2})$ decay rate of its error term (result~\ref{thm:asym_inf_negatives:itm:convergence-rate}).}

    Note that for any $x, y \in \R^n$ and $\{x^-_i\}_{i=1}^{M} \iidsim \pdata$, we have \begin{equation}
        \lim_{M \rightarrow \infty} \log\left(\frac{1}{M} e^{f(x)\T f(y) / \tau} + \frac{1}{M} \sum_{i=1}^M e^{f(x^-_i)\T f(x) / \tau} \right) = \log\expectunder[x^- \sim \pdata]{e^{f(x^-)\T f(x) / \tau}} \qquad\text{almost surely}, \label{supp:eq:asymp-inner-convergence-slln}
    \end{equation}
    by the strong law of large numbers (SLLN) and the Continuous Mapping Theorem.

    Then, we can derive
    \begin{equation*}
    \begin{split}
        & \lim_{M \rightarrow \infty} \lcontr(f; \tau, M) - \log M \\
        & \qquad\qquad = \expectunder[(x, y) \sim \ppos]{- f(x)\T f(y) / \tau} + \lim_{M \rightarrow \infty} \expectunder[\substack{
            (x, y) \sim \ppos \\
            \{x^-_i\}_{i=1}^{M} \iidsim \pdata
            }]{\log\left(\frac{1}{M} e^{f(x)\T f(y) / \tau} + \frac{1}{M} \sum_{i=1}^M e^{f(x^-_i)\T f(x) / \tau} \right)} \\
        & \qquad\qquad = \expectunder[(x, y) \sim \ppos]{- f(x)\T f(y) / \tau} + \expect{\lim_{M \rightarrow \infty}  \log\left(\frac{1}{M} e^{f(x)\T f(y) / \tau} + \frac{1}{M} \sum_{i=1}^M e^{f(x^-_i)\T f(x) / \tau} \right)}\\
        & \qquad\qquad = -\frac{1}{\tau} \expectunder[(x, y) \sim \ppos]{f(x)\T f(y)} + \expectunder[x \sim \pdata]{\log\expectunder[x^- \sim \pdata]{e^{f(x^-)\T f(x) / \tau}}},
    \end{split}
    \end{equation*}
    where we justify the switching of expectation and limit by the convergence stated in \Cref{supp:eq:asymp-inner-convergence-slln}, the boundedness of $e^{u\T v/\tau}$ (where $u, v \in \sphere^d, \tau > 0$), and the Dominated Convergence Theorem (DCT).\par

    For convergence speed, we have \begin{align}
        & \abs{\left( \lim_{M \rightarrow \infty} \lcontr(f; \tau, M) - \log M \right) - \left(\lcontr(f; \tau, M) - \log M\right)} \notag\\
        & \qquad\qquad = \abs{\expectunder[\substack{
            (x, y) \sim \ppos \\
            \{x^-_i\}_{i=1}^{M} \iidsim \pdata
            }]{\log\expectunder[x^- \sim \pdata]{e^{f(x^-)\T f(x) / \tau}} - \log\left(\frac{1}{M} e^{f(x)\T f(y) / \tau} + \frac{1}{M} \sum_{i=1}^M e^{f(x^-_i)\T f(x) / \tau}\right)}} \notag\\
        & \qquad\qquad \leq \expectunder[\substack{
            (x, y) \sim \ppos \\
            \{x^-_i\}_{i=1}^{M} \iidsim \pdata
            }]{\abs{\log\expectunder[x^- \sim \pdata]{e^{f(x^-)\T f(x) / \tau}} - \log\left(\frac{1}{M} e^{f(x)\T f(y) / \tau} + \frac{1}{M} \sum_{i=1}^M e^{f(x^-_i)\T f(x) / \tau}\right)}} \notag\\
        & \qquad\qquad \leq e^{1/\tau} \expectunder[\substack{
            (x, y) \sim \ppos \\
            \{x^-_i\}_{i=1}^{M} \iidsim \pdata
            }]{\abs{\expectunder[x^- \sim \pdata]{e^{f(x^-)\T f(x) / \tau}} - \left(\frac{1}{M} e^{f(x)\T f(y) / \tau} + \frac{1}{M} \sum_{i=1}^M e^{f(x^-_i)\T f(x) / \tau}\right)}} \notag\\
        & \qquad\qquad \leq \frac{1}{M} e^{2/\tau} + e^{1/\tau} \expectunder[
            x, \{x^-_i\}_{i=1}^{M} \iidsim \pdata]{\abs{\expectunder[x^- \sim \pdata]{e^{f(x^-)\T f(x) / \tau}} - \frac{1}{M} \sum_{i=1}^M e^{f(x^-_i)\T f(x) / \tau}}} \notag\\
        & \qquad\qquad = \frac{1}{M} e^{2/\tau} + \mathcal{O}(M^{-1/2}),
    \end{align}
    where the first inequality follows the Intermediate Value Theorem and the $e^{1/\tau}$ upper bound on the absolute derivative of $\log$ between the two points, and the last equality follows the Berry-Esseen Theorem given the bounded support of $e^{f(x^-_i)\T f(x) / \tau}$ as following: for \iid random variables $Y_i$ with bounded support $\subset [-a, a]$, zero mean and $\sigma^2_Y \leq a^2$ variance, we have \begin{align*}
        \expect{\abs{\frac{1}{M} \sum_{i=1}^M Y_i}}
        & = \frac{\sigma_Y}{\sqrt{M}} \expect{\abs{\frac{1}{\sqrt{M}\sigma_Y} \sum_{i=1}^M Y_i}} \\
        & = \frac{\sigma_Y}{\sqrt{M}} \int_0^{\frac{a\sqrt{M}}{\sigma_Y}} \prob{\abs{\frac{1}{\sqrt{M}\sigma_Y} \sum_{i=1}^M Y_i} > x} \diff x \\
        & \leq \frac{\sigma_Y}{\sqrt{M}} \int_0^{\frac{a\sqrt{M}}{\sigma_Y}} \prob{\abs{\mathcal{N}(0, 1)} > x} + \frac{C_{a}}{\sqrt{M}} \diff x \tag{Berry-Esseen} \\
        & \leq \frac{\sigma_Y}{\sqrt{M}} \left(\frac{a C_{a}}{\sigma_Y} + \int_0^{\infty} \prob{\abs{\mathcal{N}(0, 1)} > x} \diff x \right) \\
        & = \frac{\sigma_Y}{\sqrt{M}} \left(\frac{a C_{a}}{\sigma_Y} + \expect{\abs{\mathcal{N}(0, 1)}} \right) \\
        & \leq \frac{C_a}{\sqrt{M}} + \frac{a}{\sqrt{M}} \expect{\abs{\mathcal{N}(0, 1)}} \\
        & = \mathcal{O}(M^{-1/2}),
    \end{align*}
    where the constant $C_{a}$ only depends on $a$ (which controls both the second and the third moment).




    \item \itempara{Proof of result~\ref{thm:asym_inf_negatives:itm:align}: The first term is minimized iff $f$ is perfectly aligned.}

    Note that for $u, v \in \sphere^d$, \begin{equation*}
        \norm{u - v}_2^2 = 2 - 2 \cdot u^T v.
    \end{equation*}

    Then the result follows directly the definition of perfect alignment, and the existence of perfectly aligned encoders (\eg, an encoder that maps every input to the same output vector).

    \item \itempara{Proof of result~\ref{thm:asym_inf_negatives:itm:uniform}: If perfectly uniform encoders exist, they form the exact minimizers of the second term.}

    For simplicity, we define the following notation: \begin{definition}
        $\forall \mu \in \mathcal{M}(\sphere^{d})$, $u \in \sphere^d$, we define the continuous and Borel measurable function  \begin{equation}
            U_\mu(u) \trieq \int_{\sphere^d} e^{u\T v / \tau} \diff \mu(v).
        \end{equation} with its range bounded in $[e^{-1/\tau}, e^{1/\tau}]$.
    \end{definition}

    Then the second term can be equivalently written as \begin{equation*}
        \expectunder[x \sim \pdata]{\log\expectunder[x^- \sim \pdata]{e^{f(x^-)\T f(x) / \tau}}} = \expectunder[x \sim \pdata]{\log U_{\pdata \circ f^{-1}} (f(x))},
    \end{equation*} where $\pdata \circ f^{-1} \in \mathcal{M}(\sphere^d)$ is the probability measure of features, \ie, the pushforward measure of $\pdata$ via $f$.

    We now consider the following relaxed problem, where the minimization is taken over $\mathcal{M}(\sphere^{d})$, all possible Borel probability measures on the hypersphere $\sphere^d$: %
    \begin{equation}
        \min_{\mu \in \mathcal{M}(\sphere^{d})} \int_{\sphere^d} \log U_\mu(u) \diff \mu(u).
        \label{supp:eq:thm:inf-neg-unif-relaxed-measure}
    \end{equation}

    Our strategy is to show that the unique minimizer of \Cref{supp:eq:thm:inf-neg-unif-relaxed-measure} is $\sigma_d$, from which the result~\ref{thm:asym_inf_negatives:itm:uniform} directly follows. The rest of the proof is structured in three parts.

    \begin{enumerate}
        \item
        \itempara{We show that minimizers of \Cref{supp:eq:thm:inf-neg-unif-relaxed-measure} exist, \ie, the above infimum is attained for some $\mu \in \mathcal{M}(\sphere^d)$.}\par\par

        Let $\{\mu_m\}_{m=1}^\infty$ be a sequence in $\mathcal{M}(\sphere^d)$ such that the infimum of \Cref{supp:eq:thm:inf-neg-unif-relaxed-measure} is reached in the limit: \begin{equation*}
            \lim_{m\rightarrow \infty} \int_{\sphere^d} \log U_{\mu_m}(u) \diff {\mu_m}(u) = \inf_{\mu \in \mathcal{M}(\sphere^{d})} \int_{\sphere^d}  \log U_\mu(u) \diff \mu(u).
        \end{equation*}
        From the Helly's Selection Theorem, let $\mu^*$ denote some weak$^*$ cluster point of this sequence. Then $\mu_m$ converges weak$^*$ to $\mu^*$ along a subsequence $m \in \mathcal{N} \in \N$. For simplicity and with a slight abuse of notation, we denote this convergent (sub)sequence of measures by $\{\mu_n\}_{n=1}^{\infty}$.

        We want to show that $\mu^*$ attains the limit (and thus the infimum), \ie,  \begin{equation}
            \int_{\sphere^d} \log U_{\mu^*}(u) \diff \mu^*(u) = \lim_{n\rightarrow \infty} \int_{\sphere^d} \log U_{\mu_n}(u) \diff {\mu_n}(u). \label{supp:eq:thm:converge-new-obj}
        \end{equation}

        In view of \Cref{supp:lemma:int-conv-fn-conv-measure}, since $\sphere^d$ is a compact second countable Hausdorff space and $\{\log U_{\mu_n}\}_n$ is uniformly bounded over $\sphere^d$, it remains to prove that $\{\log U_{\mu_n}\}_n$ is continuously convergent to $\log U_{\mu^*}$.

        Consider any convergent sequence of points $\{x_n\}_{n=1}^\infty \in \R^{d+1}$ \st $x_n \rightarrow x$ where $x \in \sphere^d$.

        Let $\delta_n = x_n - x$. By simply expanding $U_{\mu_n}$ and $\mu_{\mu^*}$, we have
        \begin{equation*}
            e^{-\norm{\delta_n} / \tau} U_{\mu_n}(x) \leq U_{\mu_n}(x_n) \leq e^{\norm{\delta_n} / \tau} U_{\mu_n}(x).
        \end{equation*}
        Since both the upper and the lower bound converge to $U_{\mu^*}(x)$  (by the weak $^*$ convergence of $\{\mu_n\}_n$ to $\mu^*$), $U_{\mu_n}(x_n)$ must as well. We have proved the continuous convergence of $\{\log U_{\mu_n}\}_n$ to $\log U_{\mu^*}$.

        Therefore, the limit in \Cref{supp:eq:thm:converge-new-obj} holds. The infimum is thus attained at $\mu^*$: \begin{equation*}
            \lim_{n \rightarrow \infty} \int_u \log U_{\mu_n}(u) \diff {\mu_n} = \int_u \log U_{\mu^*}(u) \diff \mu^*.
        \end{equation*}

        \item
        \itempara{We show that $U_{\mu^*}$ is constant $\mu^*$-almost surely for any minimizer $\mu^*$ of \Cref{supp:eq:thm:inf-neg-unif-relaxed-measure}.}

        Let $\mu^*$ be any solution of \Cref{supp:eq:thm:inf-neg-unif-relaxed-measure}: \begin{equation*}
            \mu^* \in \argmin_{\mu \in \mathcal{M}(\sphere^{d})} \int_u  \log U_\mu(u) \diff \mu.
        \end{equation*}

        Consider the Borel sets where $\mu^*$ has positive measure: $\mathcal{T} \trieq \{T \in \mathcal{B}(\sphere^d) \colon \mu^*(T) > 0\}$. For any $T \in \mathcal{T}$, let $\mu^*_T$ denote the conditional distribution of $\mu^*$ on $T$, \ie, $\forall A \in \mathcal{B}(\sphere^d)$, \begin{equation*}
            \mu^*_T(A) = \frac{\mu^*(A \cap T)}{\mu^*(T)}.
        \end{equation*}

        Note that for any such $T \in \mathcal{T}$, the mixture $(1 - \alpha)\mu^* + \alpha \mu^*_T$ is a valid probability distribution (\ie, in $\mathcal{M}(\sphere^d)$) for $\alpha \in (-\mu^*(T), 1)$, an open interval containing $0$.

        By the first variation, we must have%
         \begin{align}
            0
            & = \frac{\partial}{\partial \alpha}
            \int_{\sphere^d} \log U_{(1-\alpha) \mu^* + \alpha \mu^*_T}(u) \diff ((1-\alpha) \mu^* + \alpha \mu^*_T)(u)
            \evalat{\alpha=0} \notag\\
            & =
            \frac{\partial}{\partial \alpha} (1-\alpha)
            \int_{\sphere^d} \log U_{(1-\alpha) \mu^* + \alpha \mu^*_T}(u) \diff \mu^*(u)
            \evalat{\alpha=0}
            + \frac{\partial}{\partial \alpha} \alpha
            \int_{\sphere^d} \log U_{(1-\alpha) \mu^* + \alpha \mu^*_T}(u) \diff \mu^*_T(u)
            \evalat{\alpha=0} \notag\\
            & =
            - \int_{\sphere^d} \log U_{(1-\alpha) \mu^* + \alpha \mu^*_T}(u) \diff \mu^*(u)
            \evalat{\alpha=0}
            +
            \frac{\partial}{\partial \alpha}
            \int_{\sphere^d} \log U_{(1-\alpha) \mu^* + \alpha \mu^*_T}(u) \diff \mu^*(u)
            \evalat{\alpha=0} \notag\\
            & \qquad\qquad +
            \int_{\sphere^d} \log U_{(1-\alpha) \mu^* + \alpha \mu^*_T}(u) \diff \mu^*_T(u)
            \evalat{\alpha=0}
            +
            0 \cdot \frac{\partial}{\partial \alpha}
            \int_{\sphere^d} \log U_{(1-\alpha) \mu^* + \alpha \mu^*_T}(u) \diff \mu^*_T(u)
            \evalat{\alpha=0} \notag\\
            & =
            - \int_{\sphere^d} \log U_{\mu^*}(u) \diff \mu^*(u)
            +
            \int_{\sphere^d} \frac{U_{\mu^*_T}(u) - U_{\mu^*}(u)}{U_{\mu^* }(u)} \diff \mu^*(u) \notag\\
            & \qquad\qquad +
            \int_{\sphere^d} \log U_{\mu^*}(u) \diff \mu^*_T(u)
            +
            0 \cdot
            \int_{\sphere^d} \frac{U_{\mu^*_T}(u) - U_{\mu^*}(u)}{U_{\mu^* }(u)} \diff \mu^*_T(u) \notag\\
            & =
            \int_{\sphere^d} \frac{U_{\mu^*_T}(u)}{U_{\mu^* }(u)} \diff \mu^*(u)
            +
            \int_{\sphere^d} \log U_{\mu^*}(u) \diff (\mu^*_T - \mu^*)(u) - 1 \label{supp:thm:asym-inf-neg:eq:alpha-first-variation},
        \end{align}where the Leibniz rule along with the boundedness of $U_{\mu^*}$ and $U_{\mu^*_{T_n}}$ together justify the exchanges of integration and differentiation.

        Let $\{T_n\}_{n=1}^\infty$ be a sequence of sets in $\mathcal{T}$ such that \begin{equation*}
            \lim_{n \rightarrow \infty} \int_{\sphere^d} U_{\mu^*}(u) \diff \mu_{T_n}^*(u) = \sup_{T \in \mathcal{T}} \int_{\sphere^d} U_{\mu^*}(u) \diff \mu_{T}^*(u) \trieq U^*,
        \end{equation*} where the supremum must exist since $U_{\mu^*}$ is bounded above.

        Because $U_{\mu^*}$ is a continuous and Borel measurable function, we have $\{u \colon U_{\mu^*}(u) > U^*\} \in \mathcal{B}(\sphere^d)$ and thus \begin{align*}
            \mu^*(\{u \colon U_{\mu^*}(u) > U^*\}) & = 0, \\
            \mu^*_{T_n}(\{u \colon U_{\mu^*}(u) > U^*\}) & = 0, & \forall n = 1, 2, \dots,
        \end{align*}
        otherwise $\{u \colon U_{\mu^*}(u) > U^*\} \in \mathcal{T}$, contradicting the definition of $U^*$ as the supremum.

        Asymptotically, $U_{\mu^*}$ is constant $\mu^*_{T_n}$-almost surely: \begin{align*}
            & \int_{\sphere^d} \abs{ U_{\mu^*}(u) - \int_{\sphere^d} U_{\mu^*}(u') \diff \mu_{T_n}^*(u')} \diff \mu_{T_n}^*(u) \\
            & \qquad\qquad = 2 \int_{\sphere^d} \max \left(0,~U_{\mu^*}(u) - \int_{\sphere^d} U_{\mu^*}(u') \diff \mu_{T_n}^*(u') \right) \diff \mu_{T_n}^*(u) \\
            & \qquad\qquad \leq 2 (U^* - \int_{\sphere^d} U_{\mu^*}(u) \diff \mu_{T_n}^*(u)) \\
            & \qquad\qquad \rightarrow 0, & \text{as $n \rightarrow \infty$,}
        \end{align*}
        where the inequality follows the boundedness of $U_{\mu^*}$ and that $\mu^*_{T_n}(\{u \colon U_{\mu^*}(u) > U^*\}) = 0$.

        Therefore, given the continuity of $\log$ and the boundedness of $U_{\mu^*}$, we have \begin{equation*}
            \lim_{n\rightarrow \infty} \int_{\sphere^d} \log U_{\mu^*}(u) \diff \mu_{T_n}^*  = \log U^*.
        \end{equation*}


        \Cref{supp:thm:asym-inf-neg:eq:alpha-first-variation} gives that $\forall n = 1, 2, \dots$, \begin{align*}
            1
            & = \int_{\sphere^d} \frac{U_{\mu_{T_n}^*}(u)}{U_{\mu^* }(u)} \diff \mu^*
            +
            \int_{\sphere^d} \log U_{\mu^*}(u) \diff (\mu_{T_n}^* - \mu^*) \notag\\
            & \geq \frac{1}{U^*} \int_{\sphere^d} U_{\mu_{T_n}^*}(u) \diff \mu^*(u)
            +
            \int_{\sphere^d} \log U_{\mu^*}(u) \diff \mu_{T_n}^*
            -
            \int_{\sphere^d} \log U_{\mu^*}(u) \diff \mu^* \\
            & = \frac{1}{U^*} \int_{\sphere^d} U_{\mu^*}(u) \diff \mu_{T_n}^*(u)
            +
            \int_{\sphere^d} \log U_{\mu^*}(u) \diff \mu_{T_n}^*
            -
            \int_{\sphere^d} \log U_{\mu^*}(u) \diff \mu^*,
        \end{align*}
        where the inequality follows the boundedness of $\frac{U_{\mu_{T_n}^*}}{U_{\mu^*}}$ and that $\mu^*(\{u \colon U_{\mu^*}(u) > U^*\}) = 0$.

        Taking the limit of $n \rightarrow \infty$ on both sides, we have \begin{align*}
            1 = \lim_{n \rightarrow \infty} 1
            & \geq
            \frac{1} {U^*} \lim_{n \rightarrow \infty} \int_{\sphere^d} U_{\mu^*}(u) \diff \mu_{T_n}^*(u)
            +
            \lim_{n\rightarrow \infty} \int_{\sphere^d} \log U_{\mu^*}(u) \diff \mu_{T_n}^*(u)
            -
            \int_{\sphere^d} \log U_{\mu^*}(u) \diff \mu^*(u) \\
            & = 1 + \log U^* - \int_{\sphere^d} \log U_{\mu^*}(u) \diff \mu^*(u) \\
            & \geq 1 + \log U^* - \log \int_{\sphere^d} U_{\mu^*}(u) \diff \mu^*(u) \\
            & \geq 1,
        \end{align*}where the last inequality holds because the supremum taken over $\mathcal{T} \supset \{\sphere^d\}$.

        Since $1=1$, all inequalities must be equalities. In particular, \begin{equation*}
            \int_{\sphere^d} \log U_{\mu^*}(u) \diff \mu^*(u) =  \log \int_{\sphere^d} U_{\mu^*}(u) \diff \mu^*(u).
        \end{equation*}
        That is, for any solution $\mu^*$ of \Cref{supp:eq:thm:inf-neg-unif-relaxed-measure}, $U_{\mu^*}$ must be constant $\mu^*$-almost surely.

        \item \itempara{We show that $\sigma_d$ is the unique minimizer of the relaxed problem in \Cref{supp:eq:thm:inf-neg-unif-relaxed-measure}.}

        Let $S \subset \mathcal{M}(\sphere^d)$ be the set of measures where the above property holds: \begin{equation*}
            S \trieq \left\{\mu \in \mathcal{M}(\sphere^{d}) \colon U_{\mu} \text{ is constant $\mu$-almost surely} \right\}.
        \end{equation*}

        The problem in \Cref{supp:eq:thm:inf-neg-unif-relaxed-measure} is thus equivalent to minimizing over $S$: \begin{align*}
            \argmin_{\mu \in \mathcal{M}(\sphere^{d})} \int_{\sphere^d}  \log U_\mu(u) \diff \mu(u)
            & = \argmin_{\mu \in S} \int_{\sphere^d}  \log U_\mu(u) \diff \mu(u) \\
            & = \argmin_{\mu \in S} \log \int_{\sphere^d} U_\mu(u) \diff \mu(u) \\
            & = \argmin_{\mu \in S} \log \int_{\sphere^d} \int_{\sphere^d} e^{u\T v / \tau} \diff \mu(v) \diff \mu(u) \\
            & = \argmin_{\mu \in S} \left( \frac{1}{\tau} + \log  \int_{\sphere^d} \int_{\sphere^d} e^{-\frac{1}{2 \tau} \norm{u - v}^2} \diff \mu(v) \diff \mu(u) \right) \\
            & = \argmin_{\mu \in S} \int_{\sphere^d} \int_{\sphere^d} G_{\frac{1}{2\tau}}(u, v) \diff \mu(v) \diff \mu(u).
        \end{align*}

        By \Cref{prop:continuous-problem} and $\tau > 0$, we know that the uniform distribution $\sigma_d$ is the unique solution to \begin{equation}
            \argmin_{\mu \in \mathcal{M}(\sphere^d)} \int_{\sphere^d} \int_{\sphere^d} G_{\frac{1}{2\tau}}(u, v) \diff \mu(v) \diff \mu(u). \label{supp:thm:asymp-inf-neg:eq:unif-equiv-gaussian-prob}
        \end{equation}

        Since $\sigma_d \in S$, it must also be the unique solution to \Cref{supp:eq:thm:inf-neg-unif-relaxed-measure}.
    \end{enumerate}

    Finally, if perfectly uniform encoders exist, $\sigma_d$ is realizable, and they are the exact encoders that realize it. Hence, in such cases, they are the exact minimizers of \begin{equation*}
        \min_f \expectunder[x \sim \pdata]{\log\expectunder[x^- \sim \pdata]{e^{f(x^-)\T f(x) / \tau}}}.
    \end{equation*}
\end{itemize}
\end{proof}

\paragraph{Relation between \Cref{thm:asym_inf_negatives}, $\lalign$ and $\lunif$.} The first term of \Cref{eq:contrastive_loss_limit} is equivalent with $\lalign$ when $\alpha = 2$, up to a constant and a scaling. In the above proof, we showed that the second term favors uniformity, via the feature distribution that minimizes the pairwise Gaussian kernel (see \Cref{supp:thm:asymp-inf-neg:eq:unif-equiv-gaussian-prob}): \begin{equation}
    \argmin_{\mu \in \mathcal{M}(\sphere^d)} \int_{\sphere^d} \int_{\sphere^d} G_{\frac{1}{2\tau}}(u, v) \diff \mu(v) \diff \mu(u), \label{supp:eq:asym-inf-neg-unif-term-measure-relax}
\end{equation} which can be alternatively viewed as the relaxed problem of optimizing for the uniformity loss $\lunif$: \begin{equation}
    \argmin_f \lunif(f; \frac{1}{2\tau}) = \argmin_f \expect[x, y \iidsim \distndata] {G_{\frac{1}{2\tau}}(f(x), f(y))}. \label{supp:eq:asym-inf-neg-unif-term-realizable-nonrelax}
\end{equation} The relaxation comes from the observation that \Cref{supp:eq:asym-inf-neg-unif-term-measure-relax} minimizes over all feature distributions on $\sphere^d$, while \Cref{supp:eq:asym-inf-neg-unif-term-realizable-nonrelax} only considers the realizable ones.

\paragraph{Relation between \Cref{supp:eq:thm:inf-neg-unif-relaxed-measure} and minimizing average pairwise Gaussian potential (\ie, minimizing $\lunif$).} In view of the \Cref{prop:continuous-problem} and the proof of \Cref{thm:asym_inf_negatives}, we know that the uniform distribution $\sigma_d$ is the unique minimizer of both of the following problems: \begin{align*}
    \{\sigma_d\} & = \min_{\mu \in \mathcal{M}(\sphere^d)}  \log \int_{\sphere^d} \int_{\sphere^d} e^{u\T v /\tau} \diff \mu(v) \diff \mu(u), \\
    \{\sigma_d\} & = \min_{\mu \in \mathcal{M}(\sphere^d)} \int_{\sphere^d} \log  \int_{\sphere^d} e^{u\T v /\tau} \diff \mu(v) \diff \mu(u).
\end{align*}
So pushing the $\log$ inside the outer integral doesn't change the solution. However, if we push the $\log$ all the way inside the inner integral, the problem becomes equivalent with minimizing the norm of the mean, \ie, \begin{equation*}
    \min_{\mu \in \mathcal{M}(\sphere^d)} \expect[U \sim \mu]{U}\T\expect[U \sim \mu]{U},
\end{equation*}
which is minimized for any distribution with mean being the all-zeros vector $0$, \eg, $\frac{1}{2} \delta_{u} + \frac{1}{2} \delta_{-u}$ for any $u \in \sphere^d$ (where $\delta_u$ is the Dirac delta distribution at $u$ \st $\delta_u(S) = \mathbbm{1}_{S}(u)$, $\forall S \in \mathcal{B}(\sphere^d)$). Therefore, the location of the $\log$ is important.

\begin{theorem}[Single negative sample] \label{thm:asym_single_negative}
    If perfectly aligned and uniform encoders exist, they form the exact minimizers of the contrastive loss $\lcontr(f; \tau, M)$ for fixed $\tau > 0$ and $M = 1$.
\end{theorem}
\begin{proof}[Proof of \Cref{thm:asym_single_negative}]
    Since $M=1$, we have \begin{align}
        \lcontr(f; \tau, 1)
        & = \expectunder[\substack{
            (x, y) \sim \ppos \\
            x^- \sim \pdata
            }]{-\frac{1}{\tau} f(x)\T f(y) + \log \left( e^{f(x)\T f(y) / \tau}+  e^{f(x^-)\T f(x) / \tau} \right)} \notag \\
        & \geq \expectunder[\substack{
            x \sim \pdata \\
            x^- \sim \pdata
            }]{-\frac{1}{\tau} + \log \left( e^{1 / \tau}+  e^{f(x^-)\T f(x) / \tau} \right)}\label{supp:thm:single-neg:eq:align-relax} \\
        & \geq -\frac{1}{\tau} + \min_{\mu \in \mathcal{M}(\sphere^d)} \int_{\sphere^d} \int_{\sphere^d} \log \left( e^{1 / \tau}+  e^{u\T v / \tau} \right) \diff \mu(u) \diff \mu(v) \label{supp:thm:single-neg:eq:unif-relax} \\
        & = -\frac{1}{\tau} + \min_{\mu \in \mathcal{M}(\sphere^d)} \int_{\sphere^d} \int_{\sphere^d} \log \left( e^{1 / \tau}+  e^{(2 - \norm{u - v}_2^2) / (2 \tau)} \right) \diff \mu(u) \diff \mu(v). \notag
    \end{align}
    By the definition of perfect alignment, the equality in \Cref{supp:thm:single-neg:eq:align-relax} is satisfied iff $f$ is perfectly aligned.

    Consider the function $f \colon (0, 4] \rightarrow \R_+$ defined as \begin{equation*}
        f(t) = \log(e^{\frac{1}{\tau}} + e^{\frac{2 - t}{2\tau}}).
    \end{equation*} It has the following properties: \begin{itemize}
        \item $-f'(t) = \frac{1}{2\tau} \frac{ e^{-\frac{t}{2\tau}} }{1 + e^{-\frac{t}{2\tau}}} = \frac{1}{2\tau} (1 - (1 + e^{-\frac{t}{2\tau}})^{-1})$ is strictly completely monotone on $(0, +\infty)$:

        $\forall t \in (0,  +\infty)$, \begin{align*}
            \frac{1}{2\tau} (1 - (1 + e^{-\frac{t}{2\tau}})^{-1}) & > 0 \\
            (-1)^n \frac{\diff^n }{\diff t^n} \frac{1}{2\tau} (1 - (1 + e^{-\frac{t}{2\tau}})^{-1}) & = \frac{n!}{(2\tau)^{n+1}} (1 + e^{-\frac{t}{2\tau}})^{-(n+1)} > 0, \qquad\qquad n = 1, 2, \dots. \\
        \end{align*}
        \item $f$ is bounded on $(0, 4]$.
    \end{itemize}
    In view of \Cref{supp:lemma:strict-pd-uniform}, we have that the equality in \Cref{supp:thm:single-neg:eq:unif-relax} is satisfied iff the feature distribution induced by $f$ (\ie, the pushforward measure $\pdata \circ f^{-1}$) is $\sigma_d$, that is, in other words, $f$ is perfectly uniform.

    Therefore, \begin{equation*}
        \lcontr(f; \tau, 1) \geq -\frac{1}{\tau} + \int_{\sphere^d} \int_{\sphere^d} \log \left( e^{1 / \tau}+  e^{u\T v / \tau} \right) \diff \sigma_d(u) \diff \sigma_d(v) = \text{constant independent of $f$},
    \end{equation*}where equality is satisfied iff $f$ is perfectly aligned and uniform. This concludes the proof.
\end{proof}

\paragraph{Difference between conditions of Theorems~\ref{thm:asym_inf_negatives}~and~\ref{thm:asym_single_negative}.}We remark that the statement in \Cref{thm:asym_single_negative} is weaker than the previous \Cref{thm:asym_inf_negatives}. \Cref{thm:asym_single_negative} is conditioned on the existence perfectly aligned and uniform encoders. It only shows that $\lcontr(f; \tau, M=1)$  favors alignment under the condition that perfect uniformity is realizable, and vice versa. In \Cref{thm:asym_inf_negatives}, $\lcontr$ decomposes into two terms, each favoring alignment and uniformity. Therefore, the decomposition in \Cref{thm:asym_inf_negatives} is exempof t from this constraint.

\section{Experiment Details}\label{supp:sec:expr-details}
All experiments are performed on 1-4 NVIDIA Titan Xp, Titan X PASCAL, Titan RTX, or 2080 Ti GPUs.

\subsection{\cifar, \stl and \nyudepth Experiments}
For \cifar, \stl and \nyudepth experiments, we use the following settings, unless otherwise stated in Tables~\ref{supp:tbl:stl10-big}~and~\ref{supp:tbl:nyudepth-big} below: \begin{itemize}
    \item Standard data augmentation procedures are used for generating positive pairs, including resizing, cropping, horizontal flipping, color jittering, and random grayscale conversion. This follows prior empirical work in contrastive representation learning \citep{wu2018unsupervised,tian2019contrastive,hjelm2018learning,bachman2019learning}.
    \item Neural network architectures follow the corresponding experiments on these datasets in \citet{tian2019contrastive}. For \nyudepth evaluation, the architecture of the depth prediction CNN is described in \Cref{supp:tbl:nyudepth-cnn-depth-predictor-arch}.
    \item We use minibatch stochastic gradient descent (SGD) with $0.9$ momentum and $0.0001$ weight decay.
    \item We use linearly scaled learning rate ($0.12$ per $256$ batch size) \citep{goyal2017accurate}. \begin{itemize}
        \item \cifar and \stl: Optimization is done over $200$ epochs, with learning rate decayed by a factor of $0.1$ at epochs $155$, $170$, and $185$.
        \item \nyudepth: Optimization is done over $400$ epochs, with learning rate decayed by a factor of $0.1$ at epochs $310$, $340$, and $370$.
    \end{itemize}
    \item Encoders are optimized over the training split. For evaluation, we freeze the encoder, and train classifiers / depth predictors on the training set samples, and test on the validation split. \begin{itemize}
        \item \cifar and \stl: We use standard train-val split. Linear classifiers are trained with Adam \citep{kingma2014adam} over $100$ epochs, with $\beta_1=0.5, \beta_2 = 0.999, \epsilon=10^{-8}$, $128$ batch size, and an initial learning rate of $0.001$, decayed by a factor of $0.2$ at epochs $60$ and $80$.
        \item \nyudepth: We use the train-val split on the $1449$ labeled images from \citet{Silberman:ECCV12}. Depth predictors are trained with Adam \citep{kingma2014adam} over $120$ epochs, with $\beta_1=0.5, \beta_2 = 0.999, \epsilon=10^{-8}$, $128$ batch size, and an initial learning rate of $0.003$, decayed by a factor of $0.2$ at epochs $70$, $90$, $100$, and $110$.
    \end{itemize}
\end{itemize}

\begin{table*}[t!]%

\newcommand{\mround}[1]{\round{#1}{4}}
\renewcommand{\arraystretch}{1.2}
\newcommand{\NA}{---}
\centering
\small
\begin{tabular}{|c|c|c|c|c|c|c|c|}
    \hline
    \multirow{2}{*}{Operator} &
    \multirow{2}{*}{\shortstack{Input\\Spatial Shape}} &
    \multirow{2}{*}{\shortstack{Input\\\#Channel}} &
    \multirow{2}{*}{\shortstack{Kernel\\Size}} &
    \multirow{2}{*}{Stride} &
    \multirow{2}{*}{Padding} &
    \multirow{2}{*}{\shortstack{Output\\Spatial Shape}} &
    \multirow{2}{*}{\shortstack{Output\\\#Channel}} \\

    & & & & & & & \\
    \hline\hline

    Input &
    $[h_\mathsf{in}, w_\mathsf{in}]$ &
    $c_\mathsf{in}$ &
    \NA &
    \NA &
    \NA &
    $[h_\mathsf{in}, w_\mathsf{in}]$ &
    $c_\mathsf{in}$ \\
    \hline

    Conv.~Transpose + BN + ReLU &
    $[h_\mathsf{in}, w_\mathsf{in}]$ &
    $c_\mathsf{in}$ &
    3 &
    2 &
    1 &
    $[2h_\mathsf{in}, 2w_\mathsf{in}]$ &
    $\floor{c_\mathsf{in} / 2}$ \\
    \hline

    Conv.~Transpose + BN + ReLU &
    $[2h_\mathsf{in}, 2w_\mathsf{in}]$ &
    $\floor{c_\mathsf{in} / 2}$ &
    3 &
    2 &
    1 &
    $[4h_\mathsf{in}, 4w_\mathsf{in}]$ &
    $\floor{c_\mathsf{in} / 4}$ \\
    \hline

    $\vdots$ &
    $\vdots$ &
    $\vdots$ &
    $\vdots$ &
    $\vdots$ &
    $\vdots$ &
    $\vdots$ &
    $\vdots$ \\
    \hline

    Conv.~Transpose + BN + ReLU &
    $[h_\mathsf{out} / 2, w_\mathsf{out} / 2]$ &
    $\floor{c_\mathsf{in} / 2^{n-1}}$ &
    3 &
    2 &
    1 &
    $[h_\mathsf{out}, w_\mathsf{out}]$ &
    $\floor{c_\mathsf{in} / 2^n}$ \\
    \hline

    Conv. &
    $[h_\mathsf{out}, w_\mathsf{out}]$ &
    $\floor{c_\mathsf{in} / 2^n}$ &
    3 &
    1 &
    1 &
    $[h_\mathsf{out}, w_\mathsf{out}]$ &
    $1$ \\
    \hline


\end{tabular}
\caption{\nyudepth CNN depth predictor architecture. Each \textrm{Conv.\hspace{1.5pt}Transpose+BN+ReLU} block increases the spatial shape by a factor of $2$, where BN denotes Batch Normalization \citep{ioffe2015batch}. A sequence of such blocks computes a tensor of the correct spatial shape, from an input containing intermediate activations of a CNN encoder (which downsamples the input RGB image by a power of $2$). A final convolution at the end computes the single-channel depth prediction. } \label{supp:tbl:nyudepth-cnn-depth-predictor-arch}

\end{table*}

At each SGD iteration, a minibatch of $K$ positive pairs is sampled $\{(x_i, y_i)\}_{i=1}^K$, and the three losses for this minibatch are calculated as following: \begin{itemize}
    \item $\lcontr$: For each $x_i$, the sample contrastive loss is taken with the positive being $y_i$, and the negatives being $\{y_j\}_{j \neq i}$. For each $y_i$, the sample loss is computed similarly. The minibatch loss is calculated by aggregating these $2K$ terms: \begin{equation*}
        \frac{1}{2K} \sum_{i = 1}^{K}
        \log \frac{e^{f(x_i)\T f(y_i) / \tau}}{\sum_{j = 1}^{K} e^{f(x_i)\T f(y_j) / \tau}} +
        \frac{1}{2K} \sum_{i = 1}^{K} \log \frac{e^{f(x_i)\T f(y_i) / \tau}}{\sum_{j = 1}^{K} e^{f(x_j)\T f(y_i) / \tau}}.
    \end{equation*} This calculation follows empirical practices and is similar to \citet{oord2018representation,henaff2019data}, and \textit{end-to-end} in \citet{he2019momentum}.
    \item $\lalign$: The minibatch alignment loss is straightforwardly computed as \begin{equation*}
        \frac{1}{K} \sum_{i=1}^{K} \norm{f(x_i) - f(y_i)}_2^\alpha.
    \end{equation*}
    \item $\lunif$: The minibatch uniform loss is calculated by considering each pair of $\{x_i\}_i$ and $\{y_i\}_i$: \begin{equation*}
        \frac{1}{2} \log \bigg( \frac{2}{K (K - 1)} \sum_{i \neq j} e^{-t \norm{f(x_i) - f(x_j)}_2^2} \bigg) + \frac{1}{2} \log \bigg( \frac{2}{K (K - 1)} \sum_{i \neq j} e^{-t \norm{f(y_i) - f(y_j)}_2^2} \bigg).
    \end{equation*}
\end{itemize}

Tables~\ref{supp:tbl:stl10-big}~and~\ref{supp:tbl:nyudepth-big} below describe the full specifications of all $304$ \stl and $64$ \nyudepth encoders. These experiment results are visualized in main paper \Cref{fig:expr_scatter}, showing a clear connection between representation quality and $\lalign$ \& $\lunif$ metrics.

\begin{table*}[t!]%

\resizebox{
  \ifdim\width>\textwidth
    \textwidth
  \else
    \width
  \fi
}{!}{%
\centering
\small
\renewcommand{\arraystretch}{1.2}
\begin{tabular}{|c|c|c|c|c|c|c|c|c|c|}
    \hline
    \multicolumn{10}{|c|}{\multirow{2}{*}{\normalsize\imagenetsubset Classes}} \\
    \multicolumn{10}{|c|}{} \\\hline\hline
    \texttt{n02869837} & \texttt{n01749939} & \texttt{n02488291} & \texttt{n02107142} & \texttt{n13037406} & \texttt{n02091831} & \texttt{n04517823} & \texttt{n04589890} & \texttt{n03062245} & \texttt{n01773797} \\ \hline
    \texttt{n01735189} & \texttt{n07831146} & \texttt{n07753275} & \texttt{n03085013} & \texttt{n04485082} & \texttt{n02105505} & \texttt{n01983481} & \texttt{n02788148} & \texttt{n03530642} & \texttt{n04435653} \\ \hline
    \texttt{n02086910} & \texttt{n02859443} & \texttt{n13040303} & \texttt{n03594734} & \texttt{n02085620} & \texttt{n02099849} & \texttt{n01558993} & \texttt{n04493381} & \texttt{n02109047} & \texttt{n04111531} \\ \hline
    \texttt{n02877765} & \texttt{n04429376} & \texttt{n02009229} & \texttt{n01978455} & \texttt{n02106550} & \texttt{n01820546} & \texttt{n01692333} & \texttt{n07714571} & \texttt{n02974003} & \texttt{n02114855} \\ \hline
    \texttt{n03785016} & \texttt{n03764736} & \texttt{n03775546} & \texttt{n02087046} & \texttt{n07836838} & \texttt{n04099969} & \texttt{n04592741} & \texttt{n03891251} & \texttt{n02701002} & \texttt{n03379051} \\ \hline
    \texttt{n02259212} & \texttt{n07715103} & \texttt{n03947888} & \texttt{n04026417} & \texttt{n02326432} & \texttt{n03637318} & \texttt{n01980166} & \texttt{n02113799} & \texttt{n02086240} & \texttt{n03903868} \\ \hline
    \texttt{n02483362} & \texttt{n04127249} & \texttt{n02089973} & \texttt{n03017168} & \texttt{n02093428} & \texttt{n02804414} & \texttt{n02396427} & \texttt{n04418357} & \texttt{n02172182} & \texttt{n01729322} \\ \hline
    \texttt{n02113978} & \texttt{n03787032} & \texttt{n02089867} & \texttt{n02119022} & \texttt{n03777754} & \texttt{n04238763} & \texttt{n02231487} & \texttt{n03032252} & \texttt{n02138441} & \texttt{n02104029} \\ \hline
    \texttt{n03837869} & \texttt{n03494278} & \texttt{n04136333} & \texttt{n03794056} & \texttt{n03492542} & \texttt{n02018207} & \texttt{n04067472} & \texttt{n03930630} & \texttt{n03584829} & \texttt{n02123045} \\ \hline
    \texttt{n04229816} & \texttt{n02100583} & \texttt{n03642806} & \texttt{n04336792} & \texttt{n03259280} & \texttt{n02116738} & \texttt{n02108089} & \texttt{n03424325} & \texttt{n01855672} & \texttt{n02090622} \\ \hline
\end{tabular}%
}
\caption{$100$ randomly selected \imagenet classes forming the \imagenetsubset subset. These classes are the same as the ones used by \citet{tian2019contrastive}.} \label{supp:tbl:imagenet100-classes}

\end{table*}

\subsection{\imagenet and \imagenetsubset with Momentum Contrast (MoCo) Variants}

\paragraph{MoCo and MoCo v2 with $\lalign$ and $\lunif$. } At each SGD iteration, let \begin{itemize}
    \item $K$ be the minibatch size,
    \item $\{f(x_i)_i\}_{i=1}^K$ be the batched query features encoded by the current up-to-date encoder $f$ (\ie, $\mathtt{q}$ in Algorithm~1 of \citet{he2019momentum}),
    \item $\{f_\textsf{EMA}(y_i)\}_{i=1}^K$ be the batched key features encoded by the exponential moving average encoder $f_\textsf{EMA}$ (\ie, $\mathtt{k}$ in Algorithm~1 of \citet{he2019momentum}),
    \item $\{\mathtt{queue}_j\}_{j=1}^N$ be the feature queue, where $N$ is the queue size.
\end{itemize} $\lalign$ and $\lunif$ for this minibatch are calculated as following: \begin{itemize}
    \item $\lalign$: The minibatch alignment loss is computed as disparity between features from the two encoders: \begin{equation*}
        \frac{1}{K} \sum_{i=1}^{K} \norm{f(x_i) - f_\textsf{EMA}(y_i)}_2^\alpha.
    \end{equation*}
    \item $\lunif$: We experiment with two forms of $\lunif$: \begin{enumerate}
        \item Only computing pairwise distance between $\{f(x_i)\}_i$ and $\{\mathtt{queue}_j\}_j$:
        \begin{equation}
            \log \bigg( \frac{1}{NK} \sum_{i=1}^{K} \sum_{j=1}^{N} e^{-t \norm{f(x_i) - \mathtt{queue}_j}_2^2} \bigg). \label{supp:eq:moco:lunif-queue-only}
        \end{equation}
        \item Also computing pairwise distance inside $\{f(x_i)\}_i$:
        \begin{equation}
            \log \bigg( \frac{2}{2 NK + K (K-1)} \sum_{i=1}^{K} \sum_{j=1}^{N} e^{-t \norm{f(x_i) - \mathtt{queue}_j}_2^2} + \frac{2}{2 NK + K (K-1)} \sum_{i \neq j} e^{-t \norm{f(x_i) - f(x_j)}_2^2} \bigg). \label{supp:eq:moco:lunif-queue-and-intra-batch}
        \end{equation}
    \end{enumerate}
\end{itemize}

\subsubsection{\imagenetsubset with MoCo} \label{supp:sec:imagenet-100-moco}

\paragraph{\imagenetsubset details.} We use the same \imagenetsubset sampled by \citet{tian2019contrastive}, containing the $100$ randomly selected classes listed in \Cref{supp:tbl:imagenet100-classes}.%

\paragraph{MoCo settings.}
Our MoCo experiment settings below mostly follow \citet{he2019momentum} and the unofficial implementation by \citet{tian2019cmcgithub}, because the official implementation was not released at the time of performing these analyses: \begin{itemize}
    \item Standard data augmentation procedures are used for generating positive pairs, including resizing, cropping, horizontal flipping, color jittering, and random grayscale conversion, following \citet{tian2019cmcgithub}.
    \item Encoder architecture is ResNet50 \citep{he2016deep}.
    \item We use minibatch stochastic gradient descent (SGD) with $128$ batch size, $0.03$ initial learning rate, $0.9$ momentum and $0.0001$ weight decay.
    \item Optimization is done over $240$ epochs, with learning rate decayed by a factor of $0.1$ at epochs $120$, $160$, and $200$.
    \item We use $0.999$ exponential moving average factor, following \citet{he2019momentum}.
    \item For evaluation, we freeze the encoder, and train a linear classifier on the training set samples, and test on the validation split.  Linear classifiers are trained with minibatch SGD over $60$ epochs, with $256$ batch size, and an initial learning rate of $10$, decayed by a factor of $0.2$ at epochs $30$, $40$, and $50$.
\end{itemize}

\Cref{supp:tbl:imagenet100-big} below describes the full specifications of all $45$ \imagenetsubset encoders. These experiment results are visualized in main paper \Cref{fig:expr_scatter_imagenet100}, showing a clear connection between representation quality and $\lalign$ \& $\lunif$ metrics.

\subsubsection{\imagenet with MoCo v2}

\paragraph{MoCo v2 settings.}
Our MoCo v2 experiment settings directly follow \citet{chen2020improved} and the official implementation \citep{chen2020mocov2github}: \begin{itemize}
    \item Standard data augmentation procedures are used for generating positive pairs, including resizing, cropping, horizontal flipping, color jittering, random grayscale conversion, and random Gaussian blurring, following \citet{chen2020mocov2github}.
    \item Encoder architecture is ResNet50 \citep{he2016deep}.
    \item We use minibatch stochastic gradient descent (SGD) with $256$ batch size, $0.03$ initial learning rate, $0.9$ momentum and $0.0001$ weight decay.
    \item Optimization is done over $200$ epochs, with learning rate decayed by a factor of $0.1$ at epochs $120$ and $160$.
    \item We use $0.999$ exponential moving average factor, $65536$ queue size, $128$ feature dimensions.
    \item For evaluation, we freeze the encoder, and train a linear classifier on the training set samples, and test on the validation split.  Linear classifiers are trained with minibatch SGD over $100$ epochs, with $256$ batch size, and an initial learning rate of $30$, decayed by a factor of $0.1$ at epochs $60$ and $80$.
\end{itemize}

Unlike the MoCo experiments on \imagenetsubset, which were based on unofficial implementations for reasons stated in Sec.~\ref{supp:sec:imagenet-100-moco}, the MoCo v2 experiments on full \imagenet were based on the official implementation by \citet{chen2020mocov2github}. We provide a reference implementation that can fully reproduce the results in \Cref{tbl:expr_imagenet} at \href{https://github.com/SsnL/moco_align_uniform}{\texttt{https://github.com/SsnL/moco\_align\_uniform}}, where we also provide a model checkpoint (trained using $\lalign$ and $\lunif$) of $67.694\%$ validation \textrm{top1} accuracy.

\subsection{\bookcorpus with Quick-Thought Vectors Variants}
\paragraph{\bookcorpus details.} Since the original \bookcorpus dataset \citep{moviebook} is not distributed anymore, we use the unofficial code by \citet{sosuke2019bookcorpusgithub} to recreate our copy. Our copy ended up containing $52{,}799{,}513$ training sentences and $50{,}000$ validation sentences, compared to the original copy used by Quick-Thought Vectors \citep{logeswaran2018efficient}, which contains $45{,}786{,}400$ training sentences and $50{,}000$ validation sentences.

\paragraph{Quick-Thought Vectors with $\lalign$ and $\lunif$. } With Quick-Thought Vectors, the positive pairs are the neighboring sentences. At each optimization iteration, let \begin{itemize}
    \item $\{x_i\}_{i=1}^K$ be the $K$ \emph{consecutive} sentences forming this minibatch, where $K$ be the minibatch size,
    \item $f$ and $g$ be the two RNN sentence encoders.
\end{itemize} The original Quick-Thought Vectors \citep{logeswaran2018efficient} does not $l2$-normalize on encoder outputs during training the encoder. Here we describe the calculation of $\lcontr$, $\lalign$, and $\lunif$ for $l2$-normalized encoders, in our modified Quick-Thought Vectors method. Note that this does not affect evaluation since features are $l2$-normalized before using in downstream tasks, following the original Quick-Thought Vectors \citep{logeswaran2018efficient}. For a minibatch, these losses are calculated as following: \begin{itemize}
    \item $\lcontr$ with temperature:
    \begin{align*}
        & \frac{1}{K}~\mathtt{cross\_entropy}(\mathtt{softmax}(\{f(x_1)\T g(x_j)\}_j), \{0, 1, 0, \dots, 0\}) \\
        & \qquad + \frac{1}{K} \sum_{i=2}^{K-1} \mathtt{cross\_entropy}(\mathtt{softmax}(\{f(x_i)\T g(x_j)\}_j), \{\underbrace{0, \dots, 0}_{\text{$(i-2)$ $0$'s}}, \frac{1}{2}, 0, \frac{1}{2}, \underbrace{0, \dots, 0}_{\text{$(K-i-1)$ $0$'s}}\}) + \\
        & \qquad +\frac{1}{K}~\mathtt{cross\_entropy}(\mathtt{softmax}(\{f(x_K)\T g(x_j)\}_j), \{0, \dots, 1, 0\}).
    \end{align*}
    This is almost identical with the original contrastive loss used by Quick-Thought Vectors, except that this does not additionally manually masks out the entries $f(x_i)\T g(x_i)$ with zeros, which is unnecessary with $l2$-normalization.
    \item $\lalign$: The minibatch alignment loss is computed as disparity between features from the two encoders encoding neighboring sentences (assuming $K >= 2$): \begin{equation*}
        \frac{1}{K} \norm{f(x_1) - g(x_2)}_2^\alpha +
        \frac{1}{2K} \sum_{i=2}^{K-2} \left( \norm{f(x_{i-1}) - g(x_i)}_2^\alpha + \norm{f(x_{i}) - g(x_{i+1})}_2^\alpha \right) +
        \frac{1}{K} \norm{f(x_{K-1}) - g(x_K)}_2^\alpha.
    \end{equation*}
    \item $\lunif$: We combine the uniformity losses for each of $f$ and $g$ by summing them (instead of averaging since $f$ and $g$ are two different encoders): \begin{equation*}
        \frac{2}{K(K-1)} \sum_{i \neq j} e^{-t \norm{f(x_i) - f(x_j)}_2^2} + \frac{2}{K(K-1)} \sum_{i \neq j} e^{-t \norm{g(x_i) - g(x_j)}_2^2}.
    \end{equation*}
\end{itemize}

Our experiment settings below mostly follow the official implementation by \citet{logeswaran2018efficient}: \begin{itemize}
    \item Sentence encoder architecture is bi-directional Gated Recurrent Unit (GRU) \citep{cho-etal-2014-learning} with inputs from a $620$-dimensional word embedding trained jointly from scratch.
    \item We use Adam \citep{kingma2014adam} with $\beta_1=0.9, \beta_2=0.999, \epsilon=10^{-8}$, $400$ batch size, $0.0005$ constant learning rate, and $0.5$ gradient norm clipping.
    \item Optimization is done during $1$ epoch over the training data.
    \item For evaluation on a binary classification task, we freeze the encoder, and fit a logistic classifier with $l2$ regularization on the encoder outputs. A $10$-fold cross validation is performed to determine the regularization strength among $\{1, 2^{-1}, \dots, 2^{-8}\}$, following \citet{kiros2015skip} and \citet{logeswaran2018efficient}. The classifier is finally tested on the validation split.
\end{itemize}

\Cref{supp:tbl:bookcorpus-big} below describes the full specifications of all $108$ \bookcorpus encoders along with $6$ settings that lead to training instability (\ie, $\mathtt{NaN}$ occurring). These experiment results are visualized in main paper \Cref{fig:expr_scatter_bookcorpus}, showing a clear connection between representation quality and $\lalign$ \& $\lunif$ metrics. For the unnormalized encoders, the features are normalized before calculated $\lalign$ and $\lunif$ metrics, since they are nonetheless still normalized before being used in downstream tasks \citep{logeswaran2018efficient}.

\newpage

{
\newcommand{\scalem}[1]{\scalebox{0.62}{{\normalsize #1}}}
\newcommand{\scalemb}[1]{\scalebox{0.75}{{\normalsize #1}}}
\fontsize{6.5}{8}\selectfont
\centering
\newcommand{\mround}[1]{\round{#1}{4}}
\newcommand{\mroundprec}[1]{\round{#1}{2}\%}%
\renewcommand{\arraystretch}{1.2}%
\newcommand{\NA}{---}%
\renewcommand{\tabcolsep}{1.5pt}%
\setlength\LTleft{-1.32em}%

}

\clearpage
\newpage
{\small
\bibliography{reference}
\bibliographystyle{icml2020}
}